\documentclass[final,5p,times,twocolumn]{elsarticle}
\usepackage[T1]{fontenc}
\usepackage{microtype}
\usepackage{amsmath,amssymb,amsthm,mathtools,bm,mathrsfs}
\usepackage{booktabs,array,tabularx,multirow,makecell}
\usepackage[ruled,vlined,linesnumbered]{algorithm2e}
\usepackage{graphicx}
\usepackage{capt-of}
\usepackage{enumitem}
\usepackage{xcolor}
\usepackage{hyperref}
\usepackage{etoolbox}
\usepackage{placeins}
\usepackage{dblfloatfix}
\usepackage{cuted}
\usepackage{float}

\biboptions{sort&compress}

\hypersetup{
  colorlinks=true,
  linkcolor=red,   
  citecolor=blue,  
  urlcolor=blue,   
  filecolor=red,   
  pdftitle={RCNNM Model, Recovery Guarantees, Optimization, and Experiments},
  pdfauthor={RCNNM theory manuscript}
}

\AtBeginDocument{%
  \hypersetup{
    colorlinks=true,
    linkcolor=red,   
    citecolor=blue,  
    urlcolor=blue,   
    filecolor=red
  }%
}

\setlist{nosep,leftmargin=1.4em}
\SetAlgoCaptionSeparator{.}
\SetAlgoNlRelativeSize{-1}
\SetAlCapFnt{\small}
\SetAlCapNameFnt{\small\bfseries}
\SetKwInput{KwInput}{Input}
\SetKwInput{KwOutput}{Output}
\DontPrintSemicolon
\allowdisplaybreaks[2]
\newcommand{\ExpGraphic}[3]{%
  \IfFileExists{#1}{%
    \includegraphics[width=#2,height=#3,keepaspectratio]{#1}%
  }{%
    \fcolorbox{black!35}{black!3}{%
      \begin{minipage}[c][#3][c]{\dimexpr#2-2\fboxsep-2\fboxrule\relax}
      \centering\footnotesize
      Figure placeholder\\[2pt]
      \texttt{\detokenize{#1}}
      \end{minipage}}%
  }%
}
\newcommand{\ExpGraphicFit}[3]{%
  \IfFileExists{#1}{%
    \includegraphics[width=#2,height=#3,keepaspectratio]{#1}%
  }{%
    \fcolorbox{black!35}{black!3}{%
      \begin{minipage}[c][#3][c]{\dimexpr#2-2\fboxsep-2\fboxrule\relax}
      \centering\footnotesize
      Figure placeholder\\[2pt]
      \texttt{\detokenize{#1}}
      \end{minipage}}%
  }%
}

\newcommand{\ExpGraphicWide}[2]{%
  \IfFileExists{#1}{%
    \includegraphics[width=#2]{#1}%
  }{%
    \fcolorbox{black!35}{black!3}{%
      \begin{minipage}[c][0.24\textheight][c]{\dimexpr#2-2\fboxsep-2\fboxrule\relax}
      \centering\footnotesize
      Figure placeholder\\[2pt]
      \texttt{\detokenize{#1}}
      \end{minipage}}%
  }%
}

\newtheorem{theorem}{Theorem}[section]
\newtheorem{lemma}[theorem]{Lemma}
\newtheorem{proposition}[theorem]{Proposition}
\newtheorem{corollary}[theorem]{Corollary}

\theoremstyle{definition}

\newtheorem{remark}[theorem]{Remark}

\newcommand{\A}{\mathscr A}
\newcommand{\B}{\mathscr B}
\newcommand{\Hh}{\mathbb X}
\newcommand{\V}{\mathbb V}
\newcommand{\Tsp}{\mathbb T}
\newcommand{\Iset}{\mathbb I}
\newcommand{\Kset}{\mathbb K}
\newcommand{\Oset}{\Omega}
\newcommand{\Gset}{\Gamma}
\newcommand{\Lset}{\Lambda}
\newcommand{\Bset}{\mathbb B}
\newcommand{\Cset}{\mathfrak C}
\newcommand{\Sset}{\mathbb S}
\newcommand{\Jtau}{\mathbb J_\tau}
\newcommand{\Dop}{\mathscr D}
\newcommand{\Iop}{\mathscr I}
\newcommand{\KT}{\mathscr N_{\mathbb T}}
\newcommand{\RR}{\mathbb R}
\newcommand{\supp}{\operatorname{supp}}
\newcommand{\range}{\operatorname{range}}
\newcommand{\rank}{\operatorname{rank}}
\newcommand{\vecop}{\operatorname{vec}}
\newcommand{\diag}{\operatorname{diag}}
\newcommand{\Tr}{\operatorname{Tr}}
\newcommand{\SVT}{\operatorname{SVT}}
\newcommand{\Ber}{\operatorname{Ber}}
\newcommand{\sgn}{\operatorname{sgn}}
\newcommand{\st}{\mathrm{s.t.}}

\newcommand{\Max}{\max}
\newcommand{\argmin}{\operatorname*{arg\,min}}
\newcommand{\Min}{\min}

\newcommand{\Ker}{\ker}
\newcommand{\Dim}{\dim}
\newcommand{\degR}{\operatorname{deg}_{R}}
\newcommand{\degC}{\operatorname{deg}_{C}}
\newcommand{\Inf}{\inf}
\newcommand{\Log}{\log}
\newcommand{\Exp}{\exp}
\newcommand{\Bin}{\operatorname{Bin}}

\newcommand{\op}{\mathrm{op}}

\newcommand{\eps}{\varepsilon}
\newcommand{\norm}[1]{\left\lVert #1\right\rVert}
\newcommand{\abs}[1]{\left|#1\right|}
\newcommand{\ip}[2]{\left\langle #1,#2\right\rangle}
\newcommand{\trans}{\mathsf T}
\newcommand{\1}{\mathbf 1}
\newcommand{\E}{\mathbb E}
\newcommand{\Pp}{\mathbb P}

\journal{Knowledge-Based Systems}

\begin{document}

\begin{frontmatter}

\title{Robust Tensor Completion via Reflective Convolution Nuclear Norm Minimization}

\author[aff1]{Weiguo Zhou}
\ead{2776863767@qq.com}
\author[aff1]{Feng Zhang\corref{cor1}}
\ead{zfmath@swu.edu.cn}
\author[aff1]{Wenjin Qin}
\ead{qinwenjin2021@163.com}
\author[aff2]{Jianwen Huang}
\ead{hjw1303987297@126.com}
\cortext[cor1]{Corresponding author.}

\affiliation[aff1]{organization={School of Mathematics and Statistics, Southwest University},
                   city={Chongqing},
                   postcode={400715},
                   country={China}}

\affiliation[aff2]{organization={College of Artificial Intelligence and Advanced Interdisciplinary Sciences, Chongqing Normal University},
                   city={Chongqing},
                   postcode={401331},
                   country={China}}


\begin{abstract}
  Robust tensor completion recovers multidimensional data from partial observations corrupted by sparse gross errors. Existing methods typically promote low rank through tensor nuclear norms, whereas convolutional lifting enforces low rank on a matrix of translated data copies, capturing translation-invariant structure beyond conventional tensor ranks. Constructing these copies requires a boundary condition, and convolutional models conventionally adopt circular continuation for this purpose, in which every shift is a permutation. For a finite nonperiodic tensor, however, circular continuation connects opposite sides of the tensor, creating wrap-around neighborhoods that do not exist in the data. We propose reflective convolution nuclear norm minimization (RCNNM), which replaces circular shifts with endpoint-nonrepeating reflection for robust tensor completion. Reflection breaks the permutation structure of circular lifting, so entries near the boundary occur with nonuniform multiplicities and the lifting obeys the weighted Gram identity $\A_{\mathrm{ref}}^*\A_{\mathrm{ref}}(\mathcal X)=\mathcal D\odot \mathcal X$, with $\mathcal D$ recording these multiplicities. Using this weighted geometry, we show that RCNNM recovers the underlying tensor and the sparse gross errors exactly with high probability under random sampling and sparse corruption. For bounded dense perturbations, the recovery error remains linear in the perturbation level. The same diagonal structure keeps the optimization efficient by giving the two-block ADMM a closed-form entrywise update, while singular-value thresholding is evaluated through a small eigenproblem on the right Gram matrix instead of a full SVD of the lifted matrix. Across color and multispectral image experiments, RCNNM achieves average boundary-PSNR gains of up to 3.26 dB over its circular-lifting counterpart while remaining competitive globally. Because reflection changes the neighborhoods that encounter the finite boundary, the boundary-dominant gain is consistent with the continuation rule, and boundary-restricted metrics are reported in addition to full-image averages.
\end{abstract}

\begin{keyword}
tensor completion \sep convolutional low-rankness \sep reflective boundary
condition \sep nuclear norm minimization
\end{keyword}

\end{frontmatter}

\section{Introduction}

\label{sec:introduction}
Recovering multidimensional data from incomplete observations is a fundamental problem in signal and image processing. Without additional structure the problem is ill posed, and low rank is a natural prior for data whose intrinsic dimension is much smaller than their ambient dimension. In the matrix case, nuclear norm minimization provides a tractable convex surrogate for rank minimization and admits exact-recovery guarantees under suitable sampling and incoherence conditions \cite{candes2009matrix,candes2010power,recht2011simpler}. Low-rank--sparse matrix recovery, including robust principal component analysis and recovery from corrupted or erased entries, explicitly models sparse gross errors in addition to low rank \cite{candes2011rpca,chandrasekaran2011rank,chen2013errors}. Tensor representations preserve multiway structure across modes without fixing a particular matrix unfolding \cite{kolda2009tensor}. Tensor completion and robust tensor recovery exploit this structure to reconstruct missing and corrupted multidimensional data \cite{gandy2011tensor,liu2013tensor,goldfarb2014robust,zhang2023kbsrandomized,yang2026kbssvpn,jiang2023kbssuperpixel,wu2023kbstr}.

Formally, let $\Hh=\RR^{m_1\times\cdots\times m_d}$ be the space of order-$d$ tensors with index set $\Iset=\prod_{a=1}^d\{1,\ldots,m_a\}$, let $\Oset\subseteq\Iset$ be the set of observed indices, and let $\mathscr P_\Oset$ be the coordinate projection onto $\Oset$. Given the observed tensor $\mathcal M\in\Hh$, low-rank tensor completion seeks a low-rank tensor $\mathcal X\in\Hh$ by
\begin{equation}
\Min_{\mathcal X}\quad\rank(\mathcal X)
\qquad\st\quad \mathscr P_\Oset(\mathcal X-\mathcal M)=0.
\label{eq:intro-ltc}
\end{equation}
Here $\rank(\mathcal X)$ is understood generically, with the particular tensor-rank notion determined by the model. Since such rank functions are nonconvex and their direct minimization is generally intractable, practical methods replace them with tractable surrogates, and the choice of rank notion and surrogate distinguishes the tensor completion models discussed below.

Unlike the matrix case, tensor completion has no unique notion of rank, and low-rank structure on a tensor can be characterized through different tensor decompositions, including Tucker and CP decompositions \cite{kolda2009tensor} and the tensor SVD \cite{kilmer2011factorization}. Each of these decompositions induces a different surrogate for the rank term in \eqref{eq:intro-ltc}, and the completion models built on them all impose low rank directly on the tensor $\mathcal X$. Imposing low rank on the tensor itself is natural for globally low-rank data, but translation-invariant structure, such as a pattern that repeats across the tensor, is captured by low rank only after translated copies are aligned. None of them translates the tensor, so no boundary condition is involved. The boundary condition becomes decisive once the tensor is mapped into a lifted matrix, because translating a finite tensor requires choosing how it continues beyond its boundary. That choice determines the lifting operator, its geometry, and the low-rank structure the model captures. The tensor completion models induced by these decompositions therefore do not address the boundary issues created when low rank is imposed on a matrix of translated copies.

Lifting-based models pursue this alternative, imposing low rank on a larger structured matrix derived from the tensor $\mathcal X$ rather than on $\mathcal X$ itself. The lifted matrix can take a Hankel, Toeplitz, patch, or convolutional form, and the low rank of each encodes a different kind of structure. Annihilation-based Hankel lifting has proven effective for image inpainting \cite{jin2015aloha}. Low-rank models built from overlapping patches likewise come with recovery guarantees \cite{cai2024patch}. The last of these constructions is convolutional lifting, which is the focus of the rest of this paper. It stacks translated copies of the tensor as columns of a matrix \cite{liu2023cnnm}. Let $k_a\ge1$ be the kernel length along mode $a$, and collect the kernel offsets in $\Kset=\prod_{a=1}^d\{0,\ldots,k_a-1\}$, so that $K=|\Kset|=\prod_{a=1}^d k_a$ is the number of shifts. Given a family of translation operators $\{\mathscr T_{\bm{t}}:\bm{t}\in\Kset\}$ mapping $\Hh$ into itself, and an enumeration $\bm{t}^{(1)},\ldots,\bm{t}^{(K)}$ of $\Kset$, the lifted matrix is
\begin{equation}
\A_{\mathscr T}(\mathcal X)=
\left[
\vecop(\mathscr T_{\bm{t}^{(1)}}(\mathcal X)),\ldots,
\vecop(\mathscr T_{\bm{t}^{(K)}}(\mathcal X))
\right]\in\RR^{N\times K},
\label{eq:intro-lift}
\end{equation}
where $\vecop$ stacks the entries of a tensor into a column vector and $N=\prod_{a=1}^d m_a$ is the number of entries of $\mathcal X$. The rank of the lifted matrix in \eqref{eq:intro-lift} defines a convolutional rank for the tensor $\mathcal X$. A small convolutional rank means that many convolution kernels annihilate the tensor, so the columns of the lifted matrix are strongly linearly dependent. This prior is natural whenever local structure repeats across the data. Periodic data with sufficiently sparse spectral structure can yield an exactly low-rank lifted matrix, while locally smooth data can give approximately low-rank lifted matrices with many near-duplicate columns and rows.

In the convolutional setting, convolution nuclear norm minimization (CNNM) builds the lifted matrix from shifted versions of the tensor $\mathcal X$ \cite{liu2023cnnm}. CNNM was proposed for tensor completion with arbitrary sampling, a problem that covers time-series forecasting as a special case. In many applications, missing entries may form coherent structures such as scratches, occlusions, or sensor failures, patterns that are not covered by guarantees derived under uniform random sampling. To handle series with trends and nonseasonal dynamics, CNNM has been extended with a learnable orthonormal transform \cite{liu2022learning}. Convolution principal component pursuit (CPCP) extends this idea to tensor recovery with sparse corruptions \cite{li2026cpcp}. Convolutional low-rankness generalizes Fourier-domain sparsity. For a full kernel, the circular convolution matrix is diagonalized by the discrete Fourier transform, and its nuclear norm is proportional to the $\ell_1$ norm of the Fourier coefficients, with the proportionality determined by the DFT normalization \cite{liu2023cnnm}. Together, these results show that translation-induced low rank captures structure that conventional tensor ranks do not describe directly.

The convolutional lifting used in CNNM and related models is the special case of \eqref{eq:intro-lift} in which the translation operators are circular shifts. Every circular shift permutes the entries of the tensor, so the Gram operator of the corresponding lifting $\A_{\mathrm{circ}}$ acts as a uniform scaling,
\begin{equation}
\A_{\mathrm{circ}}^*\A_{\mathrm{circ}}(\mathcal X)=K\,\mathcal X.
\label{eq:intro-circ-gram}
\end{equation}
Equation \eqref{eq:intro-circ-gram} shows that every entry of $\mathcal X$ appears exactly $K$ times in the lifted matrix. Circular continuation suits periodic tensors, but for a finite nonperiodic tensor it connects opposite sides of the tensor, creating wrap-around neighborhoods that do not exist in the data. Because the continuation rule fixes which entries are treated as adjacent within each translated copy, the low-rank model must fit a periodicity absent from the observed data, which can bias the recovered entries near the boundary. Reflective boundary conditions have long been used in image restoration to avoid such artifacts of periodic continuation \cite{ng1999neumann,nagy2004reflexive,donatelli2007boundary}, since mirror extension preserves the local smoothness of the tensor across the boundary. Reflection has also appeared in tensor models, for instance in cosine-transform-based tensor products \cite{xu2019dct}, DCT-based robust completion within the M-product framework \cite{karmakar2026twctv}, and reflective block-convolution tensor decompositions \cite{molavi2023reflective}. In these models, however, reflection is inherited from the transform or factorization rather than chosen for how it treats the finite boundary. CNNM and its robust CPCP extension both use circular shifts in their convolutional liftings \cite{liu2023cnnm,li2026cpcp}. We instead study how convolutional low-rank lifting and its recovery geometry change when circular shifts are replaced by endpoint-nonrepeating reflection under sparse corruption and missing entries.

Unlike those transform-driven constructions, reflection changes the lifting operator itself and hence its Gram geometry. When the translation operators are instead reflected shifts, which continue the tensor beyond its boundary by endpoint-nonrepeating reflection, the resulting lifting is denoted $\A_{\mathrm{ref}}$. Shifts are then no longer permutations, and the multiplicity of an entry of $\mathcal X$ depends on its distance to the boundary. This departure from the circular case is captured by the weighted Gram identity
\begin{equation}
\A_{\mathrm{ref}}^*\A_{\mathrm{ref}}(\mathcal X)=\mathcal D\odot \mathcal X,
\label{eq:intro-ref-gram}
\end{equation}
where $\mathcal D$ records how many times each entry of $\mathcal X$ occurs in the lifted matrix and $\odot$ denotes the entrywise product. Since the zero offset always contributes a column equal to $\vecop(\mathcal X)$, the reflective lifting is injective and the induced reflective convolution nuclear norm is a genuine norm rather than a seminorm. On this basis, we formulate reflective convolution nuclear norm minimization (RCNNM) for robust tensor completion. The precise optimization model and its special cases are treated in Section~\ref{sec:rcnnm-model}, following the definition of reflective lifting and its induced norm.

The main contributions are as follows.
\begin{itemize}
\item We introduce an arbitrary-order endpoint-nonrepeating reflective convolutional lifting. Its nonuniform multiplicity geometry is captured by the weighted Gram identity \eqref{eq:intro-ref-gram}, which also shows that the induced reflective convolution nuclear norm is a genuine norm.

\item We formulate the RCNNM model and prove deterministic identifiability with strict dual-certificate conditions, high-probability exact recovery under random sampling and sparse corruption, and stability under bounded dense perturbations.

\item We derive a two-block ADMM whose diagonal Gram structure gives a closed-form entrywise update, while singular-value thresholding is evaluated through a small eigenproblem on the right Gram matrix. Experiments on synthetic and real image data show that the improvement over the circular-boundary version is most pronounced near the boundary, where the continuation rule has its direct effect.
\end{itemize}

The remainder of the paper is organized as follows. Section~\ref{sec:preliminaries} introduces the notation and preliminaries. Section~\ref{sec:core-model} defines the reflective lifting and the RCNNM model. Recovery guarantees are developed in Section~\ref{sec:core-det}, and the optimization method is presented in Section~\ref{sec:optimization}. Section~\ref{sec:experiments} reports the numerical results, followed by the conclusion in Section~\ref{sec:conclusion}. 

\section{Notation and Preliminaries}
\label{sec:preliminaries}

This section establishes notation and reviews key matrix-theoretic preliminaries for subsequent analysis. Section~\ref{sec:notation} sets typographical conventions for scalars, vectors, matrices and tensors, and compiles inner-product, norm, and operator notation. Section~\ref{sec:matrix-geometry} covers low-rank matrix geometry required for the analysis of recovery guarantees in Section~\ref{sec:core-det}.

\subsection{Notation}
\label{sec:notation}

Tensors are denoted by calligraphic letters, e.g., $\mathcal X$; matrices by boldface uppercase letters, e.g., $\bm X$; vectors by boldface lowercase letters, e.g., $\bm x$; scalars by lowercase letters, e.g., $x$; and operators by script capitals, e.g., $\mathscr L$. General spaces and sets are denoted by blackboard bold, e.g., $\Hh$, whereas sampling or support sets are denoted by uppercase Greek-letter symbols such as $\Oset$. The order-$d$ tensor space $\Hh=\RR^{m_1\times\cdots\times m_d}$ has index set $\Iset=\prod_{a=1}^d\{1,\ldots,m_a\}$ and dimension $N=\prod_{a=1}^d m_a$. $\mathcal X_{\bm{i}}$ denotes the entry indexed by $\bm{i}=(i_1,\ldots,i_d)\in\Iset$ for $\mathcal X\in\Hh$, and $\vecop(\mathcal X)\in\RR^N$ denotes its vectorization. 

Frobenius, entrywise $\ell_1$, and entrywise $\ell_\infty$ norms are denoted by $\norm{\cdot}_F$, $\norm{\cdot}_1$, and $\norm{\cdot}_\infty$, respectively. $\norm{\bm x}_2$ denotes the Euclidean norm of $\bm x$; $\norm{\bm X}_2$ and $\norm{\bm X}_*$ denote the spectral and nuclear norms of $\bm X$, respectively. $\norm{\mathscr L}_{\op}$ denotes the induced operator norm associated with $\mathscr L$, and $\norm{\mathscr L}_{F\to F}$ denotes its Frobenius-to-Frobenius induced norm, $\mathscr L^*$ its adjoint, $\mathscr L^\dagger$ its Moore--Penrose pseudoinverse. The identity matrix of size $n$ is written $\bm I_n$, with the subscript omitted when the size is clear, and $\1_{\{E\}}$ denotes the indicator of an event $E$. For $\mathcal X,\mathcal Y\in\Hh$, the Euclidean inner product is $\ip{\mathcal X}{\mathcal Y}=\sum_{\bm{i}\in\Iset}\mathcal X_{\bm{i}}\mathcal Y_{\bm{i}}$; and for $\bm X,\bm Y\in\RR^{m\times n}$, the Frobenius inner product is $\ip{\bm X}{\bm Y}=\sum_{i,j}\bm X_{ij}\bm Y_{ij}=\Tr(\bm X^\trans\bm Y)$, where $\bm X^\trans$ denotes the transpose and $\Tr(\cdot)$ denotes the trace.

For an index set $\Oset\subseteq\Iset$, $\mathscr P_\Oset:\Hh\to\Hh$ denotes
the coordinate projection
\[
[\mathscr P_\Oset(\mathcal X)]_{\bm{i}}=
\begin{cases}
\mathcal X_{\bm{i}}, & \bm{i}\in\Oset,\\
0, & \bm{i}\notin\Oset.
\end{cases}
\]
Let $\Oset^c=\Iset\setminus\Oset$ denote its complement. Then $\mathscr P_{\Oset^c}=\Iop-\mathscr P_\Oset$, where $\Iop$ is the identity operator on $\Hh$. For nonzero $\mathcal U,\mathcal V\in\Hh$, $\mathcal U\otimes\mathcal V$ denotes the rank-one linear operator on $\Hh$ defined by $(\mathcal U\otimes\mathcal V)(\mathcal H)=\ip{\mathcal V}{\mathcal H}\mathcal U$ for $\mathcal H\in\Hh$; after vectorization it acts on $\RR^N$ as multiplication by the rank-one matrix $\vecop(\mathcal U)\vecop(\mathcal V)^\trans$, and the symbol $\otimes$ is used only in this operator sense.

Throughout, the kernel lengths are assumed to satisfy $1\le k_a\le m_a$ for every mode $a$, so that the number $K=\prod_a k_a$ of shifts does not exceed the number $N=\prod_a m_a$ of tensor entries; every lifted matrix considered in the paper then has $N$ rows and $K$ columns with $K\le N$.

\subsection{Preliminaries}
\label{sec:matrix-geometry}
Unless otherwise specified, all SVDs are compact. Let $\bm Y=\bm U\bm\Sigma\bm V^\trans$ be the SVD of the rank-$r$ matrix $\bm Y\in\RR^{N\times K}$, so that $\bm U\in\RR^{N\times r}$, $\bm V\in\RR^{K\times r}$, and $\bm\Sigma=\diag(\sigma_1,\ldots,\sigma_r)$, and set $\bm P_U=\bm U\bm U^\trans$ and $\bm P_V=\bm V\bm V^\trans$. The linear space attached to the column and row spaces of $\bm Y$ is
\begin{equation}
\mathbb T_{\bm Y}
=
\left\{
\bm U\bm Q^\trans+\bm P\bm V^\trans:
\bm Q\in\RR^{K\times r},\ \bm P\in\RR^{N\times r}
\right\}.
\label{eq:prelim-tangent}
\end{equation}
The orthogonal projections onto $\mathbb T_{\bm Y}$ and onto its orthogonal complement $\mathbb T_{\bm Y}^\perp$ are
\begin{align}
\mathscr P_{\mathbb T_{\bm Y}}(\bm Z)
&=\bm P_U\bm Z+\bm Z\bm P_V-\bm P_U\bm Z\bm P_V,
\label{eq:prelim-tangent-proj}\\
\mathscr P_{\mathbb T_{\bm Y}^\perp}(\bm Z)
&=(\bm I-\bm P_U)\bm Z(\bm I-\bm P_V).
\label{eq:prelim-tangent-proj-perp}
\end{align}
The set of all sub-gradients of the nuclear norm at a rank-$r$ matrix $\bm Y$ is the subdifferential $\partial\norm{\bm Y}_*$, which has the standard form
\begin{equation}
\partial\norm{\bm Y}_*
=
\left\{
\bm U\bm V^\trans+\bm W:
\mathscr P_{\mathbb T_{\bm Y}}(\bm W)=0,\ \norm{\bm W}_2\le1
\right\}.
\label{eq:prelim-nuclear-subgrad}
\end{equation}

\section{Reflective Convolution Nuclear Norm Minimization}
\label{sec:core-model}

We first construct the reflective lifting underlying RCNNM, then derive its multiplicity-weighted Gram identity and use this structure to formulate the robust tensor completion model with missing observations and sparse corruptions.

\subsection{Reflective Convolution and Lifting}

We first specify the endpoint-nonrepeating reflective boundary extension adopted in RCNNM, which is applied independently along each tensor mode. Given the kernel offsets $\Kset$, the convolution considers indices $j=i_a+t_a$ satisfying $1\le i_a\le m_a$ and $0\le t_a\le k_a-1\le m_a-1$, so that only the range $1\le j\le 2m_a-1$ is relevant. On this interval, we define the one-dimensional reflection map
\[
\phi_a(j)=
\begin{cases}
j, & 1\le j\le m_a,\\
2m_a-j, & m_a<j\le 2m_a-1.
\end{cases}
\]
Within this range, the reflected index sequence reads $1,2,\ldots,m_a-1,m_a,m_a-1,\ldots,2,1$. For a multi-index, the composite reflection map is defined as
\[
\phi(\bm{i}+\bm{t})=
\bigl(\phi_1(i_1+t_1),\ldots,\phi_d(i_d+t_d)\bigr).
\]

Let $\mathcal G=(g_{\bm{t}})_{\bm{t}\in\Kset}\in\RR^{k_1\times\cdots\times k_d}$ be a kernel tensor indexed by the offsets in $\Kset$. We adopt $\circledast$ to denote the convolution operator, which employs the endpoint-nonrepeating reflective boundary rule defined above rather than the circular continuation used in CNNM~\cite{liu2023cnnm}. Specifically, for $\mathcal X\in\Hh$, $\bm{i}\in\Iset$ and kernel tensor $\mathcal G$, we define
\begin{equation}
[\mathcal X\circledast\mathcal G]_{\bm{i}}
=
\sum_{\bm{t}\in\Kset}g_{\bm{t}}\,\mathcal X_{\phi(\bm{i}+\bm{t})}.
\label{eq:core-ref-convolution}
\end{equation}

The one-dimensional case yields an explicit matrix representation. In this one-dimensional setting, we write $\phi=\phi_1$. For
$\bm{x}=[x_1,\ldots,x_m]^\trans\in\RR^m$ and a length-$k$ kernel
$\bm{g}=[g_1,\ldots,g_k]^\trans$, where $g_s$ corresponds to offset $s-1$, we define
the $m\times k$ reflective convolution matrix entrywise by
\begin{equation}
\bigl[\bm{A}_k^{\mathrm{ref}}(\bm{x})\bigr]_{i,s}
=
x_{\phi(i+s-1)},
\qquad
1\le i\le m,\quad 1\le s\le k.
\label{eq:core-ref-conv-matrix-1d}
\end{equation}
Then
\begin{equation}
\bm{x}\circledast\bm{g}=\bm{A}_k^{\mathrm{ref}}(\bm{x})\bm{g}.
\label{eq:core-ref-conv-matrix-action-1d}
\end{equation}
For example, when $m=5$ and $k=3$,
\begin{equation}
\bm{A}_3^{\mathrm{ref}}(\bm{x})=
\begin{bmatrix}
x_1 & x_2 & x_3\\
x_2 & x_3 & x_4\\
x_3 & x_4 & x_5\\
x_4 & x_5 & x_4\\
x_5 & x_4 & x_3
\end{bmatrix}.
\label{eq:core-ref-conv-matrix-example}
\end{equation}
The first three rows correspond to standard consecutive windows, whereas the last two extend beyond the right boundary using the prescribed reflection rule.

The same construction extends directly to tensors. For
$\bm{t}\in\Kset$ and $\bm{i}\in\Iset$, we define the reflected-shift operator
$\mathscr R_{\bm{t}}:\Hh\to\Hh$ by
\begin{equation}
[\mathscr R_{\bm{t}}(\mathcal X)]_{\bm{i}}=\mathcal X_{\phi(\bm{i}+\bm{t})}.
\label{eq:core-Rt}
\end{equation}
We enumerate the offsets as $\bm{t}^{(1)},\ldots,\bm{t}^{(K)}$ following the ordering used to
vectorize the kernel tensor. The reflective convolution lifting
$\A_{\mathrm{ref}}(\mathcal X)\in\RR^{N\times K}$ is then given by
\begin{equation}
\A_{\mathrm{ref}}(\mathcal X)=
\left[
\vecop(\mathscr R_{\bm{t}^{(1)}}(\mathcal X)),\ldots,
\vecop(\mathscr R_{\bm{t}^{(K)}}(\mathcal X))
\right].
\label{eq:core-A}
\end{equation}
By construction,
\begin{equation}
\vecop(\mathcal X\circledast\mathcal G)
=
\A_{\mathrm{ref}}(\mathcal X)\vecop(\mathcal G).
\label{eq:core-convolution-action}
\end{equation}
Thus $\A_{\mathrm{ref}}(\mathcal X)$ is the convolution matrix induced by the reflective
boundary rule. It takes the generic $N\times K$ form in
\eqref{eq:intro-lift}, where translations are generated via reflection. For notational brevity, we write $\A:=\A_{\mathrm{ref}}$ from Section~\ref{sec:rcnnm-model} onward whenever the reflective lifting is meant; $\A_{\mathrm{circ}}$ is retained for the circular counterpart.

Observe that $\rank(\A_{\mathrm{ref}}(\mathcal X))$ quantifies the reflective convolutional low-rankness of $\mathcal X$. If $\rank(\A_{\mathrm{ref}}(\mathcal X))<K$, then there exists a nonzero kernel tensor $\mathcal G$ such that
\[
\A_{\mathrm{ref}}(\mathcal X)\vecop(\mathcal G)=0,
\]
or equivalently $\mathcal X\circledast\mathcal G=0$. Such a kernel is called a finite
reflected-boundary annihilating filter. Since the zero offset belongs to
$\Kset$, one column of $\A_{\mathrm{ref}}(\mathcal X)$ equals $\vecop(\mathcal X)$; hence the lifting map
$\A_{\mathrm{ref}}:\Hh\to\RR^{N\times K}$ is injective.

\paragraph*{Weighted Gram identity}
For circular lifting, every entry appears exactly $K$ times in the lifted matrix and
$\A_{\mathrm{circ}}^*\A_{\mathrm{circ}}=K\Iop$.  Reflection breaks this uniform multiplicity near the boundary.  For $\bm{u}\in\Iset$, let $\mathcal E_{\bm{u}}\in\Hh$ be the canonical
basis tensor:
$(\mathcal E_{\bm{u}})_{\bm{u}}=1$ and $(\mathcal E_{\bm{u}})_{\bm{v}}=0$ for $\bm{v}\ne\bm{u}$.  Define the corresponding
reflection atom and multiplicity by
\[
\bm{A}_{\bm{u}}=\A_{\mathrm{ref}}(\mathcal E_{\bm{u}}),\qquad d_{\bm{u}}=\norm{\bm{A}_{\bm{u}}} _F^2.
\]
Every entry of $\bm{A}_{\bm{u}}$ is zero or one, so $d_{\bm{u}}$ counts the number of lifted
positions that contain the original coordinate $\bm{u}$.  Equivalently,
\begin{equation}
d_{\bm{u}}=
\#\bigl\{
(\bm{i},\bm{t})\in\Iset\times\Kset:\phi(\bm{i}+\bm{t})=\bm{u}
\bigr\}.
\label{eq:core-multiplicity-count}
\end{equation}
Accordingly, $\mathcal D=(d_{\bm{u}})_{\bm{u}\in\Iset}$ is the multiplicity tensor on the original domain.

\begin{proposition}[Weighted Gram identity]
\label{thm:core-gram}
For every $\mathcal X\in\Hh$,
\begin{equation}
\A_{\mathrm{ref}}^*\A_{\mathrm{ref}}(\mathcal X)=\mathcal D\odot \mathcal X.
\label{eq:core-gram}
\end{equation}
Consequently,
\begin{equation}
d_{\min}\norm{\mathcal X}_F^2
\le
\norm{\A_{\mathrm{ref}}(\mathcal X)}_F^2
\le
d_{\max}\norm{\mathcal X}_F^2,
\label{eq:core-weighted}
\end{equation}
where $d_{\min}=\Min_{\bm{u}}d_{\bm{u}}>0$ and $d_{\max}=\Max_{\bm{u}}d_{\bm{u}}$.
Thus, $\A_{\mathrm{ref}}$ is injective and $\norm{\mathcal X}_{\mathrm{RCNN}}:=\norm{\A_{\mathrm{ref}}(\mathcal X)}_*$ is a norm.
\end{proposition}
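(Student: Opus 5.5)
The plan is to expand $\mathcal X$ in the canonical basis and use linearity of $\A_{\mathrm{ref}}$, so that $\A_{\mathrm{ref}}(\mathcal X)=\sum_{\bm u}\mathcal X_{\bm u}\bm A_{\bm u}$. Then for any $\mathcal Y\in\Hh$,
\[
\ip{\A_{\mathrm{ref}}(\mathcal X)}{\A_{\mathrm{ref}}(\mathcal Y)}=\sum_{\bm u,\bm v}\mathcal X_{\bm u}\mathcal Y_{\bm v}\ip{\bm A_{\bm u}}{\bm A_{\bm v}}.
\]
The key observation is that the atoms have pairwise disjoint supports. Entry $(\bm i,\bm t)$ of the lifted matrix equals $\mathcal X_{\phi(\bm i+\bm t)}$, so for a given position exactly one $\bm u$, namely $\bm u=\phi(\bm i+\bm t)$, has $\bm A_{\bm u}$ equal to $1$ there. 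Hence $\ip{\bm A_{\bm u}}{\bm A_{\bm v}}=0$ for $\bm u\ne\bm v$, and since entries are $0/1$, $\ip{\bm A_{\bm u}}{\bm A_{\bm u}}=d_{\bm u}$. This gives
\[
\ip{\A_{\mathrm{ref}}^*\A_{\mathrm{ref}}(\mathcal X)}{\mathcal Y}=\sum_{\bm u}d_{\bm u}\mathcal X_{\bm u}\mathcal Y_{\bm u}=\ip{\mathcal D\odot\mathcal X}{\mathcal Y}
\]
for all $\mathcal Y$, which is \eqref{eq:core-gram}. As a cross-check, one can also compute the adjoint directly as a scatter-add, $[\A_{\mathrm{ref}}^*(\bm Z)]_{\bm u}=\sum_{(\bm i,\bm t):\phi(\bm i+\bm t)=\bm u}\bm Z_{\bm i,\bm t}$, and then use \eqref{eq:core-multiplicity-count}.

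Taking $\mathcal Y=\mathcal X$ gives $\norm{\A_{\mathrm{ref}}(\mathcal X)}_F^2=\sum_{\bm u}d_{\bm u}\mathcal X_{\bm u}^2$, and bounding $d_{\bm u}$ between $d_{\min}$ and $d_{\max}$ yields \eqref{eq:core-weighted}. The only point needing care is $d_{\min}>0$. For this I will use the zero offset $\bm t=\bm 0\in\Kset$: $\phi(\bm u+\bm 0)=\bm u$ because $\phi_a$ is the identity on $\{1,\ldots,m_a\}$. So $(\bm u,\bm 0)$ is counted in \eqref{eq:core-multiplicity-count}, hence $d_{\bm u}\ge1$. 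The hypothesis $k_a\le m_a$ guarantees that $i_a+t_a\le 2m_a-1$, so $\phi$ is well defined on every index used and the count is legitimate. This is the step I expect to be the only real obstacle, and it is mild.

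Injectivity then follows from the lower bound in \eqref{eq:core-weighted}, since $\A_{\mathrm{ref}}(\mathcal X)=0$ forces $\norm{\mathcal X}_F=0$. For the norm claim, $\norm{\A_{\mathrm{ref}}(\cdot)}_*$ is the composition of the nuclear norm with a linear map, so it inherits absolute homogeneity and the triangle inequality. Definiteness holds because $\norm{\A_{\mathrm{ref}}(\mathcal X)}_*=0$ implies $\A_{\mathrm{ref}}(\mathcal X)=0$, and injectivity then gives $\mathcal X=0$.
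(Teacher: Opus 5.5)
Your proposal is correct and follows essentially the same route as the paper. The paper computes $\A_{\mathrm{ref}}^*$ directly as the scatter-add you mention as a cross-check; your disjoint-support (orthogonal atoms) argument is the bilinear-form version of the same fact. The remaining steps ($d_{\bm u}\ge1$ from the zero offset, the two-sided bound, injectivity, and the pulled-back norm) match the paper's.
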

The proof of Proposition~\ref{thm:core-gram} is given in \ref{app:pub-geometry}.
\begin{remark}
\label{rem:gram-supplement}
Let $\A_{\mathrm{ref}}^*\A_{\mathrm{ref}}=\Dop$, where $\Dop$ denotes the pointwise multiplication operator $\mathcal X\mapsto \mathcal D\odot \mathcal X$.
Unlike circular convolutional lifting satisfying $\A_{\mathrm{circ}}^*\A_{\mathrm{circ}}=K\Iop$ with uniform constant weights, reflective boundary extension yields spatially nonuniform multiplicities $d_{\bm u}$, so that $\Dop$ is generally not a scalar multiple of the identity operator.
This diagonal operator structure gives rise to the two-sided norm bound~\eqref{eq:core-weighted} and consequently the injectivity of $\A_{\mathrm{ref}}$.
Furthermore, the entrywise action of $\Dop$ enables the closed-form per-coordinate update in the ADMM solver and avoids a large linear-system solve; see Section~\ref{sec:optimization}.
\end{remark}

\subsection{RCNNM Model}
\label{sec:rcnnm-model}

Let $\Oset\subseteq\Iset$ be the observed index set. We assume that the observed tensor is generated by
\begin{equation}
\mathscr P_\Oset(\mathcal L_0+\mathcal S_0-\mathcal M)=0,
\qquad
\mathscr P_{\Oset^c}(\mathcal S_0)=0,
\label{eq:core-observation-model}
\end{equation}
where the clean component $\mathcal L_0$ has low reflective convolutional rank, i.e.,
$\rank(\A(\mathcal L_0))\ll K$, and $\mathcal S_0$ represents sparse gross errors on the observed entries. The support restriction in \eqref{eq:core-observation-model} is natural because corruption on an unobserved coordinate cannot be distinguished from a change in the unobserved clean tensor.

A direct formulation would penalize the reflective convolutional rank
$\rank(\A(\mathcal L))$ together with the number of nonzero corruption entries:
\begin{equation}
\Min_{\mathcal L,\mathcal S}\quad
\rank(\A(\mathcal L))+\lambda\norm{\mathcal S}_0
\quad
\st\quad
\left\{
\begin{aligned}
\mathscr P_\Oset(\mathcal L+\mathcal S-\mathcal M)&=0,\\
\mathscr P_{\Oset^c}(\mathcal S)&=0.
\end{aligned}
\right.
\label{eq:core-nonconvex-model}
\end{equation}
where $\norm{\mathcal S}_0$ counts the nonzero entries of $\mathcal S$. Both terms in \eqref{eq:core-nonconvex-model} are nonconvex. To obtain a tractable formulation, we replace the rank of the lifted matrix by its nuclear norm, the standard convex surrogate for matrix rank \cite{candes2009matrix,recht2011simpler}, and define the reflective convolution nuclear norm as
\begin{equation}
\norm{\mathcal L}_{\mathrm{RCNN}}
:=
\norm{\A(\mathcal L)}_*.
\label{eq:core-rcnn-norm}
\end{equation}
The sparse $\ell_0$ term is relaxed in the same way by the entrywise $\ell_1$ norm. Proposition~\ref{thm:core-gram} shows that $\norm{\cdot}_{\mathrm{RCNN}}$ is a genuine norm on $\Hh$. The resulting convex model is
\begin{equation}
\Min_{\mathcal L,\mathcal S}\quad
\norm{\A(\mathcal L)}_*+\lambda\norm{\mathcal S}_1
\qquad
\st\quad
\left\{
\begin{aligned}
\mathscr P_\Oset(\mathcal L+\mathcal S-\mathcal M)&=0,\\
\mathscr P_{\Oset^c}(\mathcal S)&=0.
\end{aligned}
\right.
\label{eq:core-rcnnm}
\end{equation}
where $\lambda>0$ balances reflective convolutional low-rankness and sparsity. The first constraint enforces consistency with the observations, whereas the second restricts the sparse component to observed coordinates.

Our model \eqref{eq:core-rcnnm} is equivalent to the following reflective-convolution tensor completion problem when there is no sparse corruption, i.e., $\mathcal S=0$:
\begin{equation}
\Min_{\mathcal L}\quad
\norm{\A(\mathcal L)}_*
\qquad
\st\quad
\mathscr P_\Oset(\mathcal L-\mathcal M)=0.
\label{eq:core-special-completion}
\end{equation}
When all entries are observed, i.e., $\Oset=\Iset$, model~\eqref{eq:core-rcnnm} reduces to a tensor robust principal component analysis (TRPCA)-type formulation:
\begin{equation}
\Min_{\mathcal L,\mathcal S}\quad
\norm{\A(\mathcal L)}_*+\lambda\norm{\mathcal S}_1
\qquad
\st\quad
\mathcal L+\mathcal S-\mathcal M=0.
\label{eq:core-special-trpca}
\end{equation}
It retains the standard low-rank--plus--sparse decomposition structure of TRPCA \cite{lu2020trpca}, with the conventional tensor nuclear norm replaced by the reflective convolution nuclear norm.

To account for small dense noise in addition to sparse gross corruption, we consider the perturbed observation model
\begin{equation}
\begin{aligned}
\mathscr P_\Oset(\mathcal M)
&=
\mathscr P_\Oset(\mathcal L_0+\mathcal S_0+\mathcal N),\\
\mathscr P_{\Oset^c}(\mathcal S_0)&=0,
\qquad
\norm{\mathscr P_\Oset(\mathcal N)}_F\le\eps,
\end{aligned}
\label{eq:core-noisy-observation}
\end{equation}
where $\mathcal N$ denotes dense noise. Following the stable low-rank--sparse recovery formulation \cite{zhou2010stable}, we use the noise-tolerant RCNNM model
\begin{equation}
\Min_{\mathcal L,\mathcal S}\quad
\norm{\A(\mathcal L)}_*+\lambda\norm{\mathcal S}_1
\qquad
\st\quad
\left\{
\begin{aligned}
\norm{\mathscr P_\Oset(\mathcal L+\mathcal S-\mathcal M)}_F&\le\eps,\\
\mathscr P_{\Oset^c}(\mathcal S)&=0.
\end{aligned}
\right.
\label{eq:core-noisy-constraint}
\end{equation}
This formulation relaxes the exact data-consistency constraint in \eqref{eq:core-rcnnm} while retaining the support constraint on the sparse component. When $\eps=0$, \eqref{eq:core-noisy-constraint} reduces to the noiseless RCNNM model \eqref{eq:core-rcnnm}. Section~\ref{sec:core-det} establishes exact recovery for \eqref{eq:core-rcnnm} and stable recovery for \eqref{eq:core-noisy-constraint}.

\section{Recovery Guarantees}
\label{sec:core-det}

We next establish recovery guarantees for the RCNNM model \eqref{eq:core-rcnnm} under the observation models in Section~\ref{sec:rcnnm-model}. We first introduce the geometric quantities induced by reflective lifting, then establish exact recovery under random sampling and random-sign sparse corruption, and finally derive a stability guarantee in the presence of bounded dense measurement noise.

\subsection{Geometric Conditions}

Let
\[
\bm{X}_0=\A(\mathcal L_0)=\bm{U}\bm{\Sigma}\bm{V}^\trans
\]
be the compact SVD of the lifted clean tensor, with
$\bm{U}\in\RR^{N\times r}$ and $\bm{V}\in\RR^{K\times r}$.  Let
$\Tsp=\mathbb T_{\bm{X}_0}$ be the linear space in
\eqref{eq:prelim-tangent}, with orthogonal projections $\mathscr P_{\Tsp}$ and
$\mathscr P_{\Tsp^\perp}$.  Because the optimization variable is the original tensor
rather than the lifted matrix, we also use its preimage under $\A$
\begin{equation}
\V=\{\mathcal H\in\Hh:\A(\mathcal H)\in\Tsp\},
\qquad s=\Dim(\V).
\label{eq:core-V}
\end{equation}
Let $\mathscr P_\V$ and $\mathscr P_{\V^\perp}$ denote the orthogonal
projections onto $\V$ and $\V^\perp$, respectively, and let
$\mathscr I_\V$ denote the identity operator on $\V$.

The coordinate coherence of $\V$ is
\begin{equation}
\mu_\V=
\frac{N}{s}
\Max_{\bm{u}\in\Iset}\norm{\mathscr P_\V (\mathcal E_{\bm{u}})}_F^2.
\label{eq:core-muv}
\end{equation}
The reflective lifting is not a scaled isometry, so a perturbation in
$\V^\perp$ may acquire a component in $\Tsp$ after lifting.  We measure the
remaining component in $\Tsp^\perp$ by
\begin{equation}
\beta_{\Tsp}
=
\Inf_{\substack{\mathcal H\in\V^\perp\\\norm{\mathcal H}_F=1}}
\norm{\mathscr P_{\Tsp^\perp}(\A(\mathcal H))}_F.
\label{eq:core-beta}
\end{equation}
Set
\[
\mathcal B_0=\A^*(\bm{U}\bm{V}^\trans),
\qquad
\KT=\A^*|_{\Tsp^\perp}:\Tsp^\perp\to\V^\perp.
\]
When $\beta_{\Tsp}>0$, $\KT$ is onto $\V^\perp$ and
$\norm{\KT^\dagger}_{F\to F}\le\beta_{\Tsp}^{-1}$.  Finally define
\begin{equation}
\mu_b=
\Max\left\{
1,\,
\frac{NK}{rd_{\max}}\norm{\mathscr P_\V (\mathcal B_0)}_\infty^2
\right\},
\label{eq:core-mub}
\end{equation}
and
\begin{equation}
\chi_{\Tsp}=\frac{\beta_{\Tsp}^2}{d_{\max}},
\qquad
c_{\mathrm{ref}}=
\norm{\KT^\dagger(\mathscr P_{\V^\perp}(\mathcal B_0))}_2.
\label{eq:core-geometry}
\end{equation}
The quantity $\beta_{\Tsp}$ measures the smallest $\Tsp^\perp$ component produced by lifting a unit-Frobenius-norm direction in $\V^\perp$. Thus $\beta_{\Tsp}>0$
rules out nonzero directions in $\V^\perp$ whose lifts lie entirely in $\Tsp$.
Since $\norm{\A(\mathcal H)}_F\le\sqrt{d_{\max}}\norm{\mathcal H}_F$,
$\chi_{\Tsp}=\beta_{\Tsp}^2/d_{\max}\in[0,1]$ is a multiplicity-normalized measure of
this separation; Proposition~\ref{prop:pub-angle} gives the corresponding
angle bound.

The second compatibility condition is measured by $c_{\mathrm{ref}}$.  The
pullback $\mathcal B_0=\A^*(\bm{U}\bm{V}^\trans)$ need not lie in $\V$, so
$\mathscr P_{\V^\perp}(\mathcal B_0)$ must be cancelled by a correction in $\Tsp^\perp$.  The
minimum-Frobenius-norm correction is
$-\KT^\dagger(\mathscr P_{\V^\perp}(\mathcal B_0))$, and $c_{\mathrm{ref}}$ is its spectral norm.
In particular, $c_{\mathrm{ref}}=0$ when $\mathscr P_{\V^\perp}(\mathcal B_0)=0$.

\subsection{Exact Recovery}
\label{sec:core-prob}

Assume that each coordinate is observed independently with probability
$p$ and that an observed entry is corrupted with probability $\gamma$.
Let $\Gset$ be the corruption support,
$\Lset=\Oset\setminus\Gset$ the clean observed set, and
\[
\mathcal S_0^{\mathrm{sgn}}=\sgn(\mathcal S_0),
\]
with $\mathcal S_0^{\mathrm{sgn}}$ extended by zero outside $\Gset$.  With
$q=p(1-\gamma)$,
\begin{align*}
\Pp(\bm{u}\in\Lset)&=q,\\
\Pp(\bm{u}\in\Gset)&=p\gamma,\\
\Pp(\bm{u}\in\Oset^c)&=1-p.
\end{align*}
Conditional on $\Gset$, the corruption signs are independent Rademacher
variables.  The nonzero magnitudes are arbitrary deterministic values that do
not depend on the observation or corruption masks.

\begin{theorem}[Exact recovery]
\label{thm:core-main}
Under the random observation and corruption model described above, assume
$1\le r<K$, $s>0$, and
\[
\mu_b\le\mu_0,\qquad
\chi_{\Tsp}\ge\chi_0,\qquad
c_{\mathrm{ref}}\le c_0,
\]
for fixed constants
$\mu_0\ge1$, $\chi_0\in(0,1]$, and $c_0\in[0,1/2)$.
Let $c_\lambda>0$ be fixed and define
\begin{equation}
\ell_{NK}:=\Log(e(N+K)),
\qquad
\lambda:=c_\lambda\frac{\beta_{\Tsp}}{\sqrt{pN}}.
\label{eq:core-lambda}
\end{equation}
There exist constants
$C_s,C_m,c_r,C_1,C_2>0$ and
$c_\gamma\in(0,1/2]$, depending only on the fixed bounds
$\mu_0,\chi_0,c_0$, on $c_\lambda$, and on universal concentration
constants. If
\begin{equation}
qN\ge C_s\mu_\V s\ell_{NK}^2,
\qquad
qN\ge C_m\ell_{NK}^2,
\label{eq:core-sampling}
\end{equation}
\begin{equation}
r
\le
c_r
\frac{\chi_{\Tsp}q^2K}
{\mu_b p\ell_{NK}^2},
\qquad
\gamma\le c_\gamma,
\label{eq:core-rank}
\end{equation}
then the RCNNM model \eqref{eq:core-rcnnm} exactly recovers $(\mathcal L_0,\mathcal S_0)$ with probability at least
\begin{equation}
1-C_1(N+K)^{-C_2}.
\label{eq:core-probability}
\end{equation}
\end{theorem}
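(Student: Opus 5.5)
The proof follows the standard dual-certificate route for robust matrix completion (Cand\`es et al., Chen et al.), transported to the tensor domain through the lifting $\A$. The first step is a deterministic identifiability lemma. Suppose $\|\mathscr P_\Lset\mathscr P_\V\|$ is bounded away from one, i.e. $\mathscr P_\V\mathscr P_\Lset\mathscr P_\V\succeq \tfrac q2\mathscr I_\V$ on $\V$. Suppose further there is a certificate $\mathcal Y$ with $\supp\mathcal Y\subseteq\Oset$ and a matrix $\bm W\in\Tsp^\perp$ such that $\mathcal Y=\A^*(\bm U\bm V^\trans+\bm W)$, with $\|\bm W\|_2<1$, $\mathscr P_\Gset(\mathcal Y)=\lambda\mathcal S_0^{\mathrm{sgn}}$, and $\|\mathscr P_\Lset(\mathcal Y)\|_\infty<\lambda$. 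Then $(\mathcal L_0,\mathcal S_0)$ is the unique minimizer.

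To prove the lemma, take a feasible perturbation $(\mathcal L_0+\mathcal H,\mathcal S_0-\mathscr P_\Oset\mathcal H)$. Use the subgradients $\bm U\bm V^\trans+\bm W_\perp$, where $\bm W_\perp\in\Tsp^\perp$ is chosen with $\langle \bm W_\perp,\A\mathcal H\rangle=\|\mathscr P_{\Tsp^\perp}\A\mathcal H\|_*$, together with $\lambda\sgn$ on $\Lset$. The objective then increases by at least
\[
(1-\|\bm W\|_2)\|\mathscr P_{\Tsp^\perp}\A\mathcal H\|_*+(\lambda-\|\mathscr P_\Lset\mathcal Y\|_\infty)\|\mathscr P_\Lset\mathcal H\|_1 .
\]
If both terms vanish, then $\A\mathcal H\in\Tsp$, so $\mathcal H\in\V$, and $\mathscr P_\Lset\mathcal H=0$; injectivity of $\mathscr P_\Lset$ on $\V$ then forces $\mathcal H=0$. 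To make the argument robust when the certificate equations hold only approximately, the standard trick applies: decompose $\mathcal H=\mathscr P_\V\mathcal H+\mathscr P_{\V^\perp}\mathcal H$, and control $\mathscr P_{\V^\perp}\mathcal H$ via $\beta_\Tsp$, using $\|\mathscr P_{\Tsp^\perp}\A\mathcal H\|_F\ge\beta_\Tsp\|\mathscr P_{\V^\perp}\mathcal H\|_F$. This is exactly why $\lambda\propto\beta_\Tsp$ in \eqref{eq:core-lambda}.

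Next I would construct the certificate in three parts, $\mathcal Y=\mathcal Y_{\mathrm{ref}}+\mathcal Y_L+\mathcal Y_S$. The correction part $\bm W_{\mathrm{ref}}=-\KT^\dagger(\mathscr P_{\V^\perp}\mathcal B_0)$ makes $\A^*(\bm U\bm V^\trans+\bm W_{\mathrm{ref}})=\mathscr P_\V\mathcal B_0\in\V$; its spectral norm is $c_{\mathrm{ref}}\le c_0<1/2$, which leaves a margin of $1/2-c_0$ for everything else. The remaining task is to realise the target $\mathscr P_\V\mathcal B_0$ (plus the sign-term adjustment) by a tensor supported on $\Lset$. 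I would do this with the golfing scheme: split $\Lset$ into $j_0\asymp\ell_{NK}$ independent batches of rate $q_j$, and iterate $\mathcal Z_j=(\mathscr P_\V-q_j^{-1}\mathscr P_\V\mathscr P_{\Lset_j}\mathscr P_\V)\mathcal Z_{j-1}$, starting from $\mathcal Z_0=\mathscr P_\V\mathcal B_0-\lambda\mathscr P_\V\mathcal S_0^{\mathrm{sgn}}$ (adjusted for the sign part), and set $\mathcal Y_L=\sum_j q_j^{-1}\mathscr P_{\Lset_j}\mathcal Z_{j-1}$. The sign part is $\mathcal Y_S=\lambda\mathcal S_0^{\mathrm{sgn}}$ together with a Neumann-series correction that removes its $\V$-component on $\Lset$. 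The resulting certificate is supported on $\Oset$, it equals $\lambda\,\mathrm{sgn}$ on $\Gset$ up to a correction I must show is small, and its $\V$-part matches. Its $\V^\perp$-part is then absorbed into $\Tsp^\perp$ through $\KT^\dagger$, giving $\bm W$.

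The probabilistic estimates come in three groups, each via matrix Bernstein and scalar Bernstein with a union bound over $N+K$. The first is near-isometry, $\|q^{-1}\mathscr P_\V\mathscr P_{\Lset_j}\mathscr P_\V-\mathscr P_\V\|\le1/2$, which follows from $\mu_\V$ and the first condition in \eqref{eq:core-sampling}. The second is $\ell_\infty$ contraction of the $\mathcal Z_j$, which by $\mu_b$ gives $\|\mathcal Z_j\|_\infty\lesssim 2^{-j}\sqrt{\mu_b r d_{\max}/(NK)}$ and hence $\|\mathscr P_\Lset\mathcal Y\|_\infty<\lambda/2$. The third bounds the spectral norm of the $\Tsp^\perp$ component. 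This requires bounding
\[
\|\KT^\dagger\mathscr P_{\V^\perp}(q^{-1}\mathscr P_{\Lset_j}-\Iop)\mathcal Z\|_2\le \beta_\Tsp^{-1}\|\mathscr P_{\V^\perp}(q^{-1}\mathscr P_{\Lset_j}-\Iop)\mathcal Z\|_F ,
\]
plus an analogous bound for the Rademacher sign term, where $\|\A^*$-related terms$\|\le\sqrt{d_{\max}}$. The sign term is a random Rademacher sum of size $\lambda\sqrt{p\gamma N}$, so after division by $\beta_\Tsp$ it stays below $c_\lambda\sqrt\gamma$, which the condition $\gamma\le c_\gamma$ makes small.

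The main obstacle is this last spectral bound. Passing through $\KT^\dagger$ costs a factor $\beta_\Tsp^{-1}$, and converting Frobenius to spectral is lossy. I would first try to bound the Frobenius norm of the golfing error, roughly $\sqrt{N/q}\,\|\mathcal Z\|_\infty$, and then use $\|\mathcal Z_0\|_\infty^2\le\mu_b r d_{\max}/(NK)$. This yields a spectral norm of about $\sqrt{\mu_b r p\,\ell_{NK}^2/(\chi_\Tsp q^2K)}$, which is precisely what the rank condition \eqref{eq:core-rank} forces below a small constant. Choosing $c_r$ and $c_\gamma$ small then gives $\|\bm W\|_2\le c_0+(1/2-c_0)/2<1$, and combining with the deterministic lemma completes the proof with the stated probability.
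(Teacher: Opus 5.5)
Your architecture is the paper's: the approximate common-dual certificate with a residual in $\V$, absorbed through $\alpha$ and $\beta_{\Tsp}$ (Proposition~\ref{thm:core-approx}); independent Bernoulli batches whose union is $\Lset$; golfing on $\V$; the deterministic correction $-\KT^\dagger(\mathscr P_{\V^\perp}(\mathcal B_0))$ costing $c_{\mathrm{ref}}$; and the bound $\lambda\|\bm W_s\|_2\le\lambda\|\mathcal S_0^{\mathrm{sgn}}-\mathcal Y_s\|_F/\beta_{\Tsp}$ for the sign part.

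You genuinely differ in the golfing leakage into $\Tsp^\perp$. The paper applies rectangular matrix Bernstein to the atoms $\bm N_{\bm u}=\KT^\dagger(\mathscr P_{\V^\perp}(\mathcal E_{\bm u}))$, but its variance proxy $\nu^2\le N/\beta_{\Tsp}^2$ (Lemma~\ref{lem:pub-normal-atom}) is itself a Frobenius bound. Your estimate $\beta_{\Tsp}^{-1}\|(q_0^{-1}\mathscr P_{\Lset_j}-\Iop)(\mathcal Z_{j-1})\|_F\le\beta_{\Tsp}^{-1}\sqrt{3N/q_0}\,\|\mathcal Z_{j-1}\|_\infty$ therefore gives the same scaling. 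It needs only $|\Lset_j|\le 2q_0N$, and you should normalize by the batch rate $q_0$, not $q$. It even saves a $\sqrt{\ell_{NK}}$ and the linear Bernstein term. The Frobenius-to-spectral loss you worry about is already built into \eqref{eq:core-rank}, so that step is fine.

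The gap is in the sign component. As written, you start golfing from $\mathscr P_\V\mathcal B_0-\lambda\mathscr P_\V(\mathcal S_0^{\mathrm{sgn}})$ \emph{and} add a Neumann correction to $\lambda\mathcal S_0^{\mathrm{sgn}}$, which double-counts. The golfing version on its own does not close, for two reasons.
\begin{itemize}
\item $\Gset\subseteq\Lset^c$ depends on every batch, so $\mathcal Z_0$ is not independent of $\Lset_1$. The fresh-batch estimates (Lemmas~\ref{lem:pub-golf-contract} and~\ref{lem:pub-normal-leak}) then lose their conditioning.
\item Your bound $\|\mathcal Z_0\|_\infty\lesssim\sqrt{\mu_b r d_{\max}/(NK)}$ no longer holds. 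The golfing amplification $\|\mathcal Y_L\|_\infty\le 2q_0^{-1}\|\mathcal Z_0\|_\infty$, with $q_0\approx q/J$ and $J\asymp\ell_{NK}$, turns the natural bound $\lambda\|\mathscr P_\V(\mathcal S_0^{\mathrm{sgn}})\|_\infty\lesssim\lambda\sqrt{\theta\ell\,\ell_{NK}}$ into an $\ell_\infty$ term of order $\lambda\sqrt{\gamma\ell_{NK}/C_s}$ at the boundary of \eqref{eq:core-sampling}. For constant $\gamma$ the hypotheses do not make this small.
\end{itemize}

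The fix is the paper's construction. Golf only from the deterministic $\mathscr P_\V(\mathcal B_0)$, and correct the signs in one shot on all of $\Lset$: $\mathcal Y_s=\mathscr P_\Lset\mathscr M_\Lset^{-1}\mathscr P_\V(\mathcal S_0^{\mathrm{sgn}})$ and $\bm W_s=\KT^\dagger(\mathcal S_0^{\mathrm{sgn}}-\mathcal Y_s)$. This costs a factor $1/q$ rather than $J/q$. You then still owe two estimates that are absent from your three groups.
\begin{itemize}
\item A bound on $\|\mathcal Y_s\|_\infty$ over $\Lset$, needed for $\|\mathscr P_\Lset(\mathcal Y)\|_\infty<\lambda$. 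It is obtained conditionally on $\Lset$ from the independent sparse Rademacher variables on $\Lset^c$, using $\theta'/q\le2$ (Lemma~\ref{lem:pub-sparse-inf}).
\item The bound $\|\mathscr P_\V(\mathcal S_0^{\mathrm{sgn}})\|_F^2\lesssim\theta(s+t_s)+\ell t_s^2$ via Rademacher concentration (Lemma~\ref{lem:pub-rad-proj}). The trivial bound $\|\mathscr P_\V(\mathcal S_0^{\mathrm{sgn}})\|_F^2\le|\Gset|\approx\theta N$ only yields $\lambda\|\mathcal Y_s\|_F/\beta_{\Tsp}$ of order $c_\lambda\sqrt{\gamma/p}$, which is not small.
\end{itemize}

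Finally, the certificate must equal $\lambda\mathcal S_0^{\mathrm{sgn}}$ exactly on $\Gset$, not up to a small correction. This is automatic once every correction is supported on $\Lset$.
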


The sampling requirement depends on both the dimension and the coordinate coherence of $\V$. A smaller $\chi_{\Tsp}$ lowers the admissible rank, whereas
$c_{\mathrm{ref}}<1/2$ controls the spectral-norm correction required by the
reflective pullback.  A geometric interpretation of $\chi_{\Tsp}$ is given in \ref{app:pub-angle-sec}, and the bound for $c_{\mathrm{ref}}$ is derived in \ref{app:pub-reference}.  The quantity $\mu_b$ remains instance dependent in our analysis. The scaling
\eqref{eq:core-lambda} is used to state the recovery theorem and depends on
the unknown subspace geometry through $\beta_{\Tsp}$; it is not used as the
practical parameter-selection rule in the image experiments. In practice,
$\lambda$ is selected on held-out validation data as described in
Section~\ref{sec:parameter-sensitivity}.

The proof proceeds through an approximate dual certificate. The golfing construction and sparse-certificate estimates are developed in \ref{app:pub-golf} and \ref{app:pub-sparse}, respectively, while the final proof of Theorem~\ref{thm:core-main} is completed in \ref{app:pub-closure}.

\subsection{Stability under Dense Noise}

Theorem~\ref{thm:core-main} treats the noiseless RCNNM model \eqref{eq:core-rcnnm}. We now consider the perturbed observation model \eqref{eq:core-noisy-observation} and the noise-tolerant RCNNM model \eqref{eq:core-noisy-constraint}. The following result shows that the reconstruction error remains proportional to the dense-noise level.

\begin{corollary}[Stable recovery]
\label{cor:core-stability}
Suppose that all assumptions and recovery conditions in Theorem~\ref{thm:core-main} hold. Let
$(\widehat{\mathcal L},\widehat{\mathcal S})$ be any minimizer of \eqref{eq:core-noisy-constraint}. Then, with probability at least $1-C_1(N+K)^{-C_2}$,
\begin{equation}
\left(
\norm{\widehat{\mathcal L}-\mathcal L_0}_F^2
+
\norm{\widehat{\mathcal S}-\mathcal S_0}_F^2
\right)^{1/2}
\le
C_{\mathrm{stab}}\eps,
\label{eq:core-stability-bound}
\end{equation}
where $C_{\mathrm{stab}}>0$ depends only on the sampling and recovery parameters appearing in Theorem~\ref{thm:core-main}.
\end{corollary}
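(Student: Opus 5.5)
We follow the stable principal component pursuit argument of \cite{zhou2010stable} and reuse the dual certificate from the proof of Theorem~\ref{thm:core-main}. On the event of probability at least $1-C_1(N+K)^{-C_2}$ where that theorem holds, the same construction supplies an (approximate) dual certificate. This is a tensor $\mathcal W$ supported on $\Oset$ such that $\mathcal W=\A^*(\bm U\bm V^\trans+\bm F)+\mathcal R$, with $\bm F\in\Tsp^\perp$, $\norm{\bm F}_2\le c<1$, and $\mathcal R$ small. It also satisfies $\mathscr P_\Gset(\mathcal W)=\lambda\mathcal S_0^{\mathrm{sgn}}+(\text{small})$ and $\norm{\mathscr P_{\Gset^c}(\mathcal W)}_\infty\le c\lambda$. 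In addition, the event gives injectivity-type bounds: $\norm{\mathscr P_\V\mathscr P_\Lset\mathscr P_\V-q\mathscr P_\V}\le q/2$, and $\beta_{\Tsp}>0$ controls $\V^\perp$ through $\Tsp^\perp$. We take all of these as already established, so the corollary becomes a deterministic consequence of them.

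\emph{Step 1 (split the error).} Set $\mathcal H_L=\widehat{\mathcal L}-\mathcal L_0$ and $\mathcal H_S=\widehat{\mathcal S}-\mathcal S_0$. Both $(\widehat{\mathcal L},\widehat{\mathcal S})$ and $(\mathcal L_0,\mathcal S_0)$ are feasible, since $\norm{\mathscr P_\Oset(\mathcal N)}_F\le\eps$. The triangle inequality then gives $\norm{\mathscr P_\Oset(\mathcal H_L+\mathcal H_S)}_F\le2\eps$, and $\mathscr P_{\Oset^c}(\mathcal H_S)=0$. Decompose $\mathcal H_L+\mathcal H_S$ on $\Oset$ as a part $\mathcal Z$ with $\norm{\mathcal Z}_F\le2\eps$. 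Then define the exactly consistent pair $(\mathcal H_L,\mathcal H_S-\mathscr P_\Oset\mathcal Z)$. Equivalently, write $\mathcal H_S=-\mathscr P_\Oset\mathcal H_L+\mathcal Z$ on $\Oset$. The remaining unknown is then controlled by $\mathcal H_L$ together with a term of size $O(\eps)$.

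\emph{Step 2 (cone inequality).} Optimality gives $\norm{\A(\widehat{\mathcal L})}_*+\lambda\norm{\widehat{\mathcal S}}_1\le\norm{\A(\mathcal L_0)}_*+\lambda\norm{\mathcal S_0}_1$. We lower bound the left side by subgradients: $\bm U\bm V^\trans+\bm W_0$ with $\bm W_0\in\Tsp^\perp$ chosen so that $\ip{\bm W_0}{\A\mathcal H_L}=\norm{\mathscr P_{\Tsp^\perp}\A\mathcal H_L}_*$, and $\mathcal S_0^{\mathrm{sgn}}+\sgn(\mathscr P_{\Gset^c}\mathcal H_S)$. Subtracting the certificate identity, and using that $\mathcal W$ is supported on $\Oset$, we obtain
\[
(1-c)\bigl(\norm{\mathscr P_{\Tsp^\perp}\A\mathcal H_L}_*+\lambda\norm{\mathscr P_{\Gset^c}\mathcal H_S}_1\bigr)\le |\ip{\mathcal W}{\mathcal H_L+\mathcal H_S}|+(\text{residual terms})\le \norm{\mathcal W}_F\,2\eps+\ldots
\]
The certificate has $\norm{\mathcal W}_F$ polynomially bounded in $N,K$ and $\lambda$. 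Hence the off-support parts, namely $\mathscr P_{\Tsp^\perp}\A\mathcal H_L$ and $\mathscr P_{\Lset}\mathcal H_S$, are $O(\eps)$ in nuclear and $\ell_1$ norm, and therefore in Frobenius norm.

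\emph{Step 3 (recover the on-support parts).} First, the definition of $\beta_{\Tsp}$ gives $\norm{\mathscr P_{\V^\perp}\mathcal H_L}_F\le\beta_{\Tsp}^{-1}\norm{\mathscr P_{\Tsp^\perp}\A\mathscr P_{\V^\perp}\mathcal H_L}_F$. Because $\A\mathscr P_\V\mathcal H_L\in\Tsp$, we have $\mathscr P_{\Tsp^\perp}\A\mathcal H_L=\mathscr P_{\Tsp^\perp}\A\mathscr P_{\V^\perp}\mathcal H_L$, so $\mathscr P_{\V^\perp}\mathcal H_L=O(\eps)$. Next, on $\Lset$ we have $\mathcal H_S=0$, so $\mathscr P_\Lset\mathcal H_L=\mathscr P_\Lset(\mathcal Z-\mathcal H_S)$ is $O(\eps)$ by Steps 1--2. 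The sampling bound $\norm{\mathscr P_\Lset\mathscr P_\V\mathcal H}_F^2\ge(q/2)\norm{\mathscr P_\V\mathcal H}_F^2$ then gives $\norm{\mathscr P_\V\mathcal H_L}_F\lesssim q^{-1/2}(\norm{\mathscr P_\Lset\mathcal H_L}_F+\norm{\mathscr P_{\V^\perp}\mathcal H_L}_F)=O(\eps)$. Finally, $\mathscr P_\Gset\mathcal H_S=\mathscr P_\Gset(\mathcal Z-\mathcal H_L)$ and $\mathscr P_{\Oset^c}\mathcal H_S=0$, so $\norm{\mathcal H_S}_F=O(\eps)$ as well. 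Collecting the factors ($\beta_{\Tsp}^{-1}$, $q^{-1/2}$, $(1-c)^{-1}$, and the size of $\lambda$ and $\mathcal W$) yields $C_{\mathrm{stab}}$, which depends only on the parameters of Theorem~\ref{thm:core-main}.

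The main obstacle is Step 2. The golfing certificate is only approximate: its components in $\Tsp$, $\V$ and $\Gset$ match the subgradient only up to small residuals. Those residuals pair with $\mathscr P_\V\mathcal H_L$ and $\mathscr P_\Gset\mathcal H_S$, which are not yet known to be small. We would handle this by bounding the residual pairing by $\tfrac14$ times the left-hand side plus a multiple of $\norm{\mathscr P_\Lset\mathcal H_L}_F$, using the same $\mathscr P_\Lset$-injectivity on $\V$ as in the exact proof, and then absorbing that term. The absorption uses the linear bound $\norm{\mathscr P_\Lset \mathcal H_L}_F\le 2\eps+\norm{\mathscr P_\Lset\mathcal H_S}_F$, as in \cite{zhou2010stable}. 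The dimension factors this introduces are acceptable because $C_{\mathrm{stab}}$ may depend on $N,K,p$.
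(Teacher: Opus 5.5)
Your proposal is correct and is essentially the paper's argument. The paper phrases it as a restricted-gain bound $\kappa\ge C_{\mathrm{app}}^{-1}$ over the descent cone: it derives $\delta_2x+\lambda\delta_\infty y\le\norm{\mathcal Y}_F e+\abs{\ip{\mathcal R}{\mathcal H}}$ and bounds $\abs{\ip{\mathcal R}{\mathcal H}}\le(\rho/\alpha)(e+y+x/\beta_{\Tsp})$, exactly as in your ``main obstacle'' paragraph. It absorbs this term using $\rho/(\alpha\lambda),\rho/(\alpha\beta_{\Tsp})\le1/32$, then recovers $\norm{\mathcal H}_F$ through $\alpha\ge\sqrt{q/2}$ and $\beta_{\Tsp}$, and finishes with $\norm{\mathcal J}_F\le e+\norm{\mathcal H}_F$. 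Since $\norm{\mathcal Y}_F\le\lambda\sqrt N$, the constant does carry the $N$-dependence you anticipated. Two small points: in the paper's certificate $\mathscr P_\Gset(\mathcal W)=\lambda\mathcal S_0^{\mathrm{sgn}}$ holds exactly, because all the error sits in $\mathcal R\in\V$, so only the $\V$-residual needs absorbing. Also, on $\Lset$ it is $\mathcal S_0$, not $\mathcal H_S$, that vanishes; your displayed identity $\mathscr P_\Lset\mathcal H_L=\mathscr P_\Lset(\mathcal Z-\mathcal H_S)$, with Step~2 controlling $\mathscr P_\Lset\mathcal H_S$, is the correct one.
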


The proof is given in \ref{app:stable-recovery-bound}. Thus, the recovery error scales linearly with the dense-noise level $\eps$.

\section{Optimization}
\label{sec:optimization}

We next solve the noiseless RCNNM model \eqref{eq:core-rcnnm}. Following the standard variable-splitting strategy used in nuclear-norm tensor recovery \cite{jiang2019rtc,li2026cpcp}, we introduce an auxiliary matrix for the reflective lifting and apply a two-block ADMM. This separation leaves the nonsmooth nuclear and $\ell_1$ terms in simple proximal subproblems, while the reflective Gram identity $\A^*\A=\Dop$ gives a closed-form update for the clean tensor.

\subsection{ADMM Formulation and Variable Updates}

Introducing the auxiliary variable $\bm Z=\A(\mathcal L)$ gives the equivalent constrained problem
\begin{equation}
\Min_{\mathcal L,\mathcal S,\bm Z}\quad
\norm{\bm Z}_*+\lambda\norm{\mathcal S}_1
\qquad
\st\quad
\left\{
\begin{aligned}
\A(\mathcal L)-\bm Z&=0,\\
\mathscr P_\Oset(\mathcal L+\mathcal S-\mathcal M)&=0,\\
\mathscr P_{\Oset^c}(\mathcal S)&=0.
\end{aligned}
\right.
\label{eq:core-split}
\end{equation}
Here $\mathcal L$ is the clean tensor and $\mathcal S$ is the sparse corruption tensor. Let $\bm Y_1\in\RR^{N\times K}$ and $\mathcal Y_2\in\Hh$ be the Lagrange multipliers associated with the first two equality constraints, respectively, and let $\tau_1,\tau_2>0$ be the corresponding penalty parameters. The support constraint $\mathscr P_{\Oset^c}(\mathcal S)=0$ is enforced directly in the $\mathcal S$-subproblem. The augmented Lagrangian of \eqref{eq:core-split} is
\begin{equation}
\begin{aligned}
\mathfrak L={}&
\norm{\bm Z}_*+\lambda\norm{\mathcal S}_1
+\ip{\bm Y_1}{\A(\mathcal L)-\bm Z}
+\frac{\tau_1}{2}\norm{\A(\mathcal L)-\bm Z}_F^2\\
&+
\ip{\mathcal Y_2}{\mathscr P_\Oset(\mathcal L+\mathcal S-\mathcal M)}
+\frac{\tau_2}{2}
\norm{\mathscr P_\Oset(\mathcal L+\mathcal S-\mathcal M)}_F^2.
\end{aligned}
\label{eq:core-augLag}
\end{equation}
At iteration $k$, ADMM alternates between the clean tensor $\mathcal L$ and the joint block $(\bm Z,\mathcal S)$, followed by multiplier and penalty updates.

\paragraph*{Update of $\mathcal L$}
With $(\bm Z^k,\mathcal S^k)$ and the multipliers fixed, the clean tensor is updated by solving
\begin{equation}
\begin{aligned}
\mathcal L^{k+1}
=
\argmin_{\mathcal L}\quad&
\frac{\tau_1^k}{2}
\norm{
\A(\mathcal L)-\bm Z^k+\frac{\bm Y_1^k}{\tau_1^k}
}_F^2\\
&+
\frac{\tau_2^k}{2}
\norm{
\mathscr P_\Oset\!\left(
\mathcal L+\mathcal S^k-\mathcal M
+\frac{\mathcal Y_2^k}{\tau_2^k}
\right)
}_F^2 .
\end{aligned}
\label{eq:core-L-subproblem}
\end{equation}
Using the reflective Gram identity $\A^*\A=\Dop$, the solution is
\begin{equation}
\mathcal L^{k+1}
=
(\tau_1^k\Dop+\tau_2^k\mathscr P_\Oset)^{-1}
(\mathcal Q^k),
\label{eq:core-L-update}
\end{equation}
where
\begin{equation}
\mathcal Q^k=
\A^*(\tau_1^k\bm Z^k-\bm Y_1^k)
+\mathscr P_\Oset\!\left[
\tau_2^k(\mathcal M-\mathcal S^k)-\mathcal Y_2^k
\right].
\label{eq:core-Q-update}
\end{equation}
The inverse operator in \eqref{eq:core-L-update} is an entry-wise tensor division, so the update can be evaluated directly.

\paragraph*{Update of $\mathcal S$}
For fixed $\mathcal L^{k+1}$, completing the square and using
$\mathscr P_{\Oset^c}(\mathcal S)=0$ reduce the sparse-component subproblem to
\begin{equation}
\mathcal S^{k+1}
=
\argmin_{\mathscr P_{\Oset^c}(\mathcal S)=0}\;
\lambda\norm{\mathcal S}_1
+
\frac{\tau_2^k}{2}
\norm{
\mathcal S-
\mathscr P_\Oset\!\left(
\mathcal M-\mathcal L^{k+1}
-\frac{\mathcal Y_2^k}{\tau_2^k}
\right)
}_F^2.
\label{eq:core-S-subproblem}
\end{equation}
Let
\[
\mathscr S_\eta(x)
:=
\sgn(x)\Max\{|x|-\eta,0\}.
\]
denote the entrywise soft-thresholding operator. Hence,
\begin{equation}
\mathcal S^{k+1}
=
\mathscr S_{\lambda/\tau_2^k}
\!\left[
\mathscr P_\Oset\!\left(
\mathcal M-\mathcal L^{k+1}
-\frac{\mathcal Y_2^k}{\tau_2^k}
\right)
\right].
\label{eq:core-S-update}
\end{equation}

\paragraph*{Update of $\bm Z$}
Likewise, completing the square gives the nuclear-norm proximal subproblem
\begin{equation}
\bm Z^{k+1}
=
\argmin_{\bm Z}\;
\norm{\bm Z}_*
+
\frac{\tau_1^k}{2}
\norm{
\bm Z-
\left(
\A(\mathcal L^{k+1})
+\frac{\bm Y_1^k}{\tau_1^k}
\right)
}_F^2.
\label{eq:core-Z-subproblem}
\end{equation}
For a matrix
$\bm H=\bm U\diag(\sigma_1,\ldots,\sigma_r)\bm V^\trans$
with compact SVD, the singular-value-thresholding operator at level
$\tau>0$ is defined by
\begin{equation}
\begin{aligned}
\SVT_\tau(\bm H)
&=
\bm U\,
\diag\!\bigl((\sigma_i-\tau)_+\bigr)_{i=1}^{r}
\,\bm V^\trans,\\
(x)_+&:=\Max\{x,0\}.
\end{aligned}
\label{eq:core-svt-def}
\end{equation}
Thus, SVT applies scalar soft thresholding to the singular values and is
the proximal mapping of the nuclear norm. Therefore,
\begin{equation}
\bm Z^{k+1}
=
\SVT_{1/\tau_1^k}
\left(
\A(\mathcal L^{k+1})
+\frac{\bm Y_1^k}{\tau_1^k}
\right).
\label{eq:core-Z-update}
\end{equation}
Once $\mathcal L^{k+1}$ is fixed, the $\mathcal S$- and $\bm Z$-subproblems are separable and hence form the second ADMM block.

\paragraph*{Updates of $\bm Y_1$, $\mathcal Y_2$, $\tau_1$, and $\tau_2$}
The Lagrange multipliers are updated by
\begin{align}
\bm Y_1^{k+1}
&=
\bm Y_1^k
+\tau_1^k\bigl(
\A(\mathcal L^{k+1})-\bm Z^{k+1}
\bigr),
\label{eq:core-Y1-update}\\
\mathcal Y_2^{k+1}
&=
\mathcal Y_2^k
+\tau_2^k
\mathscr P_\Oset(
\mathcal L^{k+1}+\mathcal S^{k+1}-\mathcal M).
\label{eq:core-Y2-update}
\end{align}
The penalty parameters are updated as
\begin{equation}
\tau_1^{k+1}=\Min\{\varrho\tau_1^k,\tau_{\max}\},
\qquad
\tau_2^{k+1}=\Min\{\varrho\tau_2^k,\tau_{\max}\}.
\label{eq:core-penalty-update}
\end{equation}
where $\varrho>1$ is the penalty factor and $\tau_{\max}<\infty$ is the upper bound.

The iteration is monitored by the normalized primal residuals
\begin{equation}
r_1^k=
\frac{\norm{\A(\mathcal L^k)-\bm Z^k}_F}
{\norm{\A(\mathcal L^k)}_F},
\qquad
r_2^k=
\frac{\norm{\mathscr P_\Oset(\mathcal L^k+\mathcal S^k-\mathcal M)}_F}
{1+\norm{\mathcal M}_F}.
\label{eq:stopping-residuals}
\end{equation}
For completeness, we set $r_1^k=+\infty$ whenever
$\norm{\A(\mathcal L^k)}_F=0$. The iteration terminates when
$\Max\{r_1^k,r_2^k\}\le\varepsilon_{\mathrm{tol}}$ or the prescribed maximum number of iterations is reached.

\subsection{Gram-Based SVT and Algorithm}

The definition in \eqref{eq:core-svt-def} evaluates SVT through a direct SVD. In the RCNNM model \eqref{eq:core-rcnnm}, however, $\A(\mathcal L)\in\RR^{N\times K}$ with $N\ge K$, so the same operator can be evaluated through the smaller $K\times K$ right Gram matrix. Define
\begin{equation}
\bm H^k
=
\A(\mathcal L^{k+1})
+\frac{\bm Y_1^k}{\tau_1^k},
\qquad
\bm G^k=(\bm H^k)^\trans\bm H^k.
\label{eq:core-gram-def}
\end{equation}
If $\bm H^k=\bm U\bm\Sigma\bm V^\trans$, then
$\bm G^k=\bm V\bm\Sigma^2\bm V^\trans$. Thus, the eigenvectors of
$\bm G^k$ are the right singular vectors of $\bm H^k$, and its eigenvalues
are the squared singular values. Let
\[
\bm G^k
=
\bm V\diag(\xi_1,\ldots,\xi_K)\bm V^\trans,
\qquad
\sigma_i=\sqrt{\Max\{\xi_i,0\}}.
\]
For a threshold $\tau>0$, define the retained-index set
\[
\Jtau=\{i:\sigma_i>\tau\},\qquad
\bm V_\tau=[\bm v_i]_{i\in\Jtau},\qquad
\bm\Sigma_\tau=\diag(\sigma_i)_{i\in\Jtau}.
\]
Only these retained components contribute to the thresholded matrix. Hence,
\begin{equation}
\SVT_\tau(\bm H^k)
=
\bm H^k\bm V_\tau
\bigl(\bm I-\tau\bm\Sigma_\tau^{-1}\bigr)
\bm V_\tau^\trans,
\label{eq:gram-svt}
\end{equation}
with the convention that the result is zero when $\Jtau=\varnothing$.

\begin{proposition}[Gram form of singular-value thresholding]
\label{prop:gram-svt}
For every $\bm H\in\RR^{N\times K}$ with $N\ge K$, the Gram construction in \eqref{eq:gram-svt} gives the same singular-value-thresholding operator as a direct SVD in exact arithmetic.
\end{proposition}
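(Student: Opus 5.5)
The plan is to compare both sides through a full SVD of $\bm H$ and use the spectral decomposition of $\bm G=\bm H^\trans\bm H$.

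\textbf{Step 1: relate the SVD to the eigendecomposition of $\bm G$.} Since $N\ge K$, write a full (thin) SVD $\bm H=\bm U\bm\Sigma\bm V^\trans$ with $\bm U\in\RR^{N\times K}$ having orthonormal columns, $\bm V\in\RR^{K\times K}$ orthogonal, and $\bm\Sigma=\diag(\sigma_1,\ldots,\sigma_K)$ allowed to contain zeros. Then
\[
\bm G=\bm V\bm\Sigma^2\bm V^\trans .
\]
So every orthonormal eigenbasis of $\bm G$ with eigenvalues $\xi_i=\sigma_i^2\ge0$ arises this way, and in exact arithmetic $\sqrt{\Max\{\xi_i,0\}}=\sigma_i$.

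Conversely, take any orthonormal eigendecomposition $\bm G=\bm V\diag(\xi_i)\bm V^\trans$. I would show that $\bm H$ admits an SVD with this right factor. For the indices with $\sigma_i>0$, set $\bm u_i=\bm H\bm v_i/\sigma_i$. These vectors are orthonormal because
\[
\bm v_i^\trans\bm G\bm v_j=\xi_i\delta_{ij}.
\]
For the indices with $\sigma_i=0$, we have $\norm{\bm H\bm v_i}_2^2=\xi_i=0$, so $\bm H\bm v_i=0$. Hence
\[
\bm H=\bm H\bm V\bm V^\trans=\sum_{\sigma_i>0}\sigma_i\bm u_i\bm v_i^\trans,
\]
which is a compact SVD of $\bm H$.

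\textbf{Step 2: substitute into the Gram formula.} For $i\in\Jtau$ we have $\bm H\bm v_i=\sigma_i\bm u_i$, so the Gram formula gives
\[
\bm H\bm V_\tau(\bm I-\tau\bm\Sigma_\tau^{-1})\bm V_\tau^\trans
=\sum_{i\in\Jtau}\sigma_i\bm u_i(1-\tau/\sigma_i)\bm v_i^\trans
=\sum_{i\in\Jtau}(\sigma_i-\tau)\bm u_i\bm v_i^\trans .
\]
Every component with $\sigma_i\le\tau$ has $(\sigma_i-\tau)_+=0$. Therefore the right-hand side equals $\SVT_\tau(\bm H)$ as defined in \eqref{eq:core-svt-def}, evaluated with the SVD constructed in Step 1. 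When $\Jtau=\varnothing$, both sides are zero, which matches the stated convention.

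\textbf{Step 3: show the result does not depend on choices.} This is the only real subtlety. When singular values repeat, the eigenvectors of $\bm G$, and hence the singular vectors, are not unique. So I must show that $\SVT_\tau(\bm H)$ is well defined, and then both computations agree automatically. I would write
\[
\SVT_\tau(\bm H)=\bm H\,f(\bm G),
\qquad f(\xi)=\1_{\{\sqrt{\xi}>\tau\}}\bigl(1-\tau/\sqrt{\xi}\bigr).
\]
Here $f(\bm G)$ is a spectral function of the symmetric matrix $\bm G$, so it is independent of the chosen eigenbasis: within each eigenspace $f$ is constant, and it acts as a scalar multiple of the orthogonal projector onto that eigenspace. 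The identity in Step 2 shows exactly that the Gram formula equals $\bm H f(\bm G)$. The same computation applied to any direct SVD of $\bm H$ also yields $\bm H f(\bm G)$. Hence the two operators coincide, and both are basis independent.

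I expect this basis-independence point to be the main obstacle. The spectral-function formulation $\bm H f(\bm G)$ is what resolves it.
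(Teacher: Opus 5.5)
Your proof is correct, and its core is the same as the paper's: from an SVD you get $\bm G=\bm V\bm\Sigma^2\bm V^\trans$, and substituting $\bm H\bm v_i=\sigma_i\bm u_i$ into \eqref{eq:gram-svt} gives $\sum_{i\in\Jtau}(\sigma_i-\tau)\bm u_i\bm v_i^\trans$. The paper starts from an SVD and simply asserts that the Gram eigenvectors are the right singular vectors, so it leaves implicit the non-uniqueness of the eigenbasis (repeated or zero eigenvalues); your converse construction in Step 1 and the basis-independent form $\SVT_\tau(\bm H)=\bm H f(\bm G)$ in Step 3 make that point explicit.
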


The proof is given in \ref{app:gram-svt-proof}. As with other Gram-based factorizations, forming $\bm H^\trans\bm H$ squares the condition number; a direct SVD can therefore be preferable when very small singular values must be resolved accurately.

Algorithm~\ref{alg:rcnnm-admm} summarizes the complete iteration.

\begin{algorithm}[!t]
\caption{Two-block ADMM for RCNNM \eqref{eq:core-rcnnm}}
\label{alg:rcnnm-admm}
\small
\KwInput{Observed tensor $\mathcal M$, index set $\Oset$, parameter $\lambda$, reflective operator $\A$ and multiplicity operator $\Dop$, initial penalty parameters $\tau_1^0,\tau_2^0$, penalty factor $\varrho$, upper bound $\tau_{\max}$, stopping tolerance $\varepsilon_{\mathrm{tol}}$, and maximum iteration $K_{\max}$.}
Initialize $\mathcal L^0,\bm Z^0,\bm Y_1^0,\mathcal Y_2^0$ and $\mathcal S^0$ with $\mathscr P_{\Oset^c}(\mathcal S^0)=0$\;
Set $r_1^0=r_2^0=+\infty$ and $k=0$\;
\While{$\Max\{r_1^k,r_2^k\}>\varepsilon_{\mathrm{tol}}$ and $k<K_{\max}$}{
Update $\mathcal L^{k+1}$ by \eqref{eq:core-L-update}\;
Update $\mathcal S^{k+1}$ by \eqref{eq:core-S-update}\;
Form $\bm H^k$ and $\bm G^k$ by \eqref{eq:core-gram-def}\;
Update $\bm Z^{k+1}$ by \eqref{eq:gram-svt} with $\tau=1/\tau_1^k$\;
Update $\bm Y_1^{k+1}$ and $\mathcal Y_2^{k+1}$ by
\eqref{eq:core-Y1-update}--\eqref{eq:core-Y2-update}\;
Update $\tau_1^{k+1}$ and $\tau_2^{k+1}$ by \eqref{eq:core-penalty-update}\;
Compute $r_1^{k+1}$ and $r_2^{k+1}$ by \eqref{eq:stopping-residuals}\;
Set $k\leftarrow k+1$\;
}
Set $\widehat{\mathcal L}=\mathcal L^k$ and $\widehat{\mathcal S}=\mathcal S^k$\;
\textbf{Output:} Recovered clean tensor $\widehat{\mathcal L}$ and sparse tensor $\widehat{\mathcal S}$\;
\end{algorithm}

\begin{figure*}[!t]
\centering
\includegraphics[width=\textwidth]{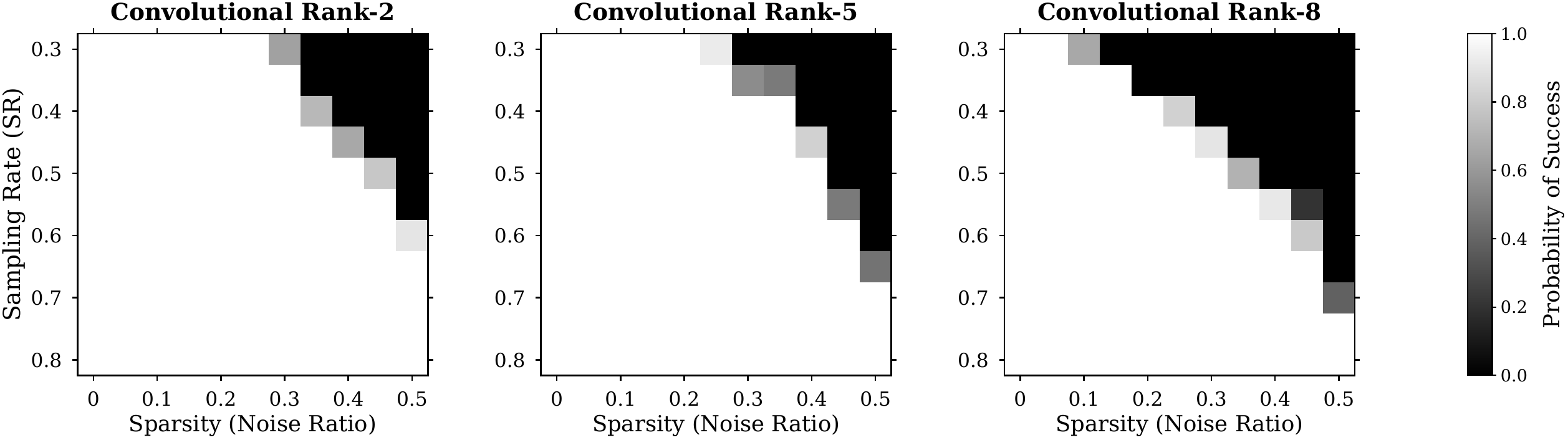}
\caption{Empirical phase-transition results.  Each grid value is the fraction
of successful recoveries over 10 independent trials as a function of lifted
rank $r$ and corruption probability $\gamma$, shown for different sampling
probabilities $p$.  A trial is counted as successful when
$\mathrm{RSE}<10^{-2}$.}
\label{fig:phase-transition}
\end{figure*}

\begin{figure}[!t]
\centering
\includegraphics[width=\columnwidth]{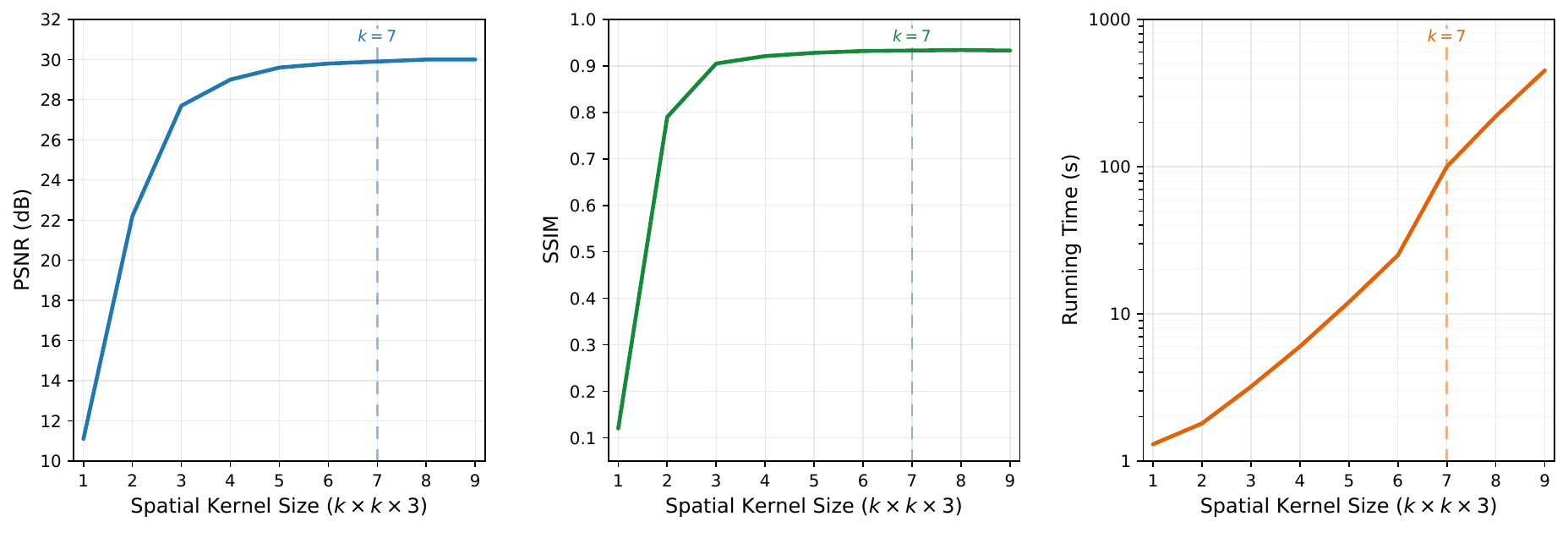}
\caption{Kernel-size sensitivity on Flower with $\mathrm{SR}=0.4$ and $\gamma=0.1$ under random missing. PSNR, SSIM, and running time are shown for $[k,k,3]$ kernels with $k=1,\ldots,9$; the selected setting $k=7$ is highlighted.}
\label{fig:k-choice}
\end{figure}

\begin{figure}[!t]
\centering
\includegraphics[width=\columnwidth]{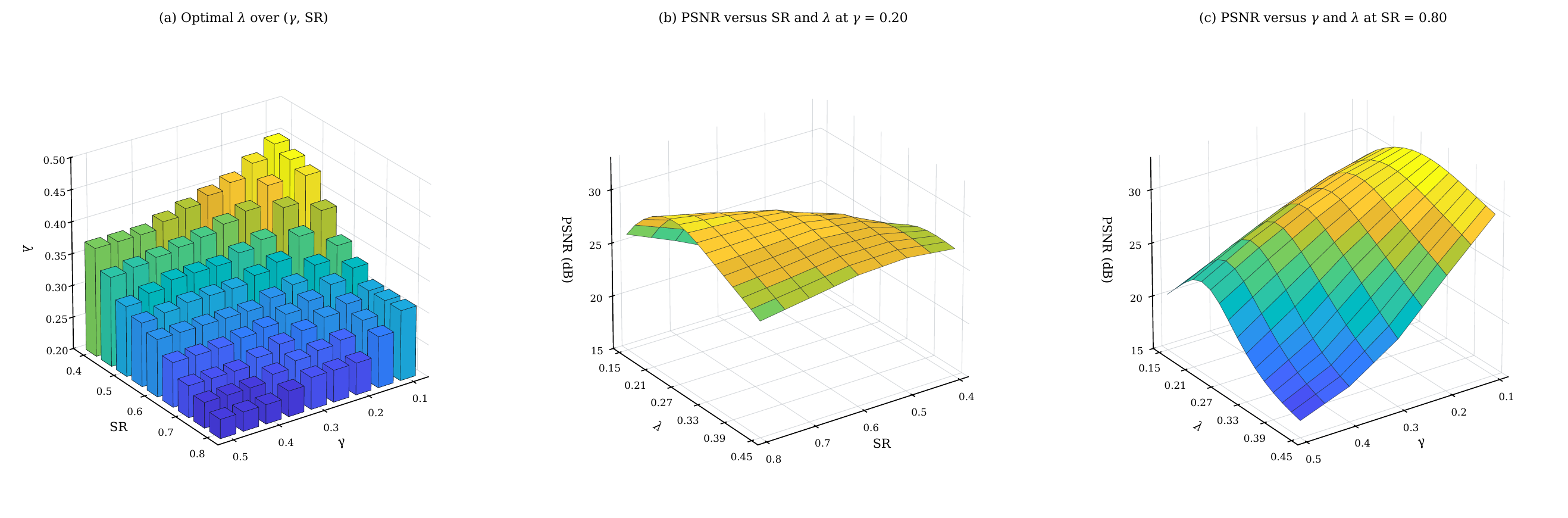}
\caption{Parameter selection and PSNR sensitivity. (a) Optimal $\lambda$ over the $(\gamma,\mathrm{SR})$ grid. (b) PSNR versus $\mathrm{SR}$ and $\lambda$ at $\gamma=0.20$. (c) PSNR versus $\gamma$ and $\lambda$ at $\mathrm{SR}=0.80$.}
\label{fig:parameter-choice}
\end{figure}

\begin{figure}[!t]
\centering
\begin{minipage}[t]{0.495\columnwidth}
\centering
\IfFileExists{figure4a.pdf}{\includegraphics[width=\linewidth]{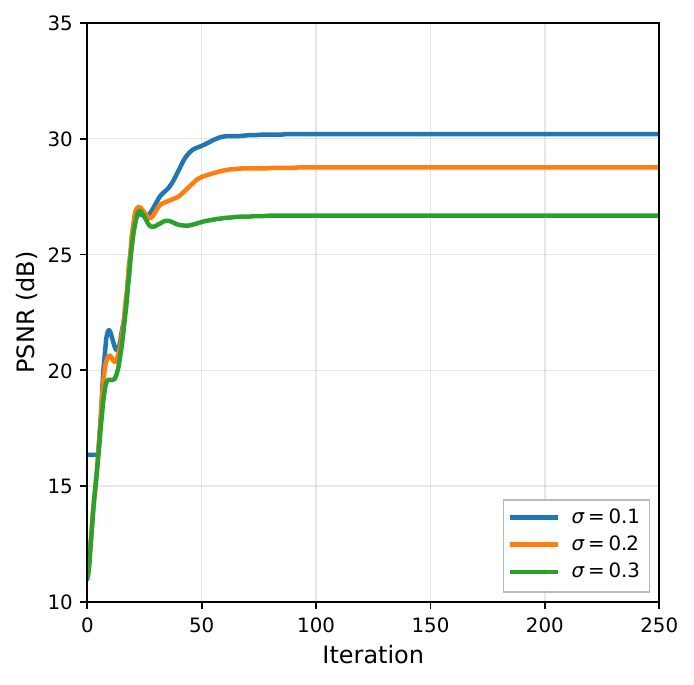}}{\ExpGraphic{figure4a.pdf}{\linewidth}{0.15\textheight}}\\[-1pt]
{\footnotesize (a) Random corruption: PSNR}
\end{minipage}\hfill
\begin{minipage}[t]{0.495\columnwidth}
\centering
\IfFileExists{figure4b.pdf}{\includegraphics[width=\linewidth]{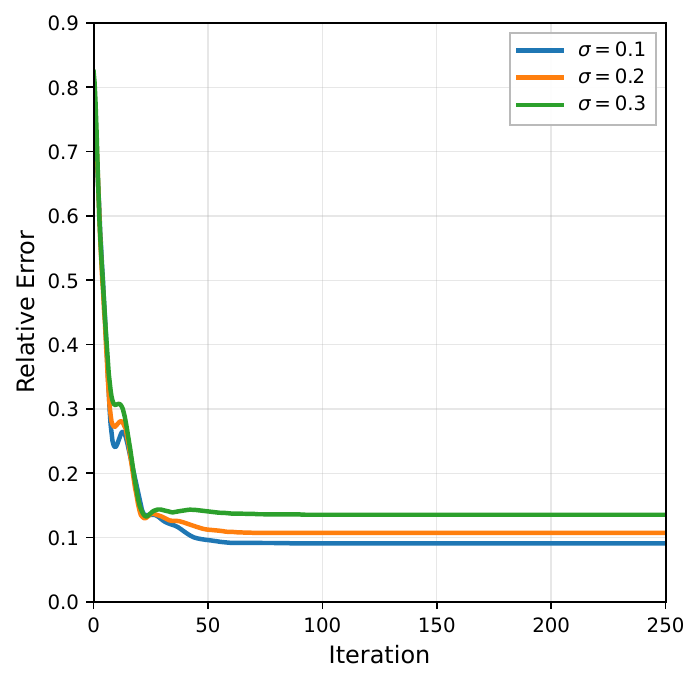}}{\ExpGraphic{figure4b.pdf}{\linewidth}{0.15\textheight}}\\[-1pt]
{\footnotesize (b) Random corruption: RSE}
\end{minipage}

\vspace{0.5mm}

\begin{minipage}[t]{0.495\columnwidth}
\centering
\IfFileExists{figure4c.pdf}{\includegraphics[width=\linewidth]{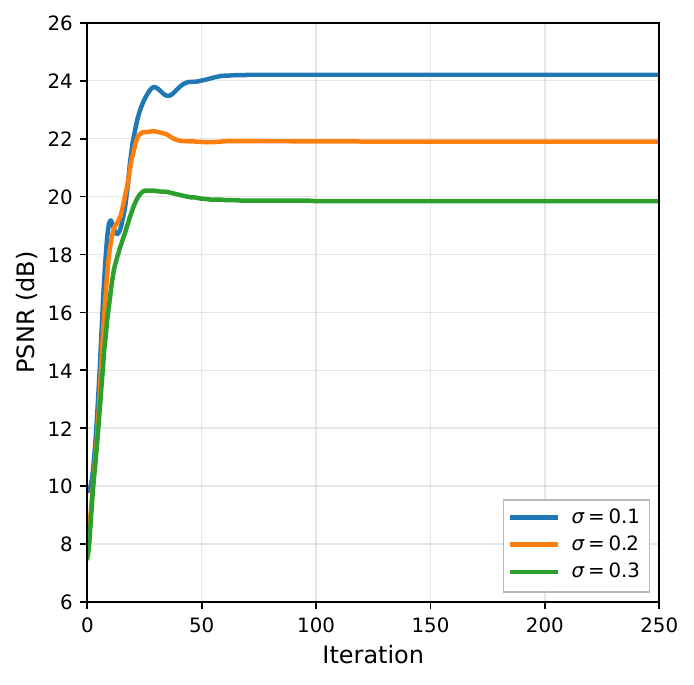}}{\ExpGraphic{figure4c.pdf}{\linewidth}{0.15\textheight}}\\[-1pt]
{\footnotesize (c) Salt-and-pepper corruption: PSNR}
\end{minipage}\hfill
\begin{minipage}[t]{0.495\columnwidth}
\centering
\IfFileExists{figure4d.pdf}{\includegraphics[width=\linewidth]{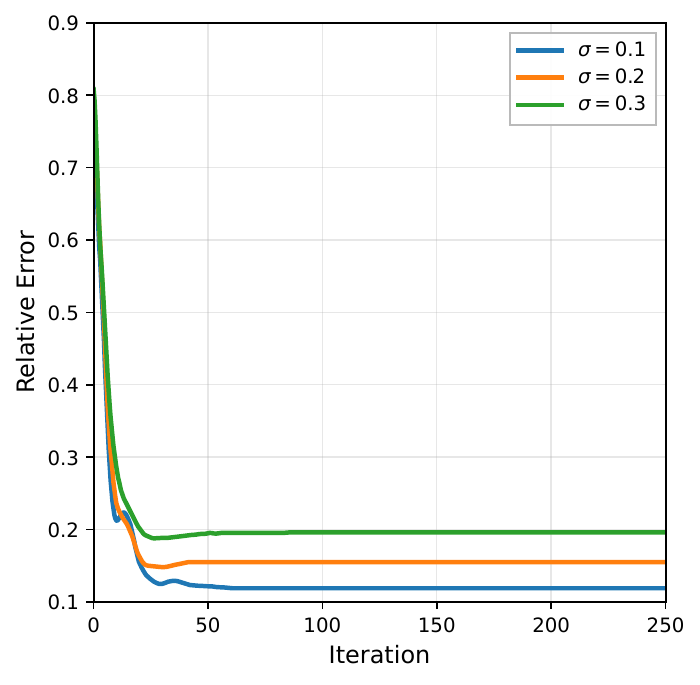}}{\ExpGraphic{figure4d.pdf}{\linewidth}{0.15\textheight}}\\[-1pt]
{\footnotesize (d) Salt-and-pepper corruption: RSE}
\end{minipage}
\caption{Convergence behavior of RCNNM at $\mathrm{SR}=0.3$ and
$\gamma=0.1$. PSNR and RSE are shown versus iteration under random-valued
and salt-and-pepper corruption.}
\label{fig:convergence}
\end{figure}

\subsection{Computational Complexity and Convergence}

\paragraph*{Computational complexity}
For $\bm H\in\RR^{N\times K}$ with $N\ge K$, a direct economy-size SVD has complexity $\mathcal O(NK^2)$. The Gram implementation forms $\bm G=\bm H^\trans\bm H$ in $\mathcal O(NK^2)$ operations, computes the eigendecomposition of the $K\times K$ symmetric matrix in $\mathcal O(K^3)$ operations, and reconstructs the thresholded matrix from the $r_\tau=|\Jtau|$ retained components in $\mathcal O(NKr_\tau)$ operations. The remaining updates consist of applications of $\A$ and $\A^*$ together with entrywise tensor operations. Hence the worst-case leading order of the spectral step remains $\mathcal O(NK^2)$; however, the eigendecomposition is performed on the smaller symmetric Gram matrix and discarded singular components need not be reconstructed. This implementation is particularly useful when $K\ll N$ and $r_\tau\ll K$.

\paragraph*{Convergence}
Once both penalty parameters reach the finite cap $\tau_{\max}$, the iteration becomes a fixed-penalty two-block ADMM with blocks $\mathcal L$ and $(\bm Z,\mathcal S)$. The second block is solved exactly because the $\bm Z$- and $\mathcal S$-subproblems are separable for fixed $\mathcal L$. Standard two-block ADMM convergence results therefore apply \cite{boyd2011admm}.

\section{Numerical Experiments}
\label{sec:experiments}

This section evaluates RCNNM through synthetic recovery tests and image reconstruction experiments. We first examine exact recovery, empirical phase transitions, parameter sensitivity, and numerical convergence. We then compare reflective and circular liftings in terms of spectral compactness and reconstruction quality on BSDS color images and CAVE multispectral images under random, scratch, and grid missing patterns. All experiments are conducted in MATLAB R2026a on a Lenovo Xiaoxin Pro 13 laptop with an Intel Core i5-10210U CPU and 16~GB of RAM.

\begin{table*}[t]
\centering
\begin{minipage}[t]{0.48\textwidth}
\centering
\captionof{table}{Synthetic recovery results on tensors of size $n\times n\times n$.
Here $r=\rank(\A(\mathcal L_0))$ and
$\widehat r=\rank(\A(\widehat{\mathcal L}))$.}
\label{tab:synthetic}
\small
\setlength{\tabcolsep}{4.0pt}
\renewcommand{\arraystretch}{1.08}
\begin{tabular*}{\columnwidth}{@{\extracolsep{\fill}}c|c|c|c||c|c|c|c@{}}
\hline
\multicolumn{4}{c||}{$r=0.05n,\ p=0.9,\ \gamma=0.1$} &
\multicolumn{4}{c}{$r=0.10n,\ p=0.8,\ \gamma=0.2$} \\
\hline
$n$ & $r$ & $\widehat r$ & RSE &
$n$ & $r$ & $\widehat r$ & RSE \\
\hline
100 & 5  & 5  & $5.34\!\times\!10^{-7}$ & 100 & 10 & 10 & $2.89\!\times\!10^{-6}$ \\
200 & 10 & 10 & $4.67\!\times\!10^{-7}$ & 200 & 20 & 20 & $6.87\!\times\!10^{-6}$ \\
300 & 15 & 15 & $1.32\!\times\!10^{-7}$ & 300 & 30 & 30 & $9.13\!\times\!10^{-6}$ \\
\hline
\end{tabular*}
\end{minipage}
\hfill
\begin{minipage}[t]{0.48\textwidth}
\centering
\captionof{table}{Spectral ranks of circular and reflective liftings on the five BSDS images using a $[10,10,3]$ convolution kernel.}
\label{tab:spectral}
\scriptsize
\setlength{\tabcolsep}{1.7pt}
\renewcommand{\arraystretch}{0.90}
\begin{tabular}{lcccccc}
\toprule
& \multicolumn{2}{c}{$r_{\mathrm{eff}}$}
& \multicolumn{2}{c}{$r_{99}$}
& \multicolumn{2}{c}{$r_{99.9}$}\\
\cmidrule(lr){2-3}\cmidrule(lr){4-5}\cmidrule(lr){6-7}
Image & Circular & Reflective & Circular & Reflective & Circular & Reflective\\
\midrule
Flower    & 1.6919 & 1.6540 & 12 & 11 & 56 & 51\\
Bear      & 1.5718 & 1.4433 & 15 & 12 & 66 & 62\\
Goose     & 1.3140 & 1.2403 & 11 & 9  & 70 & 69\\
House     & 1.4193 & 1.3488 & 9  & 7  & 78 & 77\\
Butterfly & 2.0304 & 1.9216 & 21 & 19 & 71 & 68\\
\midrule
Average   & 1.6055 & 1.5216 & 13.6 & 11.6 & 68.20 & 65.40\\
\bottomrule
\end{tabular}
\end{minipage}
\end{table*}

\begin{figure*}[t]
\centering
\IfFileExists{figure5.pdf}{%
  \includegraphics[width=\textwidth]{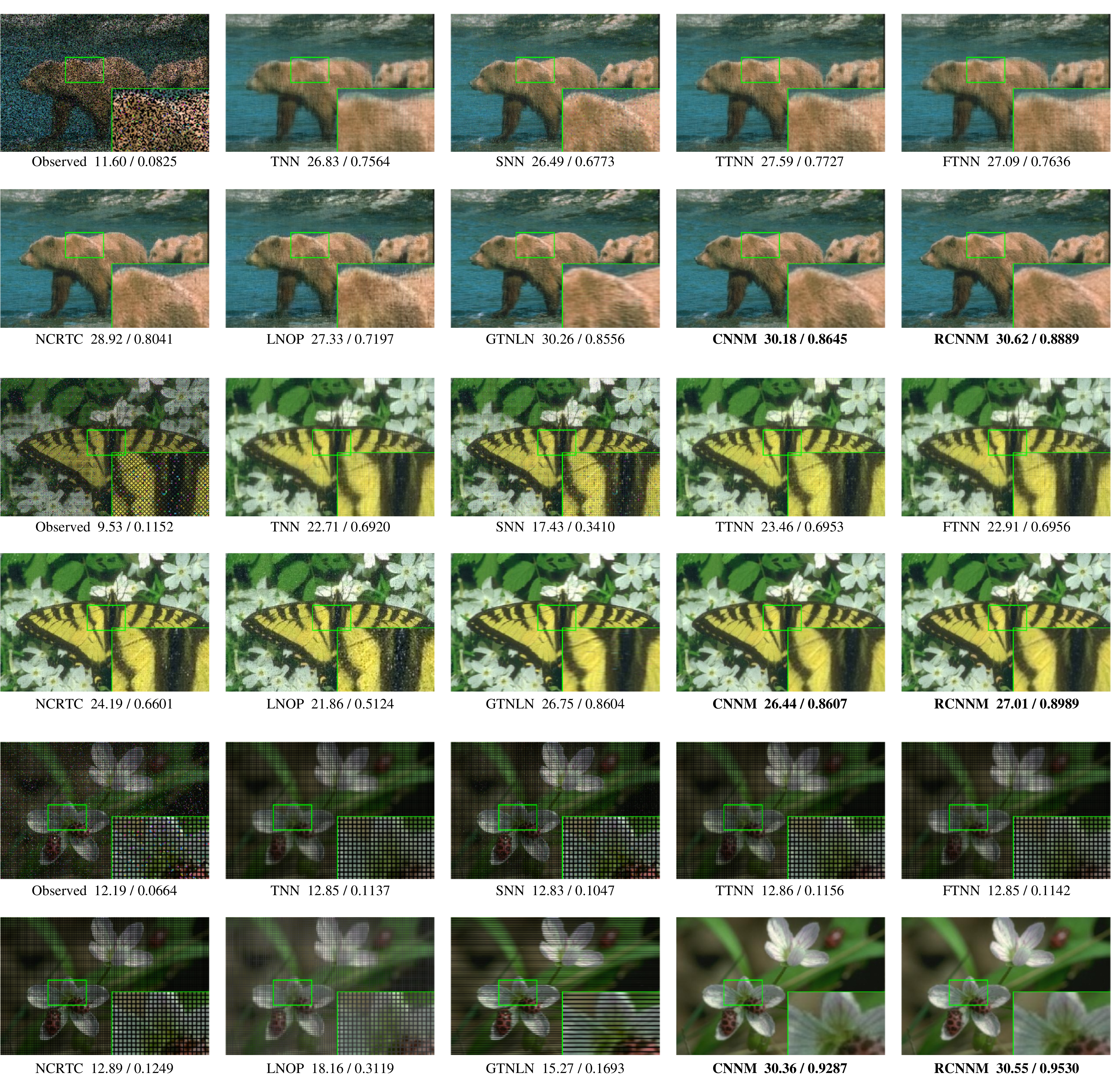}%
}{%
  \ExpGraphicFit{figure5.pdf}{\textwidth}{0.20\textheight}%
}
\caption{Visual comparison on three representative BSDS images at
$\mathrm{SR}=0.6$ and $\gamma=0.1$. From top to bottom: Bear with random
missing, Butterfly with scratch missing, and Flower with grid missing.
Values are PSNR/SSIM.}
\label{fig:color-overall}
\end{figure*}

\begin{table*}[t]
\centering
\centering
{\setlength{\belowcaptionskip}{0pt}
\captionof{table}{Mean global reconstruction results on five BSDS color images. Each method entry reports PSNR/SSIM; PSNR is in dB.}
\label{tab:color-global}}
\scriptsize
\setlength{\tabcolsep}{2.15pt}
\renewcommand{\arraystretch}{0.76}
\resizebox{0.94\textwidth}{!}{%
\begin{tabular}{c|c|c|ccccccccc}
\toprule
$\gamma$ & Missing & $\mathrm{SR}$ & TNN & SNN & LNOP & FTNN & TTNN & NCRTC & GTNLN & CNNM & RCNNM \\
\midrule
\multirow{9}{*}{0.1} & \multirow{3}{*}{Random} & 0.4 & 22.16/0.752 & 22.53/0.829 & 22.35/0.688 & 22.35/0.763 & 22.86/0.781 & 23.81/0.845 & 25.83/0.861 & \underline{26.36}/\underline{0.876} & \textbf{26.66}/\textbf{0.879} \\
 &  & 0.6 & 24.87/0.849 & 24.48/0.834 & 24.74/0.794 & 25.05/0.854 & 25.54/0.863 & 26.48/0.868 & \underline{27.82}/0.909 & 27.70/\underline{0.910} & \textbf{28.19}/\textbf{0.915} \\
 &  & 0.8 & 27.40/0.909 & 26.36/0.865 & 27.07/0.872 & 27.57/0.912 & 28.06/0.915 & 28.64/0.881 & \underline{29.70}/\underline{0.938} & 29.43/0.937 & \textbf{29.96}/\textbf{0.940} \\
\addlinespace[0.4pt]
 & \multirow{3}{*}{Scratch} & 0.4 & 21.58/0.828 & 8.29/0.164 & 15.56/0.327 & 8.55/0.158 & 8.70/0.166 & 8.39/0.164 & \underline{27.00}/\underline{0.891} & 26.53/0.888 & \textbf{27.25}/\textbf{0.895} \\
 &  & 0.6 & 25.00/0.846 & 18.09/0.361 & 25.28/0.813 & 24.39/0.800 & 25.10/0.824 & 25.33/0.664 & 28.62/\textbf{0.923} & \underline{28.99}/0.913 & \textbf{29.47}/\underline{0.915} \\
 &  & 0.8 & 27.35/0.908 & 25.37/0.664 & 27.04/0.871 & 26.84/0.871 & 27.55/0.888 & 27.69/0.762 & 29.74/\textbf{0.940} & \underline{30.30}/0.935 & \textbf{30.86}/\underline{0.938} \\
\addlinespace[0.4pt]
 & \multirow{3}{*}{Grid} & 0.4 & 9.52/0.119 & 9.25/0.225 & 16.66/0.354 & 9.52/0.120 & 9.52/0.124 & 9.35/0.225 & 11.71/0.216 & \underline{21.47}/\underline{0.553} & \textbf{21.70}/\textbf{0.556} \\
 &  & 0.6 & 10.55/0.168 & 10.24/0.257 & 17.69/0.429 & 10.55/0.170 & 10.56/0.174 & 10.25/0.257 & 12.96/0.273 & \underline{27.10}/\underline{0.889} & \textbf{27.41}/\textbf{0.891} \\
 &  & 0.8 & 13.73/0.349 & 13.35/0.356 & 20.63/0.625 & 13.73/0.351 & 13.74/0.355 & 13.66/0.357 & 16.46/0.515 & \underline{29.09}/\underline{0.932} & \textbf{29.50}/\textbf{0.933} \\
\midrule
\multirow{9}{*}{0.2} & \multirow{3}{*}{Random} & 0.4 & 21.28/0.707 & 21.14/0.810 & 19.72/0.587 & 21.40/0.718 & 21.82/0.733 & 21.99/0.833 & 24.42/0.809 & \underline{25.20}/\underline{0.838} & \textbf{25.39}/\textbf{0.844} \\
 &  & 0.6 & 23.66/0.801 & 23.00/0.819 & 22.31/0.730 & 23.76/0.805 & 23.97/0.802 & 24.71/0.856 & \underline{26.45}/0.872 & 26.18/\underline{0.876} & \textbf{26.66}/\textbf{0.879} \\
 &  & 0.8 & 25.62/0.856 & 24.59/0.836 & 24.73/0.825 & 25.69/0.858 & 25.65/0.845 & 26.90/0.870 & \textbf{28.07}/\textbf{0.909} & 27.35/\underline{0.902} & \underline{27.83}/0.901 \\
\addlinespace[0.4pt]
 & \multirow{3}{*}{Scratch} & 0.4 & 19.64/0.774 & 8.00/0.159 & 14.29/0.284 & 8.53/0.152 & 8.71/0.160 & 8.01/0.153 & \underline{25.23}/0.835 & 24.64/\underline{0.847} & \textbf{25.26}/\textbf{0.855} \\
 &  & 0.6 & 23.59/0.791 & 17.19/0.336 & 23.00/0.766 & 22.96/0.758 & 23.44/0.777 & 23.60/\underline{0.875} & \underline{27.04}/\textbf{0.886} & 27.00/0.873 & \textbf{27.54}/0.874 \\
 &  & 0.8 & 25.59/0.855 & 23.56/0.630 & 24.88/0.832 & 25.02/0.819 & 25.10/0.823 & 25.89/0.751 & 28.08/\textbf{0.908} & \underline{28.24}/\underline{0.900} & \textbf{28.74}/\underline{0.900} \\
\addlinespace[0.4pt]
 & \multirow{3}{*}{Grid} & 0.4 & 9.50/0.113 & 9.12/0.124 & 15.21/0.301 & 9.50/0.115 & 9.51/0.118 & 9.07/0.117 & 11.69/0.208 & \underline{20.73}/\textbf{0.539} & \textbf{20.99}/\underline{0.523} \\
 &  & 0.6 & 10.53/0.159 & 10.07/0.151 & 16.30/0.378 & 10.53/0.161 & 10.53/0.164 & 10.12/0.146 & 12.93/0.263 & \underline{25.69}/\textbf{0.859} & \textbf{25.91}/\underline{0.853} \\
 &  & 0.8 & 13.66/0.330 & 13.05/0.243 & 18.98/0.569 & 13.66/0.332 & 13.66/0.332 & 13.38/0.259 & 16.39/0.499 & \underline{27.16}/\textbf{0.899} & \textbf{27.54}/\underline{0.897} \\
\midrule
\multirow{9}{*}{0.3} & \multirow{3}{*}{Random} & 0.4 & 20.30/0.650 & 19.96/0.804 & 17.10/0.483 & 20.43/0.662 & 20.70/0.670 & 19.81/0.797 & 22.04/0.699 & \underline{23.73}/\underline{0.805} & \textbf{24.08}/\textbf{0.819} \\
 &  & 0.6 & 22.21/0.733 & 21.50/0.811 & 19.85/0.658 & 22.32/0.739 & 22.19/0.720 & 22.72/0.746 & 23.93/0.785 & \underline{24.31}/\underline{0.830} & \textbf{25.13}/\textbf{0.844} \\
 &  & 0.8 & 23.53/0.776 & 22.76/0.825 & 22.29/0.768 & 23.60/0.778 & 23.00/0.743 & 24.79/\underline{0.862} & \underline{25.44}/0.836 & 25.19/0.855 & \textbf{25.99}/\textbf{0.867} \\
\addlinespace[0.4pt]
 & \multirow{3}{*}{Scratch} & 0.4 & 17.90/0.716 & 7.76/0.150 & 12.95/0.238 & 8.35/0.137 & 8.55/0.149 & 7.55/0.146 & 22.34/0.714 & \underline{22.80}/\underline{0.804} & \textbf{23.53}/\textbf{0.814} \\
 &  & 0.6 & 22.04/0.718 & 16.56/0.317 & 20.66/0.714 & 21.35/0.697 & 21.40/0.698 & 21.59/0.804 & 24.22/0.796 & \underline{24.52}/\underline{0.811} & \textbf{25.53}/\textbf{0.830} \\
 &  & 0.8 & 23.51/0.776 & 21.85/0.862 & 22.66/0.788 & 22.85/0.748 & 22.43/0.734 & 23.84/0.853 & 25.43/0.836 & \underline{26.10}/\underline{0.867} & \textbf{26.94}/\textbf{0.873} \\
\addlinespace[0.4pt]
 & \multirow{3}{*}{Grid} & 0.4 & 9.47/0.104 & 8.90/0.125 & 13.76/0.247 & 9.48/0.106 & 9.48/0.109 & 8.90/0.123 & 11.63/0.192 & \underline{19.74}/\textbf{0.517} & \textbf{19.90}/\underline{0.498} \\
 &  & 0.6 & 10.49/0.146 & 9.90/0.156 & 14.88/0.324 & 10.49/0.148 & 10.48/0.150 & 9.98/0.156 & 12.85/0.243 & \underline{23.95}/\textbf{0.822} & \textbf{24.49}/\underline{0.819} \\
 &  & 0.8 & 13.53/0.299 & 12.90/0.255 & 17.32/0.507 & 13.53/0.300 & 13.47/0.295 & 13.22/0.253 & 16.21/0.464 & \underline{25.09}/\underline{0.853} & \textbf{25.83}/\textbf{0.864} \\
\bottomrule
\end{tabular}%
}
\vspace{1pt}
\centering
{\setlength{\belowcaptionskip}{0pt}
\captionof{table}{Mean boundary reconstruction results on five BSDS color images. Each method entry reports BPSNR/BSSIM; BPSNR is in dB and the boundary width is $w=5$ pixels.}
\label{tab:color-boundary-metrics}}
\scriptsize
\setlength{\tabcolsep}{2.15pt}
\renewcommand{\arraystretch}{0.50}
\resizebox{0.94\textwidth}{!}{%
\begin{tabular}{c|c|c|ccccccccc}
\toprule
$\gamma$ & Missing & $\mathrm{SR}$ & TNN & SNN & LNOP & FTNN & TTNN & NCRTC & GTNLN & CNNM & RCNNM \\
\midrule
\multirow{9}{*}{0.1} & \multirow{3}{*}{Random} & 0.4 & 22.92/0.766 & 23.14/0.846 & 22.69/0.699 & 23.15/0.780 & 23.71/0.798 & 25.05/0.862 & \underline{25.67}/0.860 & 23.44/\underline{0.887} & \textbf{26.78}/\textbf{0.889} \\
 &  & 0.6 & 25.70/0.862 & 25.36/0.846 & 25.36/0.805 & 25.94/0.868 & 26.42/0.875 & 27.52/0.884 & \underline{27.72}/0.908 & 24.61/\underline{0.920} & \textbf{28.16}/\textbf{0.924} \\
 &  & 0.8 & 28.49/0.919 & 27.20/0.878 & 27.72/0.883 & 28.60/0.922 & 28.90/0.925 & 28.51/0.893 & \textbf{29.72}/0.940 & 26.13/\underline{0.945} & \underline{29.32}/\textbf{0.946} \\
\addlinespace[0.4pt]
 & \multirow{3}{*}{Scratch} & 0.4 & 21.93/0.835 & 8.78/0.211 & 16.13/0.344 & 9.78/0.182 & 9.79/0.192 & 8.82/0.211 & 26.41/0.889 & \underline{26.86}/\underline{0.893} & \textbf{27.53}/\textbf{0.900} \\
 &  & 0.6 & 25.74/0.859 & 17.93/0.359 & 25.31/0.828 & 25.88/0.817 & 26.36/0.840 & 25.60/0.666 & \underline{28.07}/0.922 & 25.65/\underline{0.923} & \textbf{29.16}/\textbf{0.924} \\
 &  & 0.8 & 28.42/0.919 & 26.31/0.664 & 27.24/0.885 & 28.60/0.884 & 28.97/0.899 & 28.65/0.765 & \underline{29.91}/0.941 & 26.93/\underline{0.944} & \textbf{30.09}/\textbf{0.946} \\
\addlinespace[0.4pt]
 & \multirow{3}{*}{Grid} & 0.4 & 10.15/0.136 & 10.10/0.272 & 16.98/0.368 & 10.16/0.141 & 10.16/0.146 & 10.18/0.272 & 12.62/0.219 & \underline{18.60}/\underline{0.569} & \textbf{18.88}/\textbf{0.579} \\
 &  & 0.6 & 11.33/0.187 & 11.28/0.303 & 18.23/0.445 & 11.33/0.192 & 11.34/0.196 & 11.36/0.304 & 13.92/0.276 & \underline{24.69}/\underline{0.900} & \textbf{27.74}/\textbf{0.901} \\
 &  & 0.8 & 15.64/0.370 & 15.51/0.389 & 22.42/0.641 & 15.64/0.372 & 15.66/0.377 & 15.67/0.390 & 21.11/0.529 & \underline{26.57}/\underline{0.941} & \textbf{29.14}/\textbf{0.942} \\
\midrule
\multirow{9}{*}{0.2} & \multirow{3}{*}{Random} & 0.4 & 22.10/0.722 & 22.16/0.826 & 19.95/0.599 & 22.19/0.736 & 22.58/0.751 & 23.49/0.849 & \underline{23.60}/0.807 & 22.34/\underline{0.850} & \textbf{24.79}/\textbf{0.856} \\
 &  & 0.6 & 24.43/0.815 & 23.93/0.834 & 22.69/0.742 & 24.52/0.820 & 24.55/0.818 & \underline{26.08}/0.873 & 25.81/0.870 & 23.15/\underline{0.887} & \textbf{26.66}/\textbf{0.890} \\
 &  & 0.8 & 26.59/0.869 & 25.70/0.850 & 25.13/0.836 & 26.56/0.870 & 26.25/0.859 & 27.11/0.884 & \underline{27.50}/0.907 & 25.19/\textbf{0.912} & \textbf{27.95}/\underline{0.911} \\
\addlinespace[0.4pt]
 & \multirow{3}{*}{Scratch} & 0.4 & 20.00/0.781 & 8.76/0.206 & 14.64/0.301 & 9.76/0.176 & 9.77/0.186 & 8.79/0.201 & 24.19/0.832 & \underline{24.94}/\underline{0.853} & \textbf{25.54}/\textbf{0.860} \\
 &  & 0.6 & 24.11/0.806 & 17.56/0.334 & 23.16/0.782 & 24.21/0.776 & 24.24/0.795 & 24.32/0.876 & \underline{26.02}/0.883 & 23.95/\underline{0.886} & \textbf{27.26}/\textbf{0.887} \\
 &  & 0.8 & 26.35/0.869 & 24.52/0.630 & 25.03/0.848 & 26.47/0.834 & 26.14/0.838 & 27.09/0.754 & \underline{27.62}/\underline{0.907} & 25.05/\textbf{0.911} & \textbf{28.02}/\textbf{0.911} \\
\addlinespace[0.4pt]
 & \multirow{3}{*}{Grid} & 0.4 & 10.14/0.131 & 10.06/0.188 & 15.51/0.315 & 10.14/0.136 & 10.14/0.140 & 10.16/0.182 & 12.59/0.211 & \underline{18.14}/\underline{0.544} & \textbf{18.20}/\textbf{0.560} \\
 &  & 0.6 & 11.31/0.179 & 11.22/0.212 & 16.74/0.394 & 11.30/0.183 & 11.31/0.187 & 11.34/0.208 & 13.88/0.266 & \underline{23.12}/\underline{0.866} & \textbf{25.43}/\textbf{0.872} \\
 &  & 0.8 & 15.53/0.352 & 15.40/0.301 & 20.53/0.586 & 15.55/0.353 & 15.52/0.354 & 15.61/0.314 & 20.82/0.513 & \underline{24.52}/\underline{0.909} & \textbf{26.67}/\textbf{0.910} \\
\midrule
\multirow{9}{*}{0.3} & \multirow{3}{*}{Random} & 0.4 & 20.98/0.667 & 20.97/\underline{0.820} & 17.35/0.496 & 21.09/0.682 & 21.15/0.690 & \underline{21.20}/0.810 & 21.02/0.696 & 20.95/0.818 & \textbf{23.33}/\textbf{0.835} \\
 &  & 0.6 & 22.75/0.749 & 22.62/0.826 & 20.06/0.670 & 22.61/0.758 & 22.14/0.739 & \underline{24.09}/\underline{0.857} & 23.09/0.782 & 21.39/0.843 & \textbf{25.55}/\textbf{0.863} \\
 &  & 0.8 & 23.84/0.790 & 24.02/0.839 & 22.48/0.780 & 23.59/0.794 & 22.63/0.761 & \underline{25.96}/\underline{0.877} & 24.29/0.833 & 22.15/0.868 & \textbf{26.13}/\textbf{0.879} \\
\addlinespace[0.4pt]
 & \multirow{3}{*}{Scratch} & 0.4 & 18.23/0.723 & 8.72/0.200 & 13.11/0.256 & 9.73/0.162 & 9.74/0.175 & 8.73/0.195 & 20.77/0.709 & \underline{23.09}/\underline{0.809} & \textbf{23.79}/\textbf{0.819} \\
 &  & 0.6 & 22.39/0.737 & 17.04/0.315 & 20.96/0.731 & 22.41/0.717 & 21.94/0.719 & 22.39/0.806 & \underline{23.02}/0.792 & 21.90/\underline{0.827} & \textbf{26.35}/\textbf{0.845} \\
 &  & 0.8 & 23.80/0.794 & 22.99/0.865 & 22.77/0.805 & 23.77/0.766 & 22.84/0.753 & \underline{25.03}/\underline{0.876} & 24.36/0.833 & 23.04/0.868 & \textbf{27.24}/\textbf{0.881} \\
\addlinespace[0.4pt]
 & \multirow{3}{*}{Grid} & 0.4 & 10.11/0.123 & 10.02/0.188 & 14.00/0.262 & 10.10/0.128 & 10.09/0.131 & 10.13/0.186 & 12.49/0.195 & \underline{17.52}/\underline{0.521} & \textbf{17.88}/\textbf{0.540} \\
 &  & 0.6 & 11.26/0.167 & 11.18/0.216 & 15.20/0.341 & 11.25/0.171 & 11.22/0.173 & 11.31/0.216 & 13.76/0.246 & \underline{21.45}/\underline{0.835} & \textbf{24.53}/\textbf{0.838} \\
 &  & 0.8 & 15.29/0.323 & 15.25/0.311 & 18.58/0.525 & 15.24/0.323 & 15.06/0.317 & 15.49/0.309 & 20.04/0.476 & \underline{22.37}/\underline{0.867} & \textbf{25.99}/\textbf{0.879} \\
\bottomrule
\end{tabular}%
}
\end{table*}

\begin{figure*}[!t]
\centering
\setlength{\abovecaptionskip}{0pt}
\setlength{\belowcaptionskip}{0pt}
\vspace*{-3pt}
\centering
\makebox[\textwidth][c]{%
\begin{minipage}{\textwidth}
\centering
\includegraphics[width=\linewidth,trim=18bp 370bp 19bp 369bp,clip]{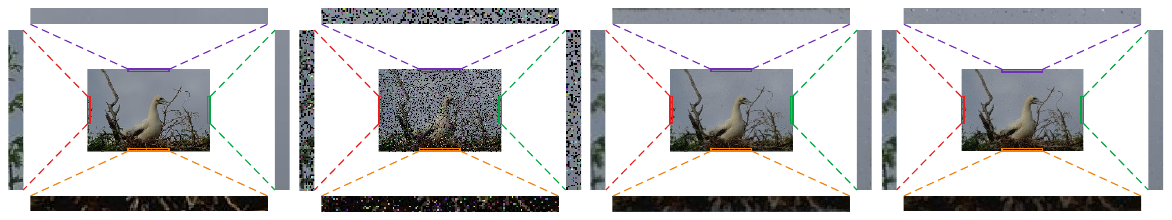}\\[-5pt]
{\footnotesize
\renewcommand{\arraystretch}{0.82}
\begin{tabularx}{\linewidth}{@{}*{4}{>{\centering\arraybackslash}X}@{}}
Original &
Corrupted $12.62/12.89$ &
CNNM $28.18/23.80$ &
RCNNM $\mathbf{28.28/27.80}$
\end{tabularx}}
\end{minipage}}

\vspace{0pt}

\makebox[\textwidth][c]{%
\begin{minipage}{\textwidth}
\centering
\includegraphics[width=\linewidth,trim=18bp 370bp 19bp 369bp,clip]{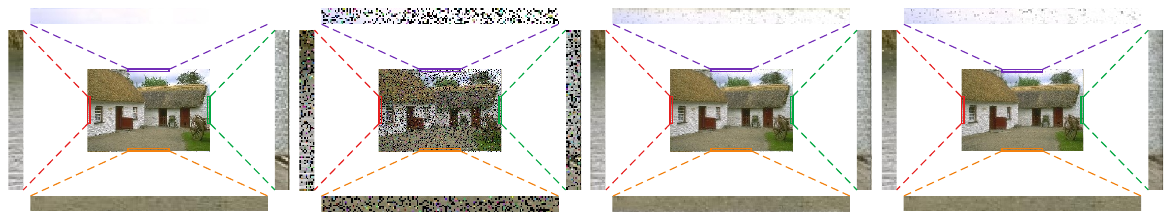}\\[-5pt]
{\footnotesize
\renewcommand{\arraystretch}{0.82}
\begin{tabularx}{\linewidth}{@{}*{4}{>{\centering\arraybackslash}X}@{}}
Original &
Corrupted $11.68/10.07$ &
CNNM $27.03/22.90$ &
RCNNM $\mathbf{27.26/24.10}$
\end{tabularx}}
\end{minipage}}

\vspace{-1pt}
\captionof{figure}{Boundary-region comparison on Goose (top) and House (bottom). The enlarged boundary regions correspond to the central one-third segment of each image edge. Values are PSNR/BPSNR in dB; BPSNR is computed on the five-pixel boundary strip.}
\label{fig:color-boundary}
\vspace{3pt}
\centering
\IfFileExists{figure7.pdf}{%
  \includegraphics[width=\textwidth]{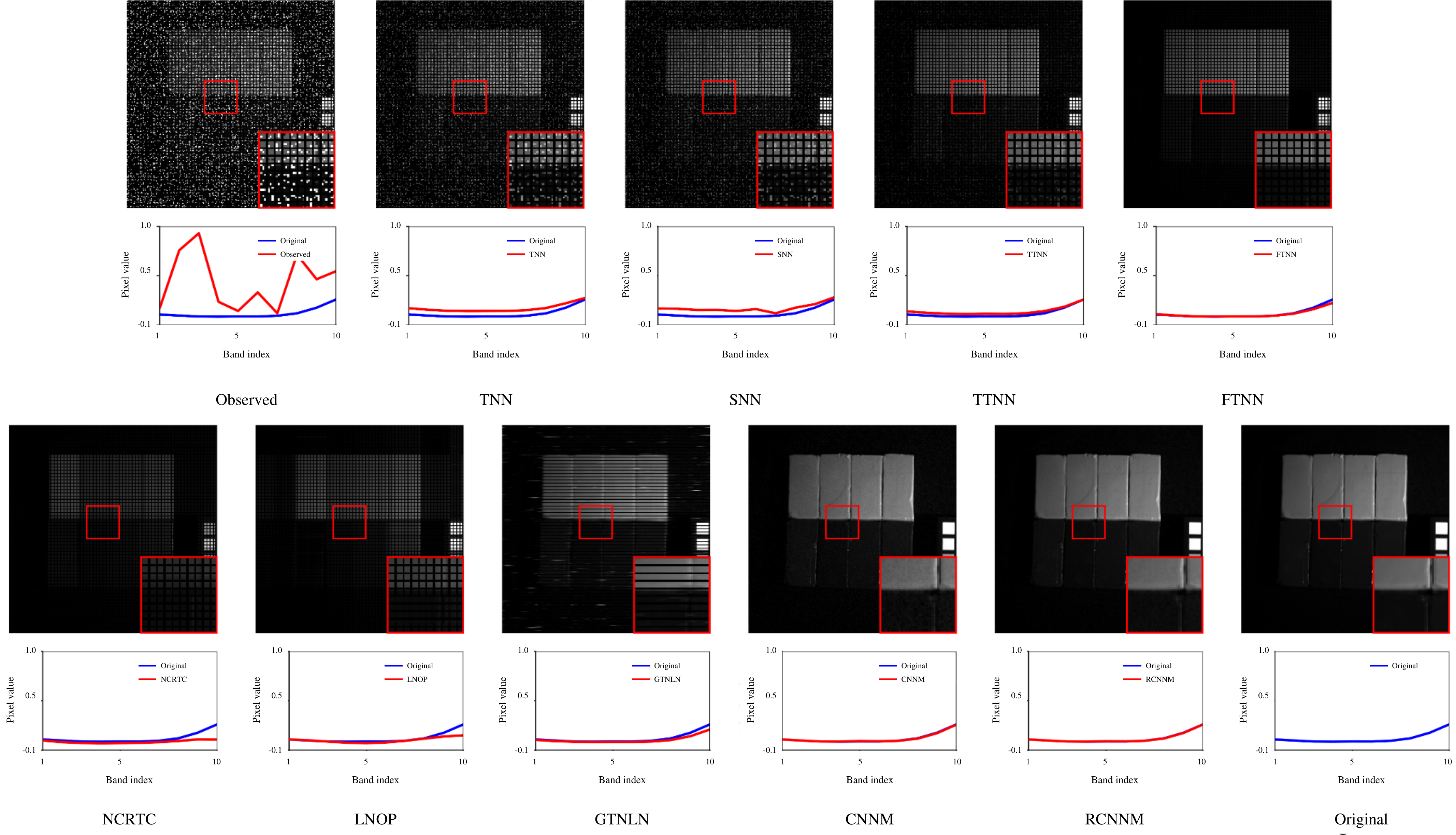}%
}{%
  \ExpGraphicFit{figure7.pdf}{\textwidth}{0.23\textheight}%
}
\captionof{figure}{Visual and spectral comparison on the CAVE Clay scene under grid
missing with $\mathrm{SR}=0.6$ and $\gamma=0.2$. The red boxes mark the selected spatial region and
its enlarged view. For each method, the spectral plot compares the recovered
(or observed) values at the selected pixel with the original spectrum over the
first ten bands.}
\label{fig:ms-clay-visual}
\end{figure*}

\begin{table*}[!t]
\centering
\setlength{\abovecaptionskip}{0pt}
\setlength{\belowcaptionskip}{0pt}
\captionof{table}{Mean global reconstruction results on three CAVE multispectral images with $\gamma=0.2$. Each method entry reports PSNR/SSIM; PSNR is in dB.}
\label{tab:ms-global}
\scriptsize
\setlength{\tabcolsep}{2.15pt}
\renewcommand{\arraystretch}{0.42}
\resizebox{0.94\textwidth}{!}{%
\begin{tabular}{c|c|ccccccccc}
\toprule
Pattern & $\mathrm{SR}$ & TNN & SNN & LNOP & FTNN & TTNN & NCRTC & GTNLN & CNNM & RCNNM \\
\midrule
\multirow{3}{*}{Random} & 0.5 & 28.05/0.4910 & 29.48/0.5899 & \underline{30.26}/0.7705 & 29.21/0.6906 & 29.33/0.7146 & 26.55/0.7653 & 29.03/0.8329 & 29.97/\underline{0.8678} & \textbf{30.97}/\textbf{0.8840} \\
 & 0.6 & 26.70/0.4051 & 30.31/0.6354 & \underline{31.09}/0.7903 & 23.27/0.2450 & 30.16/0.7428 & 27.77/0.7994 & \underline{31.09}/0.8665 & 30.89/\underline{0.8872} & \textbf{31.33}/\textbf{0.8985} \\
 & 0.7 & 25.50/0.3470 & 31.22/0.6742 & 31.82/0.8070 & 22.18/0.2128 & 31.38/0.7170 & 29.18/0.8224 & \textbf{32.30}/\textbf{0.9047} & 31.40/0.8805 & \underline{32.24}/\underline{0.8896} \\
\addlinespace[1pt]
\multirow{3}{*}{Scratch} & 0.5 & 24.34/0.4045 & 25.04/0.3340 & 26.13/0.6120 & 24.28/0.2452 & 24.97/0.5186 & 23.63/0.4485 & 23.44/0.5113 & \underline{29.17}/\underline{0.8416} & \textbf{29.63}/\textbf{0.8573} \\
 & 0.6 & 25.26/0.4140 & 27.56/0.4367 & 27.20/0.6409 & 24.31/0.2255 & 29.98/0.6725 & 25.71/0.5126 & 28.08/0.7119 & \underline{30.22}/\underline{0.8256} & \textbf{30.85}/\textbf{0.8523} \\
 & 0.7 & 26.65/0.4578 & 30.40/0.5843 & 28.26/0.6713 & 23.86/0.2104 & \textbf{32.06}/0.8135 & 28.66/0.6456 & 30.36/0.7712 & 31.54/\textbf{0.8528} & \underline{31.70}/\underline{0.8504} \\
\addlinespace[1pt]
\multirow{3}{*}{Grid} & 0.5 & 24.25/0.3902 & 23.50/0.3162 & 26.28/0.5972 & 22.61/0.2549 & 25.21/0.5455 & 24.63/0.5274 & 26.99/0.7461 & \underline{29.82}/\underline{0.8637} & \textbf{30.27}/\textbf{0.8759} \\
 & 0.6 & 24.14/0.3345 & 24.75/0.3905 & 27.21/0.6414 & 22.02/0.2094 & 25.93/0.5511 & 25.47/0.5812 & 27.98/0.7797 & \underline{30.87}/\underline{0.8857} & \textbf{31.11}/\textbf{0.8914} \\
 & 0.7 & 23.80/0.2833 & 26.34/0.4887 & 28.44/0.6937 & 21.25/0.1746 & 26.60/0.5148 & 26.62/0.6444 & 29.50/0.8278 & \underline{31.45}/\underline{0.8903} & \textbf{31.90}/\textbf{0.9078} \\
\bottomrule
\end{tabular}%
}

\vspace{5pt}
\centering
\captionof{table}{Mean boundary reconstruction results on three CAVE multispectral images with $\gamma=0.2$. Each method entry reports BPSNR/BSSIM; BPSNR is in dB and the boundary width is $w=5$ pixels.}
\label{tab:ms-boundary}
\scriptsize
\setlength{\tabcolsep}{2.15pt}
\renewcommand{\arraystretch}{0.42}
\resizebox{0.94\textwidth}{!}{%
\begin{tabular}{c|c|ccccccccc}
\toprule
Pattern & $\mathrm{SR}$ & TNN & SNN & LNOP & FTNN & TTNN & NCRTC & GTNLN & CNNM & RCNNM \\
\midrule
\multirow{3}{*}{Random} & 0.5 & 29.00/0.6170 & 29.44/0.6478 & \underline{30.75}/0.8445 & 29.49/0.7017 & 29.97/0.7449 & 26.48/0.7964 & 29.14/0.8314 & 30.34/\underline{0.8924} & \textbf{31.52}/\textbf{0.8957} \\
 & 0.6 & 27.69/0.5544 & 30.33/0.6849 & \underline{31.70}/0.8497 & 23.28/0.3648 & 31.52/0.7665 & 27.73/0.8287 & 31.10/0.8619 & 31.27/\underline{0.9135} & \textbf{33.24}/\textbf{0.9180} \\
 & 0.7 & 26.48/0.5093 & 31.18/0.7157 & \underline{32.96}/0.8600 & 22.18/0.3385 & 31.86/0.7498 & 29.14/0.8480 & 32.19/0.8803 & 31.51/\underline{0.9103} & \textbf{33.98}/\textbf{0.9368} \\
\addlinespace[1pt]
\multirow{3}{*}{Scratch} & 0.5 & 25.13/0.4851 & 24.93/0.3855 & 29.01/0.7161 & 24.22/0.3235 & 25.05/0.5376 & 23.54/0.4908 & 23.45/0.5025 & \underline{29.53}/\underline{0.8658} & \textbf{29.80}/\textbf{0.8767} \\
 & 0.6 & 26.06/0.5057 & 27.46/0.4913 & 30.25/0.7410 & 24.26/0.3134 & 30.12/0.6974 & 25.70/0.5471 & 28.37/0.7313 & \underline{30.50}/\underline{0.8434} & \textbf{30.93}/\textbf{0.8635} \\
 & 0.7 & 27.50/0.5521 & 30.46/0.6444 & 31.27/0.7679 & 23.87/0.2981 & \underline{31.49}/0.8423 & 28.62/0.6780 & 30.54/0.8133 & 30.72/\underline{0.8745} & \textbf{32.19}/\textbf{0.8748} \\
\addlinespace[1pt]
\multirow{3}{*}{Grid} & 0.5 & 25.18/0.4278 & 23.45/0.3431 & 28.53/0.6902 & 22.56/0.3168 & 25.13/0.4865 & 24.55/0.4977 & 26.91/0.7387 & \underline{29.92}/\underline{0.8857} & \textbf{30.40}/\textbf{0.8972} \\
 & 0.6 & 25.10/0.4255 & 24.71/0.4041 & 29.03/0.7246 & 22.00/0.3119 & 25.89/0.5082 & 25.42/0.5613 & 27.91/0.7755 & \underline{30.58}/\underline{0.9013} & \textbf{31.33}/\textbf{0.9155} \\
 & 0.7 & 24.79/0.4245 & 26.33/0.4976 & 30.27/0.7964 & 21.26/0.3082 & 26.58/0.5161 & 26.59/0.6360 & 29.38/0.8209 & \underline{31.15}/\underline{0.9054} & \textbf{32.39}/\textbf{0.9299} \\
\bottomrule
\end{tabular}%
}
\vspace{-2pt}
\end{table*}

\subsection{Experimental Setup}
\label{sec:experimental-setup}

We compare RCNNM with TNN \cite{jiang2019rtc}, SNN \cite{huang2014provable}, LNOP \cite{chen2020lnop}, FTNN \cite{jiang2020ftnn}, TTNN \cite{song2020robust}, NCRTC \cite{zhao2020ncrtc}, and GTNLN \cite{shu2026gtnln}. To examine the effect of the boundary condition, we also consider the corresponding circular-boundary version obtained by replacing the reflective lifting $\A$ with the periodic convolutional lifting $\A_{\mathrm{circ}}$ used in CNNM \cite{liu2023cnnm}, while keeping the sparse-error model, constraints, parameter selection, and ADMM solver unchanged. This circular-boundary version is denoted by CNNM throughout the experiments.

Published or author-recommended parameters are used for the competing methods whenever available. For GTNLN, only the input interface is adapted to the image tensors; its objective and update rules are left unchanged. All methods are evaluated on identical degraded observations without post-processing. In the quantitative tables, the best and second-best results are shown in boldface and underlined, respectively.

For image data, $\mathcal L_0\in\RR^{m\times n\times b}$, with $b=3$ for RGB images and $b=10$ for the multispectral experiments. Intensities are normalized to $[0,1]$, and the sampling ratio is $\mathrm{SR}=|\Oset|/(mnb)$. The corruption density is denoted by $\gamma$; corruption locations are shared across channels or bands, and corrupted values are independently drawn from $\mathrm{Unif}[0,1]$. 
Three missing patterns are considered: random missing, grid missing formed by uniformly distributed missing rows and columns, and scratch missing obtained by rotating the line-mask construction by $45^\circ$. The latter two patterns, together with channel-shared corruption supports, deliberately depart from the independence assumptions in Theorem~\ref{thm:core-main}; the real-image experiments should therefore be interpreted as robustness tests rather than direct verification of the probabilistic recovery theorem.

Reconstruction quality is measured by PSNR and SSIM \cite{wang2004ssim}. Because the proposed modification acts at the finite image boundary, we additionally report boundary PSNR (BPSNR) and boundary SSIM (BSSIM) on a strip of width $w=5$ pixels. Let
\begin{equation}
\Bset_w
=
\{(i,j,k): i\le w\ \text{or}\ i>m-w\ \text{or}\ j\le w\ \text{or}\ j>n-w\}.
\label{eq:boundary-set}
\end{equation}
BPSNR is the PSNR computed on $\Bset_w$, and BSSIM is the mean SSIM over the same boundary region. For synthetic tensors, recovery accuracy is measured by
\begin{equation}
\mathrm{RSE}
=
\frac{\norm{\widehat{\mathcal L}-\mathcal L_0}_F}{\norm{\mathcal L_0}_F}.
\label{eq:rse}
\end{equation}


\subsection{Synthetic Experiments}
\label{sec:synthetic}

For the exact-recovery test, clean tensors are generated from separable low-frequency cosine components and normalized before sampling and corruption. The number of components is adjusted to obtain the prescribed reflective lifted rank $r=\rank(\A(\mathcal L_0))$. We consider $(r,p,\gamma)=(0.05n,0.9,0.1)$ and $(0.1n,0.8,0.2)$ for $n\in\{100,200,300\}$. Table~\ref{tab:synthetic} shows that the recovered tensors match the target lifted rank in all six cases, with RSE values between $10^{-7}$ and $10^{-6}$, indicating near-exact numerical recovery in the tested regimes.

We next examine empirical phase transitions on $100\times100\times10$ tensors using a $[5,5,5]$ reflective kernel. For each $(r,\gamma,p)$ combination, ten independent trials are performed and recovery is declared successful when $\mathrm{RSE}<10^{-2}$. As shown in Fig.~\ref{fig:phase-transition}, the successful-recovery region shrinks with increasing rank or corruption and expands with the sampling probability, in agreement with the qualitative dependence predicted by the recovery conditions.

\subsection{Parameter Settings and Convergence}
\label{sec:parameter-sensitivity}

Unless otherwise stated, the ADMM parameters are $\tau_1=5\times10^{-3}$, $\tau_2=10^{-1}$, $\varrho=1.05$, and $\tau_{\max}=100$. We use at most 300 iterations and the stopping tolerance $\varepsilon_{\mathrm{tol}}=10^{-4}$ in \eqref{eq:stopping-residuals}.

For RGB data, we use kernels of size $[k,k,3]$. Figure~\ref{fig:k-choice} shows the sensitivity to $k$ on Flower at $\mathrm{SR}=0.4$ and $\gamma=0.1$ under random missing. Reconstruction quality improves as the spatial kernel grows, at the cost of increased running time; we therefore set $k=7$ in the subsequent RGB reconstruction experiments. The spectral diagnostic in Section~\ref{sec:spectral-compactness} uses a larger $[10,10,3]$ kernel to resolve the lifted spectrum more clearly.

For each RGB degradation setting, $\lambda$ is selected once by grid search on an additional color image excluded from the test set and is then fixed across the five BSDS test images. Figure~\ref{fig:parameter-choice} illustrates both the selected parameter values and the sensitivity of the reconstruction quality to $\lambda$. Panel (a) shows the optimal $\lambda$ over the $(\gamma,\mathrm{SR})$ grid. Panel (b) shows the PSNR landscape over $(\mathrm{SR},\lambda)$ at fixed $\gamma=0.20$, whereas panel (c) shows the PSNR landscape over $(\gamma,\lambda)$ at fixed $\mathrm{SR}=0.80$. The results show that the preferred regularization level changes systematically with the degradation setting, motivating the setting-dependent parameter selection used in the subsequent experiments.

Figure~\ref{fig:convergence} illustrates the numerical behavior of the solver at $\mathrm{SR}=0.3$ and $\gamma=0.1$ under random-valued and salt-and-pepper corruption. The same initialization, stopping criterion, and penalty-parameter update are used throughout the remaining experiments.

\subsection{Spectral Compactness}
\label{sec:spectral-compactness}

To isolate the effect of the boundary extension before reconstruction, we compare the spectra of circular and reflective liftings on the clean BSDS images using the same $[10,10,3]$ kernel. Let $\{\sigma_i\}$ be the singular values of a lifted matrix and define
\begin{equation}
\pi_i=\frac{\sigma_i^2}{\sum_j\sigma_j^2},
\qquad
r_{\mathrm{eff}}
=
\Exp\!\left(-\sum_i \pi_i\Log \pi_i\right),
\label{eq:energy-effective-rank}
\end{equation}
where $0\Log0$ is interpreted as zero. We further denote by $r_{99}$ and $r_{99.9}$ the smallest numbers of leading singular values required to contain at least $99\%$ and $99.9\%$ of the squared singular-value energy, respectively.

Table~\ref{tab:spectral} shows lower values for the reflective lifting on every image and under all three measures. On average, $r_{\mathrm{eff}}$, $r_{99}$, and $r_{99.9}$ decrease from $1.6055$, $13.6$, and $68.20$ to $1.5216$, $11.6$, and $65.40$, respectively. Thus, for the tested images, reflective continuation yields a more concentrated lifted spectrum than circular continuation.

\subsection{Color Images}
\label{sec:color-quantitative}

We evaluate color-image recovery on Flower, Bear, Goose, House, and Butterfly from BSDS \cite{martin2001bsds}, each represented as a $321\times481\times3$ tensor. We consider $\gamma\in\{0.1,0.2,0.3\}$, $\mathrm{SR}\in\{0.4,0.6,0.8\}$, and the three missing patterns defined above. Figure~\ref{fig:color-overall} shows representative reconstructions, while Tables~\ref{tab:color-global} and~\ref{tab:color-boundary-metrics} report the global and boundary-region averages, respectively. The two evaluations are complementary: the global metrics measure overall reconstruction quality, whereas the boundary metrics directly probe the part of the image affected by the choice between circular and reflective boundary extension.

The quantitative results reveal a consistent distinction between overall reconstruction quality and boundary fidelity. In Table~\ref{tab:color-global}, RCNNM attains the highest mean PSNR in $26$ of the $27$ degradation settings. Relative to CNNM, the average PSNR gains are $0.49$, $0.67$, and $0.36$ dB under random, scratch, and grid missing, respectively, showing that replacing circular extension by reflection improves the reconstruction without sacrificing global accuracy. More importantly, the advantage becomes substantially more pronounced when the evaluation is restricted to the five-pixel boundary strip. As shown in Table~\ref{tab:color-boundary-metrics}, RCNNM achieves the highest BPSNR in $26$ of the $27$ settings and the highest BSSIM in $26$ of the $27$ settings. Its mean BPSNR gains over CNNM increase to $3.26$, $2.62$, and $1.94$ dB for random, scratch, and grid missing, respectively, with individual improvements reaching $4.16$ dB for random missing and $4.45$ dB for scratch missing. The much larger separation in BPSNR than in global PSNR is particularly informative: boundary pixels occupy only a small fraction of the image, so improvements localized near the frame are strongly diluted by a full-image average. This behavior is therefore consistent with the intended role of the reflective lifting, whose principal difference from the circular construction lies precisely in how finite boundaries are extended. Figure~\ref{fig:color-boundary} provides the corresponding visual evidence on Goose and House, where the reflective reconstruction suppresses boundary artifacts more effectively while the interior reconstruction remains comparable. Taken together, the global and boundary results indicate that the gain of RCNNM is not a uniform numerical shift, but a targeted improvement in the region where circular continuation introduces the strongest modeling mismatch.

\newpage

\subsection{Multispectral Images}
\label{sec:multispectral}

We further evaluate RCNNM on Cloth, Superballs, and Clay from the CAVE database \cite{yasuma2010cave}. Each $512\times512\times31$ scene is resized to $256\times256$, and the first ten spectral bands are retained. We set $\gamma=0.2$, $\mathrm{SR}\in\{0.5,0.6,0.7\}$, and use a $[9,9,5]$ reflective kernel. The three missing patterns are generated as in the RGB experiments. 

For each degradation setting, $\lambda$ is selected once on a reference scene and then fixed across the three test scenes. Tables~\ref{tab:ms-global} and~\ref{tab:ms-boundary} report mean global and boundary metrics, respectively.

Figure~\ref{fig:ms-clay-visual} gives a representative spatial--spectral comparison on Clay at $\mathrm{SR}=0.6$ and $\gamma=0.2$, including the recovered spectrum at a selected pixel over the first ten bands. The global results in Table~\ref{tab:ms-global} show that RCNNM improves the circular-boundary version by an average of $0.76$, $0.42$, and $0.38$ dB under random, scratch, and grid missing, respectively. The corresponding BPSNR gains in Table~\ref{tab:ms-boundary} are larger: $1.87$, $0.72$, and $0.82$ dB. Under grid missing, RCNNM also improves the circular-boundary version by $0.0118$ in SSIM and $0.0167$ in BSSIM on average.

Across all baselines, GTNLN and TTNN yield slightly higher global PSNR at $\mathrm{SR}=0.7$ for random and scratch missing, respectively; RCNNM nevertheless attains the highest BPSNR and BSSIM in both cases. Under random missing, its BPSNR gain over the circular-boundary version increases from $1.18$ dB at $\mathrm{SR}=0.5$ to $2.47$ dB at $\mathrm{SR}=0.7$. The CAVE results therefore reproduce the main trend observed on RGB images: replacing circular continuation by reflection produces modest but consistent global improvements and a larger gain near the spatial boundary.

\section{Conclusion}
\label{sec:conclusion}

We proposed RCNNM, a robust tensor completion framework based on endpoint-nonrepeating reflective convolutional lifting. Reflection changes the Gram geometry from $\A_{\mathrm{circ}}^*\A_{\mathrm{circ}}=K\Iop$ to $\A_{\mathrm{ref}}^*\A_{\mathrm{ref}}=\Dop$, which supports both the recovery analysis and the closed-form tensor update. Under random sampling and random-sign sparse corruption, we established exact recovery guarantees and stability to bounded dense perturbations. The convex model is solved by a two-block ADMM, with singular-value thresholding implemented through the smaller right Gram matrix.

Experiments on synthetic tensors, BSDS color images, and CAVE multispectral images show that reflective lifting consistently outperforms the circular counterpart, with the clearest gains in boundary-restricted metrics. Future work will consider practical parameter selection and nonconvex low-rank regularization. Replacing the nuclear norm with a tighter nonconvex spectral surrogate may better approximate rank and reduce shrinkage bias on dominant singular values, but would require corresponding optimization, convergence, and recovery analysis.

\appendix
\renewcommand{\thetheorem}{\Alph{section}.\arabic{theorem}}

\section{Reflective Geometry and Deterministic Recovery}
\label{app:det-proof}

\subsection{Weighted Gram Geometry}
\label{app:pub-geometry}

\begin{proof}[Proof of Proposition~\ref{thm:core-gram}]
Index the rows of a lifted matrix by $\bm{i}\in\Iset$ and its columns by
$\bm{t}\in\Kset$, consistently with \eqref{eq:core-A}.  From the adjoint
identity and \eqref{eq:core-Rt},
\[
\ip{\A(\mathcal X)}{\bm{Z}}
=
\sum_{\bm{i}\in\Iset}\sum_{\bm{t}\in\Kset}
\mathcal X_{\phi(\bm{i}+\bm{t})}\bm{Z}_{\bm{i},\bm{t}}.
\]
Collecting the coefficients associated with each original entry $\mathcal X_{\bm{u}}$ gives
\[
[\A^*(\bm{Z})]_{\bm{u}}=
\sum_{\bm{i}\in\Iset}\sum_{\bm{t}\in\Kset}
\1_{\{\phi(\bm{i}+\bm{t})=\bm{u}\}}\bm{Z}_{\bm{i},\bm{t}}.
\]
For $\bm{Z}=\A(\mathcal X)$, whenever $\phi(\bm{i}+\bm{t})=\bm{u}$, the corresponding lifted entry is
$\bm{Z}_{\bm{i},\bm{t}}=\mathcal X_{\bm{u}}$.  Hence
\[
[\A^*(\A(\mathcal X))]_{\bm{u}}
=
\left(\sum_{\bm{i},\bm{t}}\1_{\{\phi(\bm{i}+\bm{t})=\bm{u}\}}\right)\mathcal X_{\bm{u}}
=
d_{\bm{u}}\mathcal X_{\bm{u}},
\]
where the last equality is \eqref{eq:core-multiplicity-count}.  This proves
\eqref{eq:core-gram}.  The zero offset maps each coordinate to itself, so
$d_{\bm{u}}\ge1$.  Moreover,
\[
\norm{\A(\mathcal X)}_F^2
=
\ip{\A(\mathcal X)}{\A(\mathcal X)}
=
\ip{\mathcal X}{\A^*\A(\mathcal X)}
=
\sum_{\bm{u}}d_{\bm{u}}\mathcal X_{\bm{u}}^2.
\]
Bounding each $d_{\bm{u}}$ between $d_{\min}$ and $d_{\max}$ proves
\eqref{eq:core-weighted}.  Thus $\A$ is injective, and pulling the nuclear
norm back through this injective linear map gives a norm.
\end{proof}

For the clean observed set $\Lset$, define
\begin{equation}
\alpha
=
\Inf_{\substack{\mathcal H\in\V\\\mathcal H\ne0}}
\frac{\norm{\mathscr P_\Lset (\mathcal H)}_F}{\norm{\mathcal H}_F}.
\label{eq:core-alpha}
\end{equation}

\subsection{Deterministic Identifiability}
\label{app:pub-identifiability}

\begin{proposition}[Identifiability]
\label{prop:pub-ident}
There is no nonzero feasible ambiguity
\[
\mathcal H\in\V,\qquad
\supp(\mathcal J)\subseteq\Gset,\qquad
\mathscr P_\Oset(\mathcal H+\mathcal J)=0
\]
if and only if $\alpha>0$.
\end{proposition}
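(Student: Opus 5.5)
The plan is to prove the two directions separately, using only the partition of the observed set $\Oset$ into the clean observed part $\Lset$ and the corruption support $\Gset$, so that $\Oset=\Lset\cup\Gset$ with $\Lset\cap\Gset=\varnothing$. A "nonzero feasible ambiguity" is a pair $(\mathcal H,\mathcal J)\ne(0,0)$ with $\mathcal H\in\V$, $\supp(\mathcal J)\subseteq\Gset$, and $\mathscr P_\Oset(\mathcal H+\mathcal J)=0$.

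The first step is a reduction. Because $\supp(\mathcal J)\subseteq\Gset\subseteq\Oset$, we have $\mathscr P_\Oset(\mathcal J)=\mathcal J$, and $\mathscr P_\Lset(\mathcal J)=0$. Splitting the constraint over $\Lset$ and $\Gset$ shows that it is equivalent to the pair of conditions
\[
\mathscr P_\Lset(\mathcal H)=0
\qquad\text{and}\qquad
\mathcal J=-\mathscr P_\Gset(\mathcal H).
\]
Hence $\mathcal J$ is determined by $\mathcal H$. In particular, $\mathcal H=0$ forces $\mathcal J=0$, so an ambiguity is nonzero exactly when $\mathcal H\ne0$.

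For the direction "$\alpha>0$ implies no ambiguity": given a feasible pair, the reduction gives $\mathscr P_\Lset(\mathcal H)=0$ with $\mathcal H\in\V$. If $\mathcal H\ne0$, then the definition \eqref{eq:core-alpha} gives $\alpha\le\norm{\mathscr P_\Lset(\mathcal H)}_F/\norm{\mathcal H}_F=0$, which is a contradiction. So $\mathcal H=0$, and then $\mathcal J=0$ as well.

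For the converse, assume $\alpha=0$; note that $\alpha\ge0$ always. The only delicate point is that an infimum of $0$ must actually be attained. Since $\V$ is a finite-dimensional subspace with $s>0$, the ratio in \eqref{eq:core-alpha} is scale invariant, so the infimum can be taken over the unit sphere of $\V$. That sphere is compact and $\mathcal H\mapsto\norm{\mathscr P_\Lset(\mathcal H)}_F$ is continuous, so the infimum is attained at some unit-norm $\mathcal H\in\V$ with $\mathscr P_\Lset(\mathcal H)=0$. Setting $\mathcal J=-\mathscr P_\Gset(\mathcal H)$ then gives $\supp(\mathcal J)\subseteq\Gset$ and
\[
\mathscr P_\Oset(\mathcal H+\mathcal J)=\mathscr P_\Lset(\mathcal H)+\mathscr P_\Gset(\mathcal H)-\mathscr P_\Gset(\mathcal H)=0.
\]
This is a nonzero ambiguity, which proves the contrapositive.

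The argument is entirely elementary. The main care needed is the attainment step, which could equally be handled by viewing $\alpha^2$ as the smallest eigenvalue of $\mathscr P_\V\mathscr P_\Lset\mathscr P_\V$ restricted to $\V$. One should also note the degenerate case $\V=\{0\}$, where $\alpha=+\infty$ by convention and no nonzero ambiguity exists; this case is consistent with the statement and is excluded by the standing assumption $s>0$.
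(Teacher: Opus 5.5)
Your proof is correct and follows essentially the same route as the paper. In the forward direction you use $\mathscr P_\Lset(\mathcal J)=0$ to force $\mathscr P_\Lset(\mathcal H)=0$ and hence $\mathcal H=0$; in the converse you attain the infimum by compactness of the unit sphere of $\V$ and set $\mathcal J=-\mathscr P_\Gset(\mathcal H)$. Your explicit reduction of the constraint to $\mathcal J=-\mathscr P_\Gset(\mathcal H)$ and your remark on the degenerate case $\V=\{0\}$ are harmless, slightly more careful additions.
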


\begin{proof}
Assume first that $\alpha>0$.  Since $\supp(\mathcal J)\subseteq\Gset$ and
$\Lset\cap\Gset=\varnothing$, $\mathscr P_\Lset (\mathcal J)=0$.  Feasibility gives
$\mathscr P_\Lset (\mathcal H)=-\mathscr P_\Lset (\mathcal J)=0$.  Since $\mathcal H\in\V$, the definition of $\alpha$
forces $\mathcal H=0$.  Feasibility then gives $\mathscr P_\Oset (\mathcal J)=0$, while
$\supp(\mathcal J)\subseteq\Gset\subseteq\Oset$, so $\mathcal J=0$.

Conversely, suppose $\alpha=0$.  The unit sphere of finite-dimensional
$\V$ is compact, so the infimum in \eqref{eq:core-alpha} is attained.
There is therefore a nonzero $\mathcal H\in\V$ with $\mathscr P_\Lset (\mathcal H)=0$.  Setting
$\mathcal J=-\mathscr P_\Gset (\mathcal H)$ gives $\supp(\mathcal J)\subseteq\Gset$ and
\[
\mathscr P_\Oset(\mathcal H+\mathcal J)
=
\mathscr P_\Lset (\mathcal H)+\mathscr P_\Gset (\mathcal H)-\mathscr P_\Gset (\mathcal H)
=
0.
\]
Thus $(\mathcal H,\mathcal J)$ is a nonzero feasible ambiguity.
\end{proof}

\subsection{Strict Dual-Certificate Conditions}
\label{app:pub-dual}

\begin{lemma}[Strict common-dual certificate]
\label{thm:core-strict}
Assume $\alpha>0$.  Suppose there exist $\bm{W},\mathcal F,\mathcal Y$ such that
\[
\mathscr P_{\Tsp}(\bm{W})=0,\qquad \norm{\bm{W}}_2<1,
\]
\[
\mathscr P_\Gset (\mathcal F)=\mathscr P_{\Oset^c}(\mathcal F)=0,\qquad
\norm{\mathscr P_\Lset (\mathcal F)}_\infty<1,
\]
and
\begin{equation}
\mathscr P_{\Oset^c}(\mathcal Y)=0,\qquad
\A^*(\bm{U}\bm{V}^\trans+\bm{W})
=
\mathcal Y
=
\lambda(\mathcal S_0^{\mathrm{sgn}}+\mathcal F).
\label{eq:core-common-dual}
\end{equation}
Then $(\mathcal L_0,\mathcal S_0)$ is the unique minimizer of
\eqref{eq:core-rcnnm}.
\end{lemma}

We use the following strict subgradient inequalities.

\begin{lemma}[Strict subgradient margins]
\label{lem:pub-subgradient}
If $\bm{W}\in \Tsp^\perp$ with $\norm{\bm{W}}_2<1$, then for every matrix $\bm{Z}$,
\begin{align}
\norm{\bm{X_0}+\bm{Z}}_*
\ge{}&
\norm{\bm{X_0}}_*
+\ip{\bm{U}\bm{V}^\trans+\bm{W}}{\bm{Z}}\nonumber\\
&+
(1-\norm{\bm{W}}_2)\norm{\mathscr P_{\Tsp^\perp}(\bm{Z})}_*.
\label{eq:pub-nuclear-margin}
\end{align}
If $\mathcal F$ is supported on $\Lset$ and
$\norm{\mathcal F}_\infty<1$, then for every tensor $\mathcal J$,
\begin{align}
\norm{\mathcal S_0+\mathcal J}_1
\ge{}&
\norm{\mathcal S_0}_1+\ip{\mathcal S_0^{\mathrm{sgn}}+\mathcal F}{\mathcal J}\nonumber\\
&+
(1-\norm{\mathcal F}_\infty)\norm{\mathscr P_\Lset (\mathcal J)}_1.
\label{eq:pub-l1-margin}
\end{align}
\end{lemma}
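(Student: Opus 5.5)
Both inequalities follow from the duality between a norm and its dual norm, applied to one well-chosen extra piece of subgradient. I treat the two claims separately.

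For the nuclear-norm margin \eqref{eq:pub-nuclear-margin}, I use that the spectral norm is dual to the nuclear norm. Pick $\bm W_0\in\Tsp^\perp$ with $\norm{\bm W_0}_2\le1$ and $\ip{\bm W_0}{\mathscr P_{\Tsp^\perp}(\bm Z)}=\norm{\mathscr P_{\Tsp^\perp}(\bm Z)}_*$. Concretely, if $\mathscr P_{\Tsp^\perp}(\bm Z)=\bm U_\perp\bm\Sigma_\perp\bm V_\perp^\trans$ is its SVD, take $\bm W_0=\bm U_\perp\bm V_\perp^\trans$. Since $\mathscr P_{\Tsp^\perp}(\bm Z)=(\bm I-\bm P_U)\bm Z(\bm I-\bm P_V)$, the columns of $\bm U_\perp$ are orthogonal to $\bm U$ and the columns of $\bm V_\perp$ are orthogonal to $\bm V$. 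Hence $\mathscr P_{\Tsp}(\bm W_0)=0$ by \eqref{eq:prelim-tangent-proj}, and $\norm{\bm W_0}_2\le1$.

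Next I set $\bm W'=\bm W+(1-\norm{\bm W}_2)\bm W_0$. It lies in $\Tsp^\perp$, and the triangle inequality gives $\norm{\bm W'}_2\le\norm{\bm W}_2+(1-\norm{\bm W}_2)=1$. By \eqref{eq:prelim-nuclear-subgrad}, $\bm U\bm V^\trans+\bm W'\in\partial\norm{\bm X_0}_*$. The subgradient inequality then yields $\norm{\bm X_0+\bm Z}_*\ge\norm{\bm X_0}_*+\ip{\bm U\bm V^\trans+\bm W}{\bm Z}+(1-\norm{\bm W}_2)\ip{\bm W_0}{\bm Z}$. Because $\bm W_0\in\Tsp^\perp$, we have $\ip{\bm W_0}{\bm Z}=\ip{\bm W_0}{\mathscr P_{\Tsp^\perp}(\bm Z)}=\norm{\mathscr P_{\Tsp^\perp}(\bm Z)}_*$, which is the claimed inequality.

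The $\ell_1$ margin \eqref{eq:pub-l1-margin} follows the same pattern, working entrywise. I assume, as the paper does, that $\supp(\mathcal S_0)=\Gset$ and that $\Lset\cap\Gset=\varnothing$. Set $\mathcal F_0=\sgn(\mathscr P_\Lset(\mathcal J))$, which is supported on $\Lset$ with $\norm{\mathcal F_0}_\infty\le1$. Let $\mathcal F'=\mathcal F+(1-\norm{\mathcal F}_\infty)\mathcal F_0$; it is supported on $\Lset$ with $\norm{\mathcal F'}_\infty\le1$.

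The point that needs care is checking that $\mathcal S_0^{\mathrm{sgn}}+\mathcal F'$ is a subgradient of $\norm{\cdot}_1$ at $\mathcal S_0$. On $\Gset$ it equals $\sgn(\mathcal S_0)$, since $\mathcal F'$ vanishes there. Off $\Gset$ we have $\mathcal S_0=0$ and the entries of $\mathcal F'$ have magnitude at most one. This is where disjointness of $\Lset$ and $\Gset$ is used. If $\mathcal S_0$ had zeros inside $\Gset$, the argument would still go through, because $\mathcal S_0^{\mathrm{sgn}}$ is $0$ there and $\mathcal F'$ vanishes there as well.

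The subgradient inequality then gives the bound with the extra term $(1-\norm{\mathcal F}_\infty)\ip{\mathcal F_0}{\mathcal J}$. Since $\ip{\mathcal F_0}{\mathcal J}=\norm{\mathscr P_\Lset(\mathcal J)}_1$, this completes the $\ell_1$ margin.

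Neither argument involves a real obstacle. The only step needing verification is that the spectral "dual" direction $\bm W_0$ lies in $\Tsp^\perp$, and that follows immediately from the form of $\mathscr P_{\Tsp^\perp}$ in \eqref{eq:prelim-tangent-proj-perp}.
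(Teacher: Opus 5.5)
Your proof is correct and follows the paper's argument: the polar factor of $\mathscr P_{\Tsp^\perp}(\bm Z)$ (resp.\ the sign pattern of $\mathscr P_\Lset(\mathcal J)$) is used as the free part of a subgradient of the nuclear (resp.\ $\ell_1$) norm. The only difference is cosmetic. You fold $\bm W$ into the single subgradient $\bm U\bm V^\trans+\bm W+(1-\norm{\bm W}_2)\bm W_0$ and check that it is admissible via the triangle inequality. The paper instead uses $\bm U\bm V^\trans+\bm W_0$ alone and then absorbs $\ip{\bm W}{\bm Z}$ through the bound $\norm{\bm W}_2\norm{\mathscr P_{\Tsp^\perp}(\bm Z)}_*$.
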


\begin{proof}
Choose a polar factor of $\mathscr P_{\Tsp^\perp}(\bm{Z})$ whose spectral norm is one and
add it to $\bm{U}\bm{V}^\trans$ in the standard nuclear-norm subgradient
description; the contribution of $\bm{W}$ can reduce the resulting normal
term by at most $\norm{\bm{W}}_2\norm{\mathscr P_{\Tsp^\perp}(\bm{Z})}_*$.  The $\ell_1$ statement
is the coordinatewise analogue: choose the signs of the clean-set
perturbation as the free part of the $\ell_1$ subgradient, and use
$\abs{\mathcal F_{\bm{u}}}<1$.
\end{proof}

\begin{proof}[Proof of Lemma~\ref{thm:core-strict}]
Let $(\mathcal H,\mathcal J)$ be a feasible perturbation and define
\[
x=\norm{\mathscr P_{\Tsp^\perp}(\A(\mathcal H))}_*,
\qquad
y=\norm{\mathscr P_\Lset (\mathcal J)}_1.
\]
Applying Lemma~\ref{lem:pub-subgradient} and
\eqref{eq:core-common-dual} gives
\[
\Delta f
\ge
(1-\norm{\bm{W}}_2)x
+
\lambda(1-\norm{\mathscr P_\Lset (\mathcal F)}_\infty)y
+
\ip{\mathcal Y}{\mathcal H+\mathcal J}.
\]
Since $\mathcal Y$ is supported on $\Oset$ and
$\mathscr P_\Oset(\mathcal H+\mathcal J)=0$, the last term vanishes.  Thus a nonzero perturbation
with $x>0$ or $y>0$ strictly increases the objective.

If $x=y=0$, then $\A(\mathcal H)\in\Tsp$, hence $\mathcal H\in\V$.  Also
$\mathscr P_\Lset (\mathcal J)=0$, so feasibility gives $\mathscr P_\Lset (\mathcal H)=0$.  The condition
$\alpha>0$ forces $\mathcal H=0$, and then $\mathcal J=0$.  Therefore the true pair is
the unique minimizer.
\end{proof}

The golfing construction below first produces a certificate with a small residual in $\V$.  We therefore state the following approximate version.

\subsection{Approximate Dual Certificate}
\label{app:pub-approx}

\begin{proposition}[Approximate dual certificate]
\label{thm:core-approx}
Assume $\alpha>0$ and $\beta_{\Tsp}>0$.  Suppose that
$\bm{W}\in\RR^{N\times K}$, $\mathcal F,\mathcal R\in\Hh$ satisfy
\begin{equation}
\mathscr P_{\Tsp}(\bm{W})=0,\qquad
\mathscr P_\Gset (\mathcal F)=0,\qquad
\mathscr P_{\Oset^c}(\mathcal F)=0,
\qquad
\mathcal R\in\V,
\label{eq:core-approx-support}
\end{equation}
together with the approximate common-dual balance
\begin{equation}
\A^*(\bm{U}\bm{V}^\trans+\bm{W})
=
\lambda(\mathcal S_0^{\mathrm{sgn}}+\mathcal F)+\mathcal R.
\label{eq:core-approx-balance}
\end{equation}
Define
\[
\delta_2=1-\norm{\bm{W}}_2,\qquad
\delta_\infty=1-\norm{\mathscr P_\Lset (\mathcal F)}_\infty,
\]
and assume $\delta_2,\delta_\infty>0$.  If
\begin{equation}
\norm{\mathcal R}_F
<
\alpha\Min\{\beta_{\Tsp}\delta_2,\lambda\delta_\infty\},
\label{eq:core-residual}
\end{equation}
then $(\mathcal L_0,\mathcal S_0)$ is the unique minimizer of
\eqref{eq:core-rcnnm}.
\end{proposition}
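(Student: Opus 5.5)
We follow the proof of Lemma~\ref{thm:core-strict}, but keep track of the residual term. Let $(\mathcal H,\mathcal J)$ be a nonzero feasible perturbation, so $\mathscr P_\Oset(\mathcal H+\mathcal J)=0$ and $\supp(\mathcal J)\subseteq\Oset$. As before, set $x=\norm{\mathscr P_{\Tsp^\perp}(\A(\mathcal H))}_*$ and $y=\norm{\mathscr P_\Lset(\mathcal J)}_1$.

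Lemma~\ref{lem:pub-subgradient} applies because $\norm{\bm W}_2<1$, $\bm W\in\Tsp^\perp$, and $\mathscr P_\Lset(\mathcal F)$ is supported on $\Lset$. We use it with $\mathscr P_\Lset(\mathcal F)$, which equals $\mathcal F$ because $\mathcal F$ vanishes on $\Gset\cup\Oset^c$. Summing the two margins and substituting the balance \eqref{eq:core-approx-balance} gives
\[
\Delta f\ge \delta_2 x+\lambda\delta_\infty y+\ip{\lambda(\mathcal S_0^{\mathrm{sgn}}+\mathcal F)+\mathcal R}{\mathcal H}+\ip{\lambda(\mathcal S_0^{\mathrm{sgn}}+\mathcal F)}{\mathcal J}.
\]
The term $\lambda(\mathcal S_0^{\mathrm{sgn}}+\mathcal F)$ is supported on $\Oset$, so its inner product with $\mathcal H+\mathcal J$ vanishes. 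This leaves
\[
\Delta f\ge\delta_2x+\lambda\delta_\infty y+\ip{\mathcal R}{\mathcal H}.
\]
Since $\mathcal R\in\V$, we have $\ip{\mathcal R}{\mathcal H}=\ip{\mathcal R}{\mathscr P_\V\mathcal H}\ge-\norm{\mathcal R}_F\norm{\mathscr P_\V\mathcal H}_F$.

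It remains to bound $\norm{\mathscr P_\V\mathcal H}_F$ by $x$ and $y$. We split $\mathcal H=\mathcal H_V+\mathcal H_\perp$ with $\mathcal H_V=\mathscr P_\V\mathcal H$.

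For the orthogonal part, note that $\A(\mathcal H_V)\in\Tsp$, so $\mathscr P_{\Tsp^\perp}\A(\mathcal H)=\mathscr P_{\Tsp^\perp}\A(\mathcal H_\perp)$. The definition of $\beta_\Tsp$ then gives
\[
\beta_\Tsp\norm{\mathcal H_\perp}_F\le\norm{\mathscr P_{\Tsp^\perp}\A(\mathcal H)}_F\le x.
\]

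For the $\V$ part, $\alpha$ gives $\alpha\norm{\mathcal H_V}_F\le\norm{\mathscr P_\Lset\mathcal H_V}_F\le\norm{\mathscr P_\Lset\mathcal H}_F+\norm{\mathcal H_\perp}_F$. Feasibility gives $\mathscr P_\Lset\mathcal H=-\mathscr P_\Lset\mathcal J$, and $\norm{\cdot}_F\le\norm{\cdot}_1$, so
\[
\alpha\norm{\mathcal H_V}_F\le y+x/\beta_\Tsp.
\]
Inserting this into the bound for $\Delta f$ yields
\[
\Delta f\ge x\Bigl(\delta_2-\frac{\norm{\mathcal R}_F}{\alpha\beta_\Tsp}\Bigr)+y\Bigl(\lambda\delta_\infty-\frac{\norm{\mathcal R}_F}{\alpha}\Bigr).
\]
By \eqref{eq:core-residual} both coefficients are strictly positive. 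Hence $\Delta f>0$ whenever $x>0$ or $y>0$.

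In the remaining case $x=y=0$, the $\beta_\Tsp$ bound forces $\mathcal H_\perp=0$, so $\mathcal H\in\V$. Also $\mathscr P_\Lset\mathcal J=0$, so feasibility gives $\mathscr P_\Lset\mathcal H=0$. Proposition~\ref{prop:pub-ident}, equivalently $\alpha>0$, then gives $\mathcal H=0$, and feasibility gives $\mathcal J=0$. Therefore $(\mathcal L_0,\mathcal S_0)$ is the unique minimizer.

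The main obstacle is the bound on $\norm{\mathcal H_V}_F$: the residual must be charged against $x$ and $y$, and this works only through the $\beta_\Tsp$ separation together with the $\alpha$ lower bound on $\Lset$. The constants match \eqref{eq:core-residual} once one uses $\norm{\cdot}_F\le\norm{\cdot}_*$ and $\norm{\cdot}_F\le\norm{\cdot}_1$.
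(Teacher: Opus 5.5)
Your proposal is correct and follows the paper's proof essentially step for step. It uses the same quantities $x,y$ and the same splitting $\mathcal H=\mathcal H_\V+\mathcal H_{\V^\perp}$. It then applies the same bounds $\beta_{\Tsp}\norm{\mathcal H_{\V^\perp}}_F\le x$ and $\alpha\norm{\mathcal H_\V}_F\le y+x/\beta_{\Tsp}$ to absorb $\ip{\mathcal R}{\mathcal H}$, and closes with the same identifiability argument when $x=y=0$. You also state explicitly that $\mathscr P_{\Tsp^\perp}\A(\mathcal H_\V)=0$, which is what allows the definition of $\beta_{\Tsp}$ to bound $\norm{\mathcal H_{\V^\perp}}_F$ by $x$; the paper leaves this implicit.
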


\begin{proof}[Proof of Proposition~\ref{thm:core-approx}]
For a feasible perturbation use the same $x,y$.  The approximate balance
\eqref{eq:core-approx-balance} gives
\[
\Delta f
\ge
\delta_2x+\lambda\delta_\infty y+\ip{\mathcal R}{\mathcal H}.
\]
Write $\mathcal H=\mathcal H_\V+\mathcal H_{\V^\perp}$.  Since $\mathcal R\in\V$,
\[
\abs{\ip{\mathcal R}{\mathcal H}}
\le\norm{\mathcal R}_F\norm{\mathcal H_\V}_F.
\]
Feasibility on $\Lset$ gives
\[
\mathscr P_\Lset (\mathcal H_\V)
=
-\mathscr P_\Lset (\mathcal J)-\mathscr P_\Lset (\mathcal H_{\V^\perp}),
\]
hence
\[
\alpha\norm{\mathcal H_\V}_F
\le
y+\norm{\mathcal H_{\V^\perp}}_F.
\]
By the definition of $\beta_{\Tsp}$,
\[
\beta_{\Tsp}\norm{\mathcal H_{\V^\perp}}_F
\le
\norm{\mathscr P_{\Tsp^\perp}(\A(\mathcal H))}_F
\le x.
\]
Therefore
\[
\abs{\ip{\mathcal R}{\mathcal H}}
\le
\frac{\norm{\mathcal R}_F}{\alpha}
\left(y+\frac{x}{\beta_{\Tsp}}\right).
\]
Condition \eqref{eq:core-residual} makes both remaining coefficients
strictly positive.  Equality can therefore occur only for $x=y=0$, and
the identifiability argument above then forces $\mathcal H=\mathcal J=0$.
\end{proof}

\section{Proof of the Exact Recovery Theorem}
\label{app:exact-proof}

Throughout this appendix, $C>0$ denotes a universal numerical constant whose
value may change from line to line unless otherwise specified.

\subsection{Random Sampling and Restricted Clean Observations}
\label{app:pub-random}

The sampling conditions of Theorem~\ref{thm:core-main} imply $q>0$; throughout
this section we therefore work with $0<q\le1$.  Write $\theta=p\gamma$.  When $q<1$, conditional on $\Lset$, each
coordinate in $\Lset^c$ is corrupted independently with probability
\[
\theta'=\frac{\theta}{1-q}.
\]
When $q=1$, equivalently $p=1,\gamma=0$, the clean set is the full
coordinate set and $\Gset=\varnothing$ deterministically.

\begin{lemma}[Equivalent sequential generation]
\label{lem:pub-sequential}
Fix $J\ge1$ and set
\[
q_0=1-(1-q)^{1/J}.
\]
If $q<1$, generate independent batches
$\Lset_j\sim\Ber(q_0)$ and set
$\Lset=\bigcup_{j=1}^J\Lset_j$.  Conditional on $\Lset$, generate
the corruption indicators on $\Lset^c$ independently with probability
$\theta'$, and then assign independent Rademacher signs.  If $q=1$, take
all $\Lset_j$ equal to the full coordinate set and
$\Gset=\varnothing$.  This construction has the same three-state law as
the model in Section~\ref{sec:core-prob}, and
\begin{equation}
q_0\ge q/J.
\label{eq:pub-q0-lower}
\end{equation}
\end{lemma}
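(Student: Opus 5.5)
The proof compares laws coordinatewise. In the original model each coordinate $\bm u$ independently takes one of three states: clean-observed ($\bm u\in\Lset$) with probability $q$, corrupted ($\bm u\in\Gset$) with probability $\theta=p\gamma$, or unobserved with probability $1-p$. Given its state, a corrupted coordinate carries an independent Rademacher sign. I will show the sequential construction produces independent coordinates with the same three-state marginals and the same conditional sign law. The degenerate case $q=1$ is immediate: then $p=1$ and $\gamma=0$, so the original model puts every coordinate in $\Lset$ and none in $\Gset$, which is exactly the prescribed construction. The bound \eqref{eq:pub-q0-lower} also holds there, since $q_0=1\ge 1/J$.

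For $q<1$, the batches $\Lset_j$ are independent Bernoulli$(q_0)$ masks. Hence the events $\{\bm u\in\Lset\}$ are independent across $\bm u$, with
\[
\Pp(\bm u\in\Lset)=1-(1-q_0)^J=1-(1-q)=q .
\]
Conditional on $\Lset$, each $\bm u\notin\Lset$ is corrupted independently with probability $\theta'=\theta/(1-q)$, so
\[
\Pp(\bm u\in\Gset)=(1-q)\theta'=\theta ,
\]
and the remaining mass $1-q-\theta=1-p$ goes to $\Oset^c$.

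It must also be checked that $\theta'\le1$, so that the construction is well defined. This holds because $\theta=p\gamma\le p-p(1-\gamma)+\theta$; more directly, $q+\theta=p\le1$ gives $\theta\le1-q$.

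For joint independence, note that the state of $\bm u$ is a function only of $(\bm 1_{\{\bm u\in\Lset_j\}})_j$, its own corruption coin, and its own sign. These variables are independent across coordinates, because conditioning on $\Lset$ still leaves the corruption coins independent and each coin depends only on whether its own coordinate lies in $\Lset$. The product of coordinatewise laws therefore matches the original model, and the signs are independent Rademacher given $\Gset$ in both models. The magnitudes are deterministic in both, so they match trivially.

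Finally, \eqref{eq:pub-q0-lower} is equivalent to $(1-q)^{1/J}\le 1-q/J$. This follows from Bernoulli's inequality $(1-q/J)^J\ge 1-q$ for $q\in[0,1]$ and $J\ge1$, followed by taking $J$-th roots. Equivalently, it follows from concavity of $x\mapsto x^{1/J}$.

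The only mildly delicate step is the joint-independence bookkeeping, and writing the law of each coordinate's triple as a product measure handles it.
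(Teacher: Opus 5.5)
Your proof is correct and follows the paper's argument: the union of the $J$ batches has success probability $1-(1-q_0)^J=q$, the conditional corruption step gives $(1-q)\theta'=\theta$ with remaining mass $1-p$, independence holds coordinatewise, $q=1$ is the deterministic case, and \eqref{eq:pub-q0-lower} follows from concavity of $x\mapsto x^{1/J}$ (equivalently, Bernoulli's inequality). Your check that $\theta'\le1$ via $q+\theta=p\le1$ is a useful detail the paper leaves implicit. The clause before it, $\theta\le p-p(1-\gamma)+\theta$, is a tautology that does not by itself yield $\theta\le1-q$, so you can drop it.
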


\begin{proof}
For $q<1$, the union indicator is Bernoulli with success probability
$1-(1-q_0)^J=q$.  Conditional on failure of the clean event,
$(1-q)\theta'=\theta$, and the remaining probability is $1-p$.
Independence follows coordinatewise.  For $q=1$ the statement is
deterministic.  Concavity of $x^{1/J}$ on $[0,1]$ gives
$(1-q)^{1/J}\le1-q/J$, proving \eqref{eq:pub-q0-lower}.
\end{proof}

\begin{remark}[Degenerate branches]
The simplified theorem is stated for $1\le r<K$ and $s>0$.  If
$r=0$, the low-rank golfing component is absent.  If $r=K$, the associated
linear space $\Tsp$ is the full matrix space and the positive orthogonal-complement gain used in
the simplified theorem is no longer the relevant parameter.  If
$\gamma=0$, set $\mathcal S_0^{\mathrm{sgn}}=\mathcal Y_s=\bm{W}_s=0$ and omit the sparse certificate.
\end{remark}

For use in the concentration bounds, define
\[
\ell=\Max_{\bm{u}\in\Iset}\norm{\mathscr P_\V (\mathcal E_{\bm{u}})}_F^2
=\frac{\mu_\V s}{N},
\qquad
\mathscr M_{\Lset}
=
\left.\mathscr P_\V \mathscr P_\Lset \mathscr P_\V\right|_\V.
\]

\begin{lemma}[Restricted sampling concentration]
\label{lem:pub-restricted}
For every $0<\delta<1$,
\begin{equation}
\Pp\!\left(
\norm{q^{-1}\mathscr P_\V \mathscr P_\Lset \mathscr P_\V-\mathscr P_\V}_{\op}
\ge\delta
\right)
\le
2s\Exp\!\left(-\frac{3\delta^2q}{8\ell}\right).
\label{eq:pub-restricted-tail}
\end{equation}
On the complementary event,
\[
q(1-\delta)\mathscr I_{\V}
\preceq
\mathscr M_{\Lset}
\preceq
q(1+\delta)\mathscr I_{\V},
\]
and
\begin{equation}
\alpha\ge\sqrt{q(1-\delta)}.
\label{eq:pub-alpha-lower}
\end{equation}
\end{lemma}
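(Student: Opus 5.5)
The plan is to write the deviation operator as a sum of independent, mean-zero, self-adjoint random operators on $\V$ and apply the matrix Bernstein inequality. Let $\eta_{\bm u}=\1_{\{\bm u\in\Lset\}}$. These are independent $\Ber(q)$ variables, because each coordinate is in $\Lset$ independently with probability $q=p(1-\gamma)$. Then
\[
q^{-1}\mathscr P_\V\mathscr P_\Lset\mathscr P_\V-\mathscr P_\V=\sum_{\bm u\in\Iset}\Bigl(\frac{\eta_{\bm u}}{q}-1\Bigr)\mathscr P_\V(\mathcal E_{\bm u})\otimes\mathscr P_\V(\mathcal E_{\bm u})=:\sum_{\bm u}\mathscr Z_{\bm u}.
\]
This uses $\mathscr P_\Lset=\sum_{\bm u}\eta_{\bm u}\mathcal E_{\bm u}\otimes\mathcal E_{\bm u}$ and $\sum_{\bm u}\mathcal E_{\bm u}\otimes\mathcal E_{\bm u}=\Iop$. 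Each $\mathscr Z_{\bm u}$ is self-adjoint with range in $\V$, so we may regard all of them as operators on the $s$-dimensional space $\V$. This is what produces the dimensional factor $s$ rather than $N$.

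Next I bound the Bernstein parameters. For the uniform bound, $\norm{\mathscr Z_{\bm u}}_{\op}\le\max\{q^{-1}-1,1\}\norm{\mathscr P_\V(\mathcal E_{\bm u})}_F^2\le\ell/q$, using $q\le1$. For the variance,
\[
\E\mathscr Z_{\bm u}^2=\frac{1-q}{q}\norm{\mathscr P_\V(\mathcal E_{\bm u})}_F^2\,\mathscr P_\V(\mathcal E_{\bm u})\otimes\mathscr P_\V(\mathcal E_{\bm u}).
\]
Summing over $\bm u$ gives $\norm{\sum_{\bm u}\E\mathscr Z_{\bm u}^2}_{\op}\le\frac{\ell}{q}\norm{\mathscr P_\V}_{\op}=\ell/q$. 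The Bernstein inequality in dimension $s$ then yields the tail
\[
2s\exp\!\left(-\frac{\delta^2/2}{\ell/q+\ell\delta/(3q)}\right)\le2s\exp\!\left(-\frac{3\delta^2q}{8\ell}\right),
\]
since $\delta<1$ gives $1+\delta/3\le4/3$. This is exactly \eqref{eq:pub-restricted-tail}. The only real care needed in this step is to keep the constants so that the exponent matches $3/8$. In the degenerate case $q=1$ the sum is identically zero and the bound is trivial.

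On the complementary event, restricting to $\V$ gives $\norm{q^{-1}\mathscr M_\Lset-\mathscr I_\V}_{\op}<\delta$, which is the two-sided Loewner sandwich. For \eqref{eq:pub-alpha-lower}, take $\mathcal H\in\V$. Since $\mathscr P_\Lset$ is an orthogonal projection,
\[
\norm{\mathscr P_\Lset\mathcal H}_F^2=\ip{\mathcal H}{\mathscr P_\Lset\mathcal H}=\ip{\mathcal H}{\mathscr P_\V\mathscr P_\Lset\mathscr P_\V\mathcal H}\ge q(1-\delta)\norm{\mathcal H}_F^2.
\]
Taking the square root and then the infimum in \eqref{eq:core-alpha} gives $\alpha\ge\sqrt{q(1-\delta)}$.
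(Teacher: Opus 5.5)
Your proposal is correct and follows essentially the same route as the paper. Both write the deviation as centered Bernoulli multiples of the rank-one operators $\mathscr P_\V(\mathcal E_{\bm u})\otimes\mathscr P_\V(\mathcal E_{\bm u})$, apply self-adjoint matrix Bernstein on the $s$-dimensional space $\V$, and bound $\alpha$ through the identity $\norm{\mathscr P_\Lset(\mathcal H)}_F^2=\ip{\mathcal H}{\mathscr M_{\Lset}(\mathcal H)}$ for $\mathcal H\in\V$. The only difference is normalization: you use the $q^{-1}$-scaled sum (uniform and variance parameters both $\ell/q$), while the paper bounds the unscaled sum at level $q\delta$ (parameters $\ell$ and $q\ell$), which gives the same exponent; your check that $1+\delta/3\le 4/3$ produces the constant $3/8$ spells out arithmetic the paper leaves implicit.
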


\begin{proof}
Let $\mathcal V_{\bm{u}}=\mathscr P_\V(\mathcal E_{\bm{u}})$ and
$\eta_{\bm{u}}:=\1_{\{\bm{u}\in\Lset\}}\sim\Ber(q)$.  Then
\[
\mathscr P_\V \mathscr P_\Lset \mathscr P_\V-q\mathscr P_\V
=
\sum_{\bm{u}}(\eta_{\bm{u}}-q)(\mathcal V_{\bm{u}}\otimes \mathcal V_{\bm{u}}).
\]
Each summand has operator norm at most $\ell$, and the variance operator is
bounded by $q\ell \mathscr P_\V$.  Self-adjoint matrix Bernstein
 gives \eqref{eq:pub-restricted-tail}.  The eigenvalue bounds
follow immediately; for $\mathcal H\in\V$,
\[
\norm{\mathscr P_\Lset (\mathcal H)}_F^2
=
\ip{\mathcal H}{\mathscr M_{\Lset} (\mathcal H)}
\ge q(1-\delta)\norm{\mathcal H}_F^2,
\]
which proves \eqref{eq:pub-alpha-lower}.
\end{proof}

\subsection{Golfing Construction for the Low-Rank Certificate}
\label{app:pub-golf}

Let
\[
\begin{aligned}
\mathcal H_0&=\mathscr P_\V(\mathcal B_0),\\
\mathcal Y_j&=q_0^{-1}\mathscr P_{\Lset_j}(\mathcal H_{j-1}),\\
\mathcal H_j&=\mathcal H_{j-1}-\mathscr P_\V(\mathcal Y_j).
\end{aligned}
\]
and define
\[
\mathcal Y_b=\sum_{j=1}^J\mathcal Y_j,\qquad \mathcal R_J=\mathcal H_J.
\]
Because
$\mathscr P_\V (\mathcal Y_b)=\mathscr P_\V (\mathcal B_0)-\mathcal H_J$, the normal correction
\[
\bm{W}_b=\KT^\dagger(\mathscr P_{\V^\perp}(\mathcal Y_b-\mathcal B_0))
\]
satisfies
\[
\A^*(\bm{U}\bm{V}^\trans+\bm{W}_b)=\mathcal Y_b+\mathcal R_J.
\]

Set
\[
\mathscr Q_j
=
\mathscr P_\V-q_0^{-1}\mathscr P_\V \mathscr P_{\Lset_j}\mathscr P_\V.
\]

\begin{lemma}[Golfing contractions]
\label{lem:pub-golf-contract}
For $0<\rho_F<1$,
\[
\Pp(\norm{\mathscr Q_j}_{\op}>\rho_F)
\le
2s\Exp\!\left(-\frac{3\rho_F^2q_0}{8\ell}\right).
\]
Hence on the corresponding event
\[
\norm{\mathcal H_j}_F\le\rho_F^j\norm{\mathcal H_0}_F.
\]
Moreover, if $\mathcal H\in\V$ is independent of the fresh batch
$\Lset_j$, then for every $t_\infty>0$,
\begin{equation}
\norm{\mathscr Q_j(\mathcal H)}_\infty
\le
\left[
\sqrt{\frac{2\ell t_\infty}{q_0}}
+
\frac{2\ell t_\infty}{3q_0}
\right]\norm{\mathcal H}_\infty
\label{eq:pub-inf-contract}
\end{equation}
with probability at least $1-2Ne^{-t_\infty}$.
\end{lemma}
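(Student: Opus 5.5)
The lemma has three parts: a tail bound for $\norm{\mathscr Q_j}_{\op}$, a geometric contraction of the golfing iterates, and an entrywise bound for $\mathscr Q_j$ applied to a fixed tensor. I would prove them in that order.

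\emph{Operator-norm tail.} This is Lemma~\ref{lem:pub-restricted} with $\Lset$ replaced by the batch $\Lset_j$ and $q$ replaced by $q_0$. Each batch $\Lset_j$ is an independent $\Ber(q_0)$ set by Lemma~\ref{lem:pub-sequential}, so write
\[
\mathscr Q_j=-q_0^{-1}\sum_{\bm u}(\eta^{(j)}_{\bm u}-q_0)(\mathcal V_{\bm u}\otimes\mathcal V_{\bm u}),
\qquad \eta^{(j)}_{\bm u}=\1_{\{\bm u\in\Lset_j\}},
\]
which uses $\sum_{\bm u}\mathcal V_{\bm u}\otimes\mathcal V_{\bm u}=\mathscr P_\V$. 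Apply matrix Bernstein on $\V$, which has dimension $s$, exactly as in Lemma~\ref{lem:pub-restricted}. The summands have norm at most $\ell/q_0$ and the variance is at most $\ell/q_0$. For $\rho_F<1$ this gives the tail $2s\exp(-3\rho_F^2q_0/(8\ell))$.

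\emph{Frobenius contraction.} Since $\mathcal H_{j-1}\in\V$, we have $\mathcal H_j=\mathcal H_{j-1}-\mathscr P_\V q_0^{-1}\mathscr P_{\Lset_j}\mathcal H_{j-1}=\mathscr Q_j(\mathcal H_{j-1})$. An induction over $j$ on the intersection of the $J$ events then gives $\norm{\mathcal H_j}_F\le\rho_F^j\norm{\mathcal H_0}_F$.

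\emph{Entrywise bound.} Fix $\mathcal H\in\V$ independent of $\Lset_j$ and a coordinate $\bm v$. Then
\[
[\mathscr Q_j\mathcal H]_{\bm v}=\sum_{\bm u}(1-q_0^{-1}\eta_{\bm u})\,\mathcal H_{\bm u}\,\ip{\mathcal V_{\bm u}}{\mathcal E_{\bm v}},
\]
using $\mathcal H=\mathscr P_\V\mathcal H$ and self-adjointness of $\mathscr P_\V$. This is a sum of independent, mean-zero scalars, so I would apply scalar Bernstein. Two bounds are needed.
\begin{itemize}
\item Each term is bounded by $q_0^{-1}\norm{\mathcal H}_\infty\abs{\ip{\mathcal V_{\bm u}}{\mathcal V_{\bm v}}}\le q_0^{-1}\ell\norm{\mathcal H}_\infty$, by Cauchy--Schwarz.
\item The variance is at most $q_0^{-1}\norm{\mathcal H}_\infty^2\sum_{\bm u}\ip{\mathcal V_{\bm u}}{\mathcal E_{\bm v}}^2=q_0^{-1}\norm{\mathcal H}_\infty^2\norm{\mathcal V_{\bm v}}_F^2\le q_0^{-1}\ell\norm{\mathcal H}_\infty^2$.
\end{itemize}
Bernstein's inequality in the form $\sqrt{2\sigma^2t}+2Bt/3$ then gives \eqref{eq:pub-inf-contract} for one coordinate with failure probability $2e^{-t_\infty}$. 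A union bound over the $N$ coordinates gives $1-2Ne^{-t_\infty}$.

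The only delicate point is the variance step. It requires the identity $\sum_{\bm u}\ip{\mathcal V_{\bm u}}{\mathcal E_{\bm v}}^2=\norm{\mathscr P_\V\mathcal E_{\bm v}}_F^2$, which is exactly what makes the coherence parameter $\ell$ appear. It also requires the independence hypothesis, so that $\mathcal H$ may be conditioned on while $\Lset_j$ stays random. Everything else is routine.
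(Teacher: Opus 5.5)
Your proposal is correct and follows the paper's proof essentially step for step. The operator-norm tail is the matrix-Bernstein estimate of Lemma~\ref{lem:pub-restricted} with $q$ replaced by $q_0$, and the entrywise bound is the same coordinatewise scalar Bernstein with summand bound $\ell\norm{\mathcal H}_\infty/q_0$, variance $\ell\norm{\mathcal H}_\infty^2/q_0$, and a union bound over the $N$ coordinates. You additionally make explicit two details the paper leaves implicit: the identity $\mathcal H_j=\mathscr Q_j(\mathcal H_{j-1})$ behind the Frobenius contraction, and the computation $\sum_{\bm u}\ip{\mathcal V_{\bm u}}{\mathcal E_{\bm v}}^2=\norm{\mathscr P_\V(\mathcal E_{\bm v})}_F^2\le\ell$ behind the variance bound.
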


\begin{proof}
The Frobenius contraction follows from the same self-adjoint Bernstein estimate as Lemma~\ref{lem:pub-restricted}, with $q$ replaced by $q_0$.
For the infinity bound, let
$\eta_{j,\bm{v}}:=\1_{\{\bm{v}\in\Lset_j\}}\sim\Ber(q_0)$ denote the
fresh-batch indicators.  For each output coordinate $\bm{u}$,
\[
[\mathscr Q_j(\mathcal H)]_{\bm{u}}
=
\sum_{\bm{v}}\left(1-\frac{\eta_{j,\bm{v}}}{q_0}\right)
\mathcal H_{\bm{v}}\ip{\mathscr P_\V (\mathcal E_{\bm{u}})}{\mathscr P_\V (\mathcal E_{\bm{v}})}.
\]
Conditional on $\mathcal H$, the summands are independent and mean zero.  Their
variance is at most $\ell\norm{\mathcal H}_\infty^2/q_0$, and each summand is
bounded by $\ell\norm{\mathcal H}_\infty/q_0$.  Two-sided scalar Bernstein and a
union bound over $\bm{u}$ yield \eqref{eq:pub-inf-contract}.
\end{proof}

For the normal leakage define
\[
\bm{N}_{\bm{u}}=\KT^\dagger(\mathscr P_{\V^\perp}(\mathcal E_{\bm{u}})),\qquad
\nu_{\max}=\Max_{\bm{u}}\norm{\bm{N}_{\bm{u}}}_2,
\]
and
\[
\nu^2=
\Max\left\{
\norm{\sum_{\bm{u}}\bm{N}_{\bm{u}}\bm{N}_{\bm{u}}^\trans}_2,\,
\norm{\sum_{\bm{u}}\bm{N}_{\bm{u}}^\trans \bm{N}_{\bm{u}}}_2
\right\}.
\]
\begin{lemma}[Universal normal-atom bounds]
\label{lem:pub-normal-atom}
The normal atoms satisfy
\begin{equation}
\nu_{\max}\le\frac1{\beta_{\Tsp}},
\qquad
\nu^2\le\frac{N}{\beta_{\Tsp}^2}.
\label{eq:pub-normal-atom}
\end{equation}
\end{lemma}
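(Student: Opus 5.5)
The plan is to reduce both bounds to properties of the pseudoinverse $\KT^\dagger$ acting on the single-coordinate inputs $\mathscr P_{\V^\perp}(\mathcal E_{\bm u})$, and then to use the earlier bound $\norm{\KT^\dagger}_{F\to F}\le\beta_{\Tsp}^{-1}$. That bound is stated just before \eqref{eq:core-mub} as a consequence of $\beta_{\Tsp}>0$, so we first verify it briefly. By definition, $\KT=\A^*|_{\Tsp^\perp}$, so its adjoint is $\KT^*=\mathscr P_{\Tsp^\perp}\A|_{\V^\perp}$; this uses the fact that $\A^*$ maps $\Tsp^\perp$ into $\V^\perp$, which holds because $\ip{\A^*\bm Z}{\mathcal H}=\ip{\bm Z}{\A\mathcal H}=0$ for $\mathcal H\in\V$. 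The definition \eqref{eq:core-beta} then says exactly that the smallest singular value of $\KT^*$, and hence of $\KT$ on $\V^\perp$, is at least $\beta_{\Tsp}$. Consequently $\KT^\dagger=\KT^*(\KT\KT^*)^{-1}$ has Frobenius-to-Frobenius norm at most $\beta_{\Tsp}^{-1}$.

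For the first bound we chain elementary inequalities. We have $\norm{\bm N_{\bm u}}_2\le\norm{\bm N_{\bm u}}_F\le\beta_{\Tsp}^{-1}\norm{\mathscr P_{\V^\perp}(\mathcal E_{\bm u})}_F\le\beta_{\Tsp}^{-1}\norm{\mathcal E_{\bm u}}_F=\beta_{\Tsp}^{-1}$. Taking the maximum over $\bm u$ gives $\nu_{\max}\le1/\beta_{\Tsp}$.

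For the variance term, we bound each of the two spectral norms by a trace:
\[
\norm{\sum_{\bm u}\bm N_{\bm u}\bm N_{\bm u}^\trans}_2\le\Tr\Bigl(\sum_{\bm u}\bm N_{\bm u}\bm N_{\bm u}^\trans\Bigr)=\sum_{\bm u}\norm{\bm N_{\bm u}}_F^2 .
\]
The same trace bound applies to the sum of $\bm N_{\bm u}^\trans\bm N_{\bm u}$, since both sums are positive semidefinite and have the same trace. Inserting the per-atom Frobenius bound $\beta_{\Tsp}^{-2}$ and summing over the $N$ coordinates gives $\nu^2\le N/\beta_{\Tsp}^2$.

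If a sharper bound were needed, one could note that $\sum_{\bm u}\norm{\KT^\dagger\mathscr P_{\V^\perp}\mathcal E_{\bm u}}_F^2=\norm{\KT^\dagger}_{\mathrm{HS}}^2\le(N-s)/\beta_{\Tsp}^2$. The crude trace bound, however, already yields the stated estimate.

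The only step requiring care is justifying that $\KT$ is onto $\V^\perp$ with the stated smallest singular value. In particular, one must check that $\KT^*$ is indeed $\mathscr P_{\Tsp^\perp}\A$ restricted to $\V^\perp$, so that $\beta_{\Tsp}$ is its lower frame bound. Everything after that is routine.
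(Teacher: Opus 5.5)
Your proof is correct. The bound $\nu_{\max}\le1/\beta_{\Tsp}$ is argued exactly as in the paper. It follows from $\norm{\bm N_{\bm u}}_2\le\norm{\bm N_{\bm u}}_F\le\norm{\KT^\dagger}_{F\to F}\norm{\mathscr P_{\V^\perp}(\mathcal E_{\bm u})}_F$ together with $\norm{\KT^\dagger}_{F\to F}=1/\sigma_{\min}(\KT^*)\le1/\beta_{\Tsp}$. Your check that $\A^*$ maps $\Tsp^\perp$ into $\V^\perp$ is a detail the paper takes for granted.

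For the variance term you take a different route from the paper. The paper sets $\mathscr J=\KT^\dagger\mathscr P_{\V^\perp}$ and evaluates the quadratic form, $\bm x^\trans\bigl(\sum_{\bm u}\bm N_{\bm u}\bm N_{\bm u}^\trans\bigr)\bm x=\sum_{j=1}^K\norm{\mathscr J^*(\bm x\bm e_j^\trans)}_F^2\le K\norm{\bm x}_2^2/\beta_{\Tsp}^2$. It bounds the other sum by $N/\beta_{\Tsp}^2$ by testing against $\bm e_i\bm y^\trans$, and then uses $K\le N$. You instead bound both spectral norms by their common trace $\sum_{\bm u}\norm{\bm N_{\bm u}}_F^2$. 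This is shorter and needs no adjoint computation. Through your Hilbert--Schmidt remark (the rank of $\mathscr J$ is at most $N-s$), it even gives the slightly sharper $\nu^2\le(N-s)/\beta_{\Tsp}^2$.

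The paper's side-by-side testing, in turn, gives the dimension-dependent bound $K/\beta_{\Tsp}^2$ for the left variance sum, which the trace bound does not see. For the lemma as stated, only the maximum $N/\beta_{\Tsp}^2$ matters, so either argument suffices.
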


\begin{proof}
By definition, $\beta_{\Tsp}$ is the smallest singular value of
$\KT^*:\V^\perp\to\Tsp^\perp$. Since $\beta_{\Tsp}>0$, the operator
$\KT^*$ is injective on $\V^\perp$. In finite dimensions,
$\range(\KT)=\Ker(\KT^*)^\perp=\V^\perp$, and therefore
\[
\norm{\KT^\dagger}_{F\to F}
=
\frac1{\sigma_{\min}(\KT^*)}
\le\frac1{\beta_{\Tsp}}.
\]
This proves $\nu_{\max}\le1/\beta_{\Tsp}$.
Let
$\mathscr J=\KT^\dagger \mathscr P_{\V^\perp}$.
For $\bm{x}\in\RR^N$,
\[
\bm{x}^\trans\left(\sum_{\bm{u}}\bm{N}_{\bm{u}}\bm{N}_{\bm{u}}^\trans\right)\bm{x}
=
\sum_{j=1}^K
\norm{\mathscr J^*(\bm{x}\bm{e}_j^\trans)}_F^2
\le
\frac K{\beta_{\Tsp}^2}\norm{\bm{x}}_2^2.
\]
The right variance sum is bounded analogously by
$N/\beta_{\Tsp}^2$.  Since $K\le N$, taking the larger dimension proves
\eqref{eq:pub-normal-atom}.
\end{proof}

\begin{lemma}[Fresh-batch normal leakage]
\label{lem:pub-normal-leak}
Conditional on the preceding batches, for every $t_2>0$,
\begin{align}
\norm{\KT^\dagger(\mathscr P_{\V^\perp}(\mathcal Y_j))}_2
\le{}&
\left[
\sqrt{\frac{2\nu^2t_2}{q_0}}
+
\frac{2\nu_{\max}t_2}{3q_0}
\right]
\norm{\mathcal H_{j-1}}_\infty
\label{eq:pub-normal-leak}
\end{align}
with probability at least $1-2(N+K)e^{-t_2}$.
\end{lemma}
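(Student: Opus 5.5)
Conditional on the preceding batches, $\mathcal H_{j-1}$ is fixed and only the fresh batch $\Lset_j$ is random, so I would write the leakage as a sum of independent random matrices and apply the rectangular (Hermitian-dilation) matrix Bernstein inequality. By linearity of $\KT^\dagger\mathscr P_{\V^\perp}$ and the definition of the normal atoms,
\[
\KT^\dagger(\mathscr P_{\V^\perp}(\mathcal Y_j))
=\sum_{\bm u}\frac{\eta_{j,\bm u}}{q_0}(\mathcal H_{j-1})_{\bm u}\bm N_{\bm u}.
\]
The first step is to centre this sum. Since $\mathcal H_{j-1}\in\V$, it satisfies $\mathscr P_{\V^\perp}(\mathcal H_{j-1})=0$. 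Hence
\[
\sum_{\bm u}(\mathcal H_{j-1})_{\bm u}\bm N_{\bm u}=\KT^\dagger(\mathscr P_{\V^\perp}(\mathcal H_{j-1}))=0,
\]
and the expression equals $\sum_{\bm u}\bm Z_{\bm u}$ with
\[
\bm Z_{\bm u}=\Bigl(\frac{\eta_{j,\bm u}}{q_0}-1\Bigr)(\mathcal H_{j-1})_{\bm u}\bm N_{\bm u}.
\]
These $N\times K$ matrices are independent, have mean zero, and are independent of $\mathcal H_{j-1}$ by the sequential generation in Lemma~\ref{lem:pub-sequential}. The induction gives $\mathcal H_{j-1}\in\V$: $\mathcal H_0=\mathscr P_\V(\mathcal B_0)$, and each update subtracts a $\mathscr P_\V$ term.

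Next I would bound the Bernstein parameters.
\begin{itemize}
\item Uniform bound: $|\eta/q_0-1|\le 1/q_0$ (using $q_0\le1$), so $\norm{\bm Z_{\bm u}}_2\le \nu_{\max}\norm{\mathcal H_{j-1}}_\infty/q_0$.
\item Variance: $\E(\eta/q_0-1)^2=(1-q_0)/q_0\le 1/q_0$, so
\[
\norm{\sum_{\bm u}\E\bm Z_{\bm u}\bm Z_{\bm u}^\trans}_2\le \frac{\norm{\mathcal H_{j-1}}_\infty^2}{q_0}\norm{\sum_{\bm u}\bm N_{\bm u}\bm N_{\bm u}^\trans}_2,
\]
where pulling the scalar weights out uses positive semidefiniteness of each $\bm N_{\bm u}\bm N_{\bm u}^\trans$. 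The $\bm Z^\trans\bm Z$ side is identical, so the variance proxy is at most $\nu^2\norm{\mathcal H_{j-1}}_\infty^2/q_0$.
\end{itemize}

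Rectangular matrix Bernstein then gives
\[
\Pp\Bigl(\norm{\sum_{\bm u}\bm Z_{\bm u}}_2\ge x\Bigr)\le (N+K)\exp\!\Bigl(\frac{-x^2/2}{\sigma^2+Bx/3}\Bigr).
\]
I would use it in the Bernstein-inverted form: with probability at least $1-2(N+K)e^{-t_2}$, the norm is at most $\sqrt{2\sigma^2t_2}+\tfrac{2}{3}Bt_2$. This form follows by solving the quadratic in $x$; the factor $2$ in front of $(N+K)$ is harmless slack. Substituting $\sigma^2$ and $B$ yields exactly \eqref{eq:pub-normal-leak}.

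The only delicate point is the centring step, because without it the mean term $\KT^\dagger\mathscr P_{\V^\perp}(\mathcal H_{j-1})$ would not be small. It vanishes precisely because each residual $\mathcal H_{j-1}$ lies in $\V$, so I would state that membership explicitly before applying Bernstein. The bounds on $\nu$ and $\nu_{\max}$ themselves are not needed here; they enter only later through Lemma~\ref{lem:pub-normal-atom}.
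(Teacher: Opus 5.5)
Your proposal is correct and follows the paper's proof essentially step for step: centre the sum using $\mathcal H_{j-1}\in\V$, so that $\sum_{\bm u}(\mathcal H_{j-1})_{\bm u}\bm N_{\bm u}=0$; bound the summands by $\nu_{\max}\norm{\mathcal H_{j-1}}_\infty/q_0$ and the variance by $\nu^2\norm{\mathcal H_{j-1}}_\infty^2/q_0$; then apply rectangular matrix Bernstein. Your added details are all correct and only make explicit what the paper leaves implicit: the bound $|\eta/q_0-1|\le 1/q_0$, the PSD ordering for the variance sum, and the inversion of the Bernstein tail (which actually gives failure probability $(N+K)e^{-t_2}$).
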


\begin{proof}
Conditional on the preceding batches, $\mathcal H_{j-1}\in\V$ is fixed and
independent of the fresh Bernoulli indicators
$\eta_{j,\bm{u}}:=\1_{\{\bm{u}\in\Lset_j\}}\sim\Ber(q_0)$.  Since
\[
\sum_{\bm{u}} \mathcal H_{j-1,\bm{u}}\bm{N}_{\bm{u}}
=
\KT^\dagger(\mathscr P_{\V^\perp}(\mathcal H_{j-1}))=0,
\]
\[
\KT^\dagger(\mathscr P_{\V^\perp}(\mathcal Y_j))
=
\sum_{\bm{u}}
\left(\frac{\eta_{j,\bm{u}}}{q_0}-1\right)
\mathcal H_{j-1,\bm{u}}\bm{N}_{\bm{u}}.
\]
The summands are independent and mean zero.  Their spectral norms are
bounded by
$\nu_{\max}\norm{\mathcal H_{j-1}}_\infty/q_0$, and the two rectangular variance
sums are bounded by
$\nu^2\norm{\mathcal H_{j-1}}_\infty^2/q_0$.  Rectangular matrix Bernstein
 gives \eqref{eq:pub-normal-leak}.
\end{proof}

\begin{lemma}[Bounds for the low-rank certificate]
\label{lem:pub-lowrank}
Assume the infinity contraction factor in
\eqref{eq:pub-inf-contract} is at most $\rho_\infty<1$.  On the golfing
events,
\begin{equation}
a_\infty:=\norm{\mathcal Y_b}_\infty
\le
\frac{\norm{\mathcal H_0}_\infty}{q_0(1-\rho_\infty)},
\label{eq:pub-ainf}
\end{equation}
\begin{equation}
a_2:=\norm{\bm{W}_b}_2
\le
c_{\mathrm{ref}}
+
\frac{\Xi(q_0,t_2)}{1-\rho_\infty}
\norm{\mathcal H_0}_\infty,
\label{eq:pub-a2}
\end{equation}
where
\[
\Xi(q_0,t_2)
=
\sqrt{\frac{2\nu^2t_2}{q_0}}
+
\frac{2\nu_{\max}t_2}{3q_0},
\]
and
\begin{equation}
\rho_J:=\norm{\mathcal R_J}_F
\le
\rho_F^J\norm{\mathcal H_0}_F.
\label{eq:pub-rhoJ}
\end{equation}
\end{lemma}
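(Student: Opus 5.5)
The plan is to prove the three bounds separately, all on the intersection of the golfing events: the Frobenius contractions of Lemma~\ref{lem:pub-golf-contract} for $j=1,\ldots,J$, the infinity contractions \eqref{eq:pub-inf-contract} with factor at most $\rho_\infty$, and the fresh-batch leakage bounds of Lemma~\ref{lem:pub-normal-leak}. The first step is to record the recursion $\mathcal H_j=\mathscr Q_j(\mathcal H_{j-1})$. Since $\mathcal H_{j-1}\in\V$,
\[
\mathcal H_{j-1}-q_0^{-1}\mathscr P_\V\mathscr P_{\Lset_j}(\mathcal H_{j-1})=\mathscr Q_j(\mathcal H_{j-1}),
\]
and each $\mathcal H_j$ stays in $\V$. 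Because $\mathcal H_{j-1}$ depends only on $\Lset_1,\ldots,\Lset_{j-1}$, it is independent of the fresh batch $\Lset_j$. This independence is exactly what the infinity and leakage estimates require.

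\textbf{Residual bound \eqref{eq:pub-rhoJ}.} Iterating $\norm{\mathcal H_j}_F\le\norm{\mathscr Q_j}_{\op}\norm{\mathcal H_{j-1}}_F\le\rho_F\norm{\mathcal H_{j-1}}_F$ gives $\norm{\mathcal R_J}_F=\norm{\mathcal H_J}_F\le\rho_F^J\norm{\mathcal H_0}_F$.

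\textbf{Sup-norm bound \eqref{eq:pub-ainf}.} Iterating the infinity contraction gives $\norm{\mathcal H_{j-1}}_\infty\le\rho_\infty^{j-1}\norm{\mathcal H_0}_\infty$. Since $\mathscr P_{\Lset_j}$ does not increase the sup norm, $\norm{\mathcal Y_j}_\infty\le q_0^{-1}\rho_\infty^{j-1}\norm{\mathcal H_0}_\infty$. Summing this geometric series over $j$ yields the bound $q_0^{-1}(1-\rho_\infty)^{-1}\norm{\mathcal H_0}_\infty$.

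\textbf{Spectral bound \eqref{eq:pub-a2}.} I would use linearity of $\KT^\dagger\mathscr P_{\V^\perp}$ to split
\[
\bm W_b=-\KT^\dagger(\mathscr P_{\V^\perp}(\mathcal B_0))+\sum_{j=1}^J\KT^\dagger(\mathscr P_{\V^\perp}(\mathcal Y_j)).
\]
The first term has spectral norm exactly $c_{\mathrm{ref}}$ by \eqref{eq:core-geometry}. For the second term, Lemma~\ref{lem:pub-normal-leak}, applied conditionally on the preceding batches, bounds the $j$-th summand by $\Xi(q_0,t_2)\norm{\mathcal H_{j-1}}_\infty\le\Xi(q_0,t_2)\rho_\infty^{j-1}\norm{\mathcal H_0}_\infty$. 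The triangle inequality and the same geometric sum then give \eqref{eq:pub-a2}.

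I also need to confirm that $\bm W_b$ is well defined, i.e.\ lies in $\Tsp^\perp$ and satisfies the stated balance. Because $\KT^\dagger$ maps $\V^\perp$ into $\Tsp^\perp$ and $\KT$ is onto $\V^\perp$ when $\beta_{\Tsp}>0$, we have $\A^*(\bm W_b)=\mathscr P_{\V^\perp}(\mathcal Y_b-\mathcal B_0)$. Adding $\A^*(\bm U\bm V^\trans)=\mathcal B_0$ and using $\mathscr P_\V(\mathcal Y_b)=\mathcal H_0-\mathcal H_J$ gives
\[
\A^*(\bm U\bm V^\trans+\bm W_b)=\mathcal Y_b+\mathcal H_J=\mathcal Y_b+\mathcal R_J .
\]

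The only delicate point is the conditioning: each probabilistic bound must be applied to $\mathcal H_{j-1}$ conditionally on the earlier batches. The events are then combined by a union bound over $j\le J$, which costs only a factor $J$ in the failure probability. Apart from that, the proof consists of geometric summation.
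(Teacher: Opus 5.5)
Your proposal is correct and follows the paper's own argument. You sum the infinity contractions geometrically for the $\mathcal Y_j$. You split $\bm W_b$ by the triangle inequality into the deterministic term $-\KT^\dagger(\mathscr P_{\V^\perp}(\mathcal B_0))$, which contributes $c_{\mathrm{ref}}$, plus the geometrically summed fresh-batch leakage bounds. You then use iterated Frobenius contraction for $\mathcal R_J=\mathcal H_J$. Your explicit checks of the recursion $\mathcal H_j=\mathscr Q_j(\mathcal H_{j-1})$ and of the balance identity fill in details that the paper states without proof.
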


\begin{proof}
Sum the geometric infinity contractions for the $\mathcal Y_j$ terms and the
geometric normal-leakage bounds for the corresponding
$\KT^\dagger(\mathscr P_{\V^\perp}(\mathcal Y_j))$ terms.  The deterministic term
$-\KT^\dagger(\mathscr P_{\V^\perp}(\mathcal B_0))$ contributes $c_{\mathrm{ref}}$.  The residual
bound is the Frobenius contraction.
\end{proof}

\subsection{Sparse-Certificate Interpolation and Concentration}
\label{app:pub-sparse}

When $\gamma=0$, take the zero sparse certificate.  Assume
$\gamma>0$ below.  On the restricted-sampling event set
\[
\begin{aligned}
\mathcal Z_s&=(\mathscr M_{\Lset}^{-1}\mathscr P_\V)
(\mathcal S_0^{\mathrm{sgn}}),\\
\mathcal Y_s&=\mathscr P_\Lset(\mathcal Z_s),\\
\bm W_s&=\KT^\dagger(\mathcal S_0^{\mathrm{sgn}}-\mathcal Y_s).
\end{aligned}
\]
Then
\[
\mathscr P_\V(\mathcal Y_s)=\mathscr P_\V(\mathcal S_0^{\mathrm{sgn}}),\qquad
\A^*(\bm{W}_s)=\mathcal S_0^{\mathrm{sgn}}-\mathcal Y_s.
\]

\begin{lemma}[Sparse infinity bound]
\label{lem:pub-sparse-inf}
For every $t_s>0$, conditional on $\Lset$, with probability at least
$1-2Ne^{-t_s}$,
\begin{equation}
b_\infty:=\norm{\mathcal Y_s}_\infty
\le
\frac{\sqrt{2\theta'\ell t_s}}{q(1-\delta)}
+
\frac{2\ell t_s}{3q(1-\delta)}.
\label{eq:pub-binf}
\end{equation}
\end{lemma}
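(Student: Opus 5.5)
Conditional on $\Lset$ (hence on $\Gset^c\supseteq\Lset$ being fixed as a clean set), the only randomness left is which coordinates of $\Lset^c$ are corrupted, with probability $\theta'$ each, together with their independent Rademacher signs. The plan is to write $\mathcal Y_s$ as a sum of independent, mean-zero, bounded scalar terms at each coordinate and apply scalar Bernstein with a union bound over the $N$ coordinates.

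First, for $\bm u\in\Lset^c$, let $\xi_{\bm u}=\1_{\{\bm u\in\Gset\}}\,\epsilon_{\bm u}$, where the $\epsilon_{\bm u}$ are the Rademacher signs. Then $\mathcal S_0^{\mathrm{sgn}}=\sum_{\bm v\in\Lset^c}\xi_{\bm v}\mathcal E_{\bm v}$. Conditional on $\Lset$, the $\xi_{\bm v}$ are independent, symmetric, take values in $\{-1,0,1\}$, and satisfy $\E\xi_{\bm v}^2=\theta'$. Since $\mathscr M_\Lset^{-1}\mathscr P_\V$ is a fixed operator once $\Lset$ is fixed, on the restricted-sampling event of Lemma~\ref{lem:pub-restricted} we have, for each $\bm u$,
\[
[\mathcal Y_s]_{\bm u}=\1_{\{\bm u\in\Lset\}}\sum_{\bm v\in\Lset^c}\xi_{\bm v}\,c_{\bm u\bm v},
\qquad
c_{\bm u\bm v}=\ip{\mathcal E_{\bm u}}{\mathscr M_\Lset^{-1}\mathscr P_\V(\mathcal E_{\bm v})}.
\]
Because $\mathscr M_\Lset^{-1}$ acts on $\V$ and is self-adjoint there, $c_{\bm u\bm v}=\ip{\mathscr P_\V\mathcal E_{\bm u}}{\mathscr M_\Lset^{-1}\mathscr P_\V\mathcal E_{\bm v}}$. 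Using $\mathscr M_\Lset\succeq q(1-\delta)\mathscr I_\V$, this gives the uniform bound $|c_{\bm u\bm v}|\le \ell/(q(1-\delta))$. It also gives the variance bound
\[
\sum_{\bm v}c_{\bm u\bm v}^2\le\norm{\mathscr M_\Lset^{-1}\mathscr P_\V\mathcal E_{\bm u}}_F^2\le \frac{\ell}{q^2(1-\delta)^2},
\]
so the total variance is at most $\theta'\ell/(q(1-\delta))^2$.

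Next, apply two-sided scalar Bernstein to each coordinate. With $\sigma^2=\theta'\ell/(q(1-\delta))^2$ and $M=\ell/(q(1-\delta))$, the deviation at level $\sqrt{2\sigma^2t_s}+\tfrac{2}{3}Mt_s$ fails with probability at most $2e^{-t_s}$. This threshold is exactly the right-hand side of \eqref{eq:pub-binf}, since $\sqrt{2\theta'\ell t_s}/(q(1-\delta))+2\ell t_s/(3q(1-\delta))$. A union bound over the $N$ coordinates then yields the failure probability $2Ne^{-t_s}$.

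The main point needing care is the operator bounds on the coefficients. One must use that $\mathscr M_\Lset^{-1}$ is defined on $\V$, with norm at most $(q(1-\delta))^{-1}$ by Lemma~\ref{lem:pub-restricted}, and combine this with Cauchy--Schwarz and $\norm{\mathscr P_\V\mathcal E_{\bm u}}_F^2\le\ell$. One must also check that conditioning on $\Lset$ leaves the corruption indicators independent with rate $\theta'$ (Lemma~\ref{lem:pub-sequential}), and that the magnitudes of $\mathcal S_0$ play no role because only signs enter. If $q=1$, then $\Gset=\varnothing$ and $\mathcal Y_s=0$, so the bound holds trivially.
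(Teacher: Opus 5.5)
Your proposal is correct and follows essentially the same route as the paper. Both arguments write each coordinate of $\mathcal Y_s$ as a sum over $\bm v\notin\Lset$ of conditionally independent sparse Rademacher variables, weighted by $\ip{\mathscr M_\Lset^{-1}\mathscr P_\V(\mathcal E_{\bm u})}{\mathscr P_\V(\mathcal E_{\bm v})}$. On the restricted-sampling event, both bound the variance by $\theta'\ell/(q(1-\delta))^2$ and each summand by $\ell/(q(1-\delta))$, then finish with scalar Bernstein and a union bound over the $N$ coordinates.
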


\begin{proof}
For $\bm{i}\in\Lset$, set
$\mathcal W_{\bm{i}}=(\mathscr M_{\Lset}^{-1}\mathscr P_\V)(\mathcal E_{\bm{i}})$.  Then
\[
(\mathcal Y_s)_{\bm{i}}
=
\sum_{\bm{u}\notin\Lset}
\zeta_{\bm{u}}\ip{\mathcal W_{\bm{i}}}{\mathscr P_\V (\mathcal E_{\bm{u}})},
\]
where the $\zeta_{\bm{u}}$ are conditionally independent sparse Rademacher
variables with variance $\theta'$.  On the restricted-sampling event,
\[
\norm{\mathcal W_{\bm{i}}}_F\le\frac{\sqrt\ell}{q(1-\delta)}.
\]
Hence the conditional variance is at most
$\theta'\ell/[q^2(1-\delta)^2]$, and each summand is bounded by
$\ell/[q(1-\delta)]$.  Scalar Bernstein and a union bound prove the
claim.
\end{proof}

Because $\Gset\cap\Lset=\varnothing$,
\begin{equation}
\norm{\mathcal S_0^{\mathrm{sgn}}-\mathcal Y_s}_F^2
=
\abs\Gset+\norm{\mathcal Y_s}_F^2,
\qquad
\norm{\mathcal Y_s}_F^2
\le
\frac{\norm{\mathscr P_\V (\mathcal S_0^{\mathrm{sgn}})}_F^2}{q(1-\delta)}.
\label{eq:pub-qs}
\end{equation}

\begin{lemma}[Random-sign projection onto $\V$]
\label{lem:pub-rad-proj}
For every $t_s\ge1$, with probability at least
$1-(2s+3)e^{-t_s}$,
\[
\abs\Gset\le C(\theta N+t_s)
\]
and
\begin{equation}
\norm{\mathscr P_\V (\mathcal S_0^{\mathrm{sgn}})}_F^2
\le
C\left[\theta(s+t_s)+\ell t_s^2\right].
\label{eq:pub-proj}
\end{equation}
\end{lemma}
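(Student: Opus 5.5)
The plan is to condition on the clean set $\Lset$ and use the sequential description of Lemma~\ref{lem:pub-sequential}. Conditionally on $\Lset$, each $\bm u\in\Lset^c$ lies in $\Gset$ independently with probability $\theta'=\theta/(1-q)$. Given $\Gset$, the signs are independent Rademacher variables. So the entries of $\mathcal S_0^{\mathrm{sgn}}$ are i.i.d.\ sparse Rademacher variables $\zeta_{\bm u}$ supported on $\Lset^c$, with mean zero and $\E\zeta_{\bm u}^2=\theta'$. Unconditionally, each entry is nonzero with probability $\theta=p\gamma$.

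For the cardinality bound, $\abs\Gset=\sum_{\bm u}\1_{\{\bm u\in\Gset\}}$ is a sum of $N$ independent $\Ber(\theta)$ indicators. The scalar Bernstein (or Chernoff) inequality gives $\abs\Gset\le\theta N+\sqrt{2\theta N t_s}+\tfrac{2}{3}t_s$ with probability at least $1-e^{-t_s}$. Using $\sqrt{2\theta Nt_s}\le\theta N+t_s$, this becomes $\abs\Gset\le C(\theta N+t_s)$. This step uses one of the exponential terms in the failure probability.

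For \eqref{eq:pub-proj}, write $\bm x=\vecop(\mathcal S_0^{\mathrm{sgn}})$ and let $\bm P$ be the matrix of $\mathscr P_\V$, so that $\norm{\mathscr P_\V(\mathcal S_0^{\mathrm{sgn}})}_F^2=\bm x^\trans\bm P\bm x$. This is a quadratic form in independent, mean-zero, bounded entries. Its mean is $\sum_{\bm u}\E\zeta_{\bm u}^2\,\bm P_{\bm u\bm u}\le\theta\Tr\bm P=\theta s$, using $\bm P_{\bm u\bm u}=\norm{\mathscr P_\V(\mathcal E_{\bm u})}_F^2\le\ell$.

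To control the deviation, I would avoid a generic Hanson--Wright bound, since its subgaussian constant does not capture the small variance $\theta$. Instead I would use a spectral-decomposition argument. Choose an orthonormal basis $\mathcal V^{(1)},\dots,\mathcal V^{(s)}$ of $\V$, so that $\bm x^\trans\bm P\bm x=\sum_{k=1}^s\ip{\mathcal V^{(k)}}{\mathcal S_0^{\mathrm{sgn}}}^2$. This still involves $s$ dependent squares, so a cleaner route is to view $\mathscr P_\V(\mathcal S_0^{\mathrm{sgn}})=\sum_{\bm u}\zeta_{\bm u}\mathscr P_\V(\mathcal E_{\bm u})$ as a sum of independent random vectors in $\V\cong\RR^s$. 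Then apply the vector Bernstein inequality, i.e.\ matrix Bernstein for the $s\times1$ dilation, which produces a dimensional factor $s+1$. The summands have norm at most $\sqrt\ell$, and the total variance is $\sum_{\bm u}\theta\,\norm{\mathscr P_\V(\mathcal E_{\bm u})}_F^2=\theta s$. This gives
\[
\norm{\mathscr P_\V(\mathcal S_0^{\mathrm{sgn}})}_F\le\sqrt{2\theta s\,t'}+\tfrac{2}{3}\sqrt\ell\,t'
\]
with probability at least $1-(s+1)\cdot 2e^{-t'}$. Rectangular matrix Bernstein for the $s\times 1$ case yields the factor $(s+1)$ in front of $e^{-t'}$. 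After adjusting constants, the probability budget becomes $(2s+2)e^{-t_s}$, which together with the $e^{-t_s}$ from the cardinality step accounts for $(2s+3)e^{-t_s}$. Squaring and using $(a+b)^2\le2a^2+2b^2$ gives $C[\theta s\,t_s+\ell t_s^2]$.

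The main obstacle is that the stated bound has $\theta(s+t_s)$ rather than $\theta s t_s$. To get it, I would separate the mean from the fluctuation. The fluctuation $\bm x^\trans\bm P\bm x-\E$ splits into a diagonal part $\sum(\zeta_{\bm u}^2-\theta)\bm P_{\bm u\bm u}$ and an off-diagonal chaos. The diagonal part is handled by scalar Bernstein: its variance is at most $\theta\ell s$ and each term is at most $\ell$, giving $\sqrt{\theta\ell s t_s}+\ell t_s\le\theta s+C\ell t_s$. The off-diagonal chaos I would decouple and bound conditionally by the vector Bernstein estimate above. The factor $\sqrt{\theta s t_s}\cdot\sqrt\ell t_s$ is then absorbed by AM--GM into $\theta s+\ell t_s^2$, up to a $\theta t_s$ term. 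If this refinement proved delicate, I would settle for $C[\theta s t_s+\ell t_s^2]$, which suffices downstream since $t_s\sim\ell_{NK}$.
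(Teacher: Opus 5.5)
Your bound on $\abs\Gset$ is fine, and your vector-Bernstein step is correct. But it only proves $\norm{\mathscr P_\V(\mathcal S_0^{\mathrm{sgn}})}_F^2\le C[\theta s t_s+\ell t_s^2]$, which is weaker than \eqref{eq:pub-proj} whenever $s$ and $t_s$ are both large. The loss is structural. Bernstein for the $s\times1$ dilation measures fluctuations by the total variance $\sum_{\bm u}\theta\norm{\mathscr P_\V(\mathcal E_{\bm u})}_F^2=\theta s$. The target $\theta(s+t_s)$ instead requires fluctuations of $\norm{\mathscr P_\V(\mathcal S_0^{\mathrm{sgn}})}_F$ of order $\sqrt{\theta t_s}$, i.e.\ governed by a weak (operator-norm) variance.

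Your refinement does not supply this. The AM--GM step is false as written: $\sqrt{\theta s t_s}\cdot\sqrt\ell\,t_s=\sqrt{\theta s\ell}\,t_s^{3/2}$ is not bounded by $C[\theta s+\theta t_s+\ell t_s^2]$. Take $\theta s=\ell t_s^2=1$ and $\theta t_s\le1$; the ratio then grows like $\sqrt{t_s}$. The inequality would hold for $\sqrt{\theta s}\cdot\sqrt\ell\,t_s$, which is what weak-variance concentration produces. Decoupling has the same problem. Decouple the off-diagonal chaos with an independent copy $\mathcal S'$; the conditional Bernstein bound then has variance proxy of order $\theta\norm{\mathscr P_\V(\mathcal S')}_F^2$ and a sup term of order $\sqrt\ell\,\norm{\mathscr P_\V(\mathcal S')}_F$. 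Feeding in your crude bound for $\norm{\mathscr P_\V(\mathcal S')}_F^2$ yields terms $\theta t_s\sqrt s$ and $\sqrt{\theta\ell s}\,t_s^{3/2}$, neither dominated by $\theta(s+t_s)+\ell t_s^2$. So one pass of decoupling just reproduces the crude bound, and closing it would need a bootstrap you have not set up. Your fallback would suffice in Lemma~\ref{lem:pub-sparse-budget}, because the sampling condition $qN\ge C_s\mu_\V s\ell_{NK}^2$ absorbs the extra $t_s\sim\ell_{NK}$. It is not, however, the lemma as stated.

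The missing ingredient is concentration at the weak-variance scale, which is how the paper argues. Condition on $\Gset$ and write $\mathcal V_{\bm u}=\mathscr P_\V(\mathcal E_{\bm u})$. The map $\boldsymbol\varepsilon\mapsto\norm{\sum_{\bm u\in\Gset}\varepsilon_{\bm u}\mathcal V_{\bm u}}_F$ is convex on the Rademacher cube, with Lipschitz constant $\sqrt{\rho_\Gset}$ where $\rho_\Gset=\norm{\mathscr P_\V\mathscr P_\Gset\mathscr P_\V}_{\op}$. Its mean is at most $\sqrt{\tau_\Gset}$, where $\tau_\Gset=\sum_{\bm u\in\Gset}\norm{\mathcal V_{\bm u}}_F^2$. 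Talagrand's convex-Lipschitz inequality then gives $\norm{\mathscr P_\V(\mathcal S_0^{\mathrm{sgn}})}_F\le\sqrt{\tau_\Gset}+C\sqrt{\rho_\Gset t_s}$. Scalar Bernstein bounds $\tau_\Gset\le\theta s+\sqrt{2\theta\ell st_s}+\tfrac{2}{3}\ell t_s$; this is your diagonal term. Self-adjoint matrix Bernstein applied to $\sum_{\bm u}(\1_{\{\bm u\in\Gset\}}-\theta)\,\mathcal V_{\bm u}\otimes\mathcal V_{\bm u}$ bounds $\rho_\Gset\le\theta+\sqrt{2\theta\ell t_s}+\tfrac{2}{3}\ell t_s$ with failure probability $se^{-t_s}$. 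Hence $\rho_\Gset t_s\lesssim\theta t_s+\sqrt{\theta\ell}\,t_s^{3/2}+\ell t_s^2\lesssim\theta t_s+\ell t_s^2$, which is exactly where $\theta(s+t_s)$ comes from. An equivalent repair within your unconditional setup is Talagrand's (Bousquet's) inequality for $\sup\{\ip{\mathcal W}{\mathcal S_0^{\mathrm{sgn}}}:\mathcal W\in\V,\ \norm{\mathcal W}_F\le1\}$. Its weak variance is at most $\theta$ and its envelope at most $\sqrt\ell$, so it gives the same bound, comfortably within the stated probability budget.
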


\begin{proof}
Let $\mathcal V_{\bm{u}}=\mathscr P_\V (\mathcal E_{\bm{u}})$.  Conditional on the support $\Gset$,
\[
\mathscr P_\V (\mathcal S_0^{\mathrm{sgn}})=\sum_{\bm{u}\in\Gset}\varepsilon_{\bm{u}}\mathcal V_{\bm{u}},
\]
where the $\varepsilon_{\bm{u}}$ are independent Rademacher signs.  Define
\[
\varphi(\boldsymbol{\varepsilon})
=
\norm{\sum_{\bm{u}\in\Gset}\varepsilon_{\bm{u}}\mathcal V_{\bm{u}}}_F.
\]
The map $\varphi$ is convex.  Moreover, for two sign vectors
$\boldsymbol{\varepsilon},\boldsymbol{\varepsilon}'$,
\[
\abs{\varphi(\boldsymbol{\varepsilon})-\varphi(\boldsymbol{\varepsilon}')}
\le
\norm{\sum_{\bm{u}\in\Gset}(\varepsilon_{\bm{u}}-\varepsilon'_{\bm{u}})\mathcal V_{\bm{u}}}_F
\le
\sqrt{\rho_\Gset}\,
\norm{\boldsymbol{\varepsilon}-\boldsymbol{\varepsilon}'}_2,
\]
where
\[
\rho_\Gset
=
\norm{\sum_{\bm{u}\in\Gset}\mathcal V_{\bm{u}}\otimes \mathcal V_{\bm{u}}}_{\op}
=
\norm{\mathscr P_\V \mathscr P_\Gset \mathscr P_\V}_{\op}.
\]
Also,
\[
\E_{\boldsymbol{\varepsilon}}\varphi(\boldsymbol{\varepsilon})^2
=
\sum_{\bm{u}\in\Gset}\norm{\mathcal V_{\bm{u}}}_F^2
=:\tau_\Gset,
\]
so Jensen's inequality gives
$\E_{\boldsymbol{\varepsilon}}\varphi(\boldsymbol{\varepsilon})\le\sqrt{\tau_\Gset}$.
Convex-Lipschitz concentration on the Rademacher cube
therefore implies, conditionally on $\Gset$, for a universal numerical
constant $C_R>0$,
\[
\norm{\mathscr P_\V (\mathcal S_0^{\mathrm{sgn}})}_F
\le
\sqrt{\tau_\Gset}
+
C_R\sqrt{\rho_\Gset t_s}
\]
except on an event of probability at most $2e^{-t_s}$.

It remains to bound the support-dependent quantities.  Since
\[
\sum_{\bm{u}}\norm{\mathcal V_{\bm{u}}}_F^2=\Tr(\mathscr P_\V)=s,
\qquad
\norm{\mathcal V_{\bm{u}}}_F^2\le\ell,
\]
scalar Bernstein gives
\[
\tau_\Gset
\le
\theta s+\sqrt{2\theta\ell s t_s}
+\frac{2\ell t_s}{3}
\]
with failure probability at most $e^{-t_s}$.  Likewise,
self-adjoint matrix Bernstein applied to
\[
\sum_{\bm{u}}(\1_{\{\bm{u}\in\Gset\}}-\theta)(\mathcal V_{\bm{u}}\otimes \mathcal V_{\bm{u}})
\]
gives
\[
\rho_\Gset
\le
\theta+\sqrt{2\theta\ell t_s}
+\frac{2\ell t_s}{3}
\]
with failure probability at most $s e^{-t_s}$ after enlarging the
universal constants.  Since $t_s\ge1$, squaring the conditional
Rademacher bound and repeatedly using $2\sqrt{ab}\le a+b$ yields
\[
\norm{\mathscr P_\V (\mathcal S_0^{\mathrm{sgn}})}_F^2
\le
C\left[\theta(s+t_s)+\ell t_s^2\right].
\]
Finally, scalar Bernstein for
$\abs\Gset\sim\Bin(N,\theta)$ gives
$\abs\Gset\le C(\theta N+t_s)$ with failure probability at most
$e^{-t_s}$.  A union bound is dominated by the stated
$(2s+3)e^{-t_s}$ failure probability.
\end{proof}

\begin{lemma}[Sparse spectral bound]
\label{lem:pub-sparse-spec}
On the events above,
\begin{equation}
b_2:=\norm{\bm{W}_s}_2
\le
\frac C{\beta_{\Tsp}}
\sqrt{
\theta N+t_s+
\frac{\theta(s+t_s)+\ell t_s^2}{q(1-\delta)}
}.
\label{eq:pub-b2}
\end{equation}
\end{lemma}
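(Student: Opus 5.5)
The bound follows from the operator estimate $\norm{\KT^\dagger}_{F\to F}\le\beta_{\Tsp}^{-1}$ combined with the Frobenius estimates \eqref{eq:pub-qs} and Lemma~\ref{lem:pub-rad-proj}.

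First, a small correction to the definition. As written, $\bm W_s=\KT^\dagger(\mathcal S_0^{\mathrm{sgn}}-\mathcal Y_s)$, but $\KT^\dagger$ acts on $\V^\perp$. The identity $\mathscr P_\V(\mathcal Y_s)=\mathscr P_\V(\mathcal S_0^{\mathrm{sgn}})$ shows that $\mathcal S_0^{\mathrm{sgn}}-\mathcal Y_s\in\V^\perp$, so the argument already lies in the domain. Hence $\bm W_s\in\Tsp^\perp$ and $\A^*(\bm W_s)=\mathcal S_0^{\mathrm{sgn}}-\mathcal Y_s$, as stated.

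The main chain of inequalities is
\[
\norm{\bm W_s}_2\le\norm{\bm W_s}_F\le\norm{\KT^\dagger}_{F\to F}\,\norm{\mathcal S_0^{\mathrm{sgn}}-\mathcal Y_s}_F\le\frac1{\beta_{\Tsp}}\norm{\mathcal S_0^{\mathrm{sgn}}-\mathcal Y_s}_F .
\]
The operator bound is the one established in the proof of Lemma~\ref{lem:pub-normal-atom}, where $\beta_{\Tsp}$ is identified as the smallest singular value of $\KT^*$ on $\V^\perp$.

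Next, expand the remaining norm. The tensor $\mathcal S_0^{\mathrm{sgn}}$ is supported on $\Gset$, takes values $\pm1$ there, and $\mathcal Y_s$ is supported on $\Lset$. Since $\Gset\cap\Lset=\varnothing$, the cross term vanishes, which is the first identity in \eqref{eq:pub-qs}:
\[
\norm{\mathcal S_0^{\mathrm{sgn}}-\mathcal Y_s}_F^2=\abs\Gset+\norm{\mathcal Y_s}_F^2 .
\]

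The second inequality in \eqref{eq:pub-qs} needs checking, because the paper only asserts it. On the restricted-sampling event, write $\mathcal Z_s=\mathscr M_\Lset^{-1}\mathscr P_\V(\mathcal S_0^{\mathrm{sgn}})\in\V$. Then
\[
\norm{\mathscr P_\Lset(\mathcal Z_s)}_F^2=\ip{\mathcal Z_s}{\mathscr M_\Lset\mathcal Z_s}=\ip{\mathscr P_\V(\mathcal S_0^{\mathrm{sgn}})}{\mathscr M_\Lset^{-1}\mathscr P_\V(\mathcal S_0^{\mathrm{sgn}})}\le\frac{\norm{\mathscr P_\V(\mathcal S_0^{\mathrm{sgn}})}_F^2}{q(1-\delta)},
\]
where the last step uses $\mathscr M_\Lset\succeq q(1-\delta)\mathscr I_\V$ from Lemma~\ref{lem:pub-restricted}.

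Now insert the two bounds of Lemma~\ref{lem:pub-rad-proj}: $\abs\Gset\le C(\theta N+t_s)$ and $\norm{\mathscr P_\V(\mathcal S_0^{\mathrm{sgn}})}_F^2\le C[\theta(s+t_s)+\ell t_s^2]$. Pulling the constants inside the square root and taking the square root yields \eqref{eq:pub-b2}. No step here is a real obstacle.

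Two caveats remain. The first is the domain check on $\KT^\dagger$, which requires $\beta_{\Tsp}>0$ so that $\KT$ maps onto $\V^\perp$; this holds under the standing assumptions. The second is that \eqref{eq:pub-b2} is only a Frobenius-level bound, and it does not exploit random signs to obtain a genuinely spectral estimate. That loss is accepted here, since the rank condition in Theorem~\ref{thm:core-main} is designed to absorb it.
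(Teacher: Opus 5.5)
Your proposal is correct and is exactly the paper's proof: the chain $\norm{\bm W_s}_2\le\norm{\bm W_s}_F\le\beta_{\Tsp}^{-1}\norm{\mathcal S_0^{\mathrm{sgn}}-\mathcal Y_s}_F$, followed by substitution of \eqref{eq:pub-qs} and \eqref{eq:pub-proj}. Your derivation of the second inequality in \eqref{eq:pub-qs} from $\mathscr M_\Lset\succeq q(1-\delta)\mathscr I_\V$ correctly supplies a step the paper only asserts, though your closing aside is off: the Frobenius-level loss is absorbed not by the rank condition but by the scaling $\lambda=c_\lambda\beta_{\Tsp}/\sqrt{pN}$ together with $\gamma\le c_\gamma$ and $qN\ge C_m\ell_{NK}^2$, as in Lemma~\ref{lem:pub-sparse-budget}.
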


\begin{proof}
Use
\[
\norm{\bm{W}_s}_2\le\norm{\bm{W}_s}_F
\le\frac{\norm{\mathcal S_0^{\mathrm{sgn}}-\mathcal Y_s}_F}{\beta_{\Tsp}}
\]
and substitute \eqref{eq:pub-qs} and
\eqref{eq:pub-proj}.
\end{proof}

Combine the two certificate components as
\[
\bm{W}=\bm{W}_b+\lambda \bm{W}_s,\qquad
\mathcal F=\lambda^{-1}\mathcal Y_b-\mathcal Y_s.
\]
Then
\[
\A^*(\bm{U}\bm{V}^\trans+\bm{W})
=
\lambda(\mathcal S_0^{\mathrm{sgn}}+\mathcal F)+\mathcal R_J,
\]
and
\begin{align*}
\norm{\mathscr P_\Lset (\mathcal F)}_\infty
&\le
\frac{\norm{\mathcal Y_b}_\infty}{\lambda}+\norm{\mathcal Y_s}_\infty,\\
\norm{\bm{W}}_2
&\le
\norm{\bm{W}_b}_2+\lambda\norm{\bm{W}_s}_2.
\end{align*}

To distinguish the realized random certificate norms from their deterministic high-probability envelopes, let
$\bar a_\infty,\bar a_2,\bar\rho_J,\bar b_\infty,\bar b_2$
denote the right-hand sides of
\eqref{eq:pub-ainf}, \eqref{eq:pub-a2},
\eqref{eq:pub-rhoJ}, \eqref{eq:pub-binf}, and
\eqref{eq:pub-b2}, respectively.  On the simultaneous
concentration event used below,
\[
\norm{\mathcal Y_b}_\infty\le\bar a_\infty,\qquad
\norm{\bm{W}_b}_2\le\bar a_2,\qquad
\norm{\mathcal R_J}_F\le\bar\rho_J,
\]
and
\[
\norm{\mathcal Y_s}_\infty\le\bar b_\infty,\qquad
\norm{\bm{W}_s}_2\le\bar b_2.
\]
On the same restricted-sampling event, Lemma~\ref{lem:pub-restricted}
provides the deterministic lower bound
\begin{equation}
\underline\alpha:=
\sqrt{q(1-\delta)}
\le \alpha.
\label{eq:pub-alpha-envelope}
\end{equation}
We use $\underline\alpha$ in the sufficient conditions below so that
they depend only on $q,\delta$, and the model parameters.  At the endpoint $q=1$,
the clean set is the full coordinate set and in fact $\alpha=1$; the
displayed envelope remains a valid conservative lower bound.

\subsection{Instance-Dependent Recovery Criterion}
\label{app:pub-instance}

\begin{theorem}[Instance-dependent recovery criterion]
\label{thm:pub-instance}
Fix
$\delta,\rho_F,\rho_\infty\in(0,1)$,
$J\in\mathbb N$, $t_\infty,t_2>0$, and $t_s\ge1$.  Suppose the factor in
\eqref{eq:pub-inf-contract} is at most $\rho_\infty$.  If
\begin{equation}
\frac{\bar a_\infty}{\lambda}+\bar b_\infty
+\frac{\bar\rho_J}{\underline\alpha\,\lambda}<1
\label{eq:pub-instance-inf}
\end{equation}
and
\begin{equation}
\bar a_2+\lambda\bar b_2
+\frac{\bar\rho_J}{\underline\alpha\,\beta_{\Tsp}}<1,
\label{eq:pub-instance-spec}
\end{equation}
then, on the simultaneous concentration event, the certificate satisfies
all hypotheses of Proposition~\ref{thm:core-approx}; consequently
$(\mathcal L_0,\mathcal S_0)$ is the unique minimizer.  That event has probability at least
$1-\varepsilon_{\mathrm{tot}}$, where
\begin{align}
\varepsilon_{\mathrm{tot}}\le{}&
2s\Exp\!\left(-\frac{3\delta^2q}{8\ell}\right)\nonumber\\
&+
2Js\Exp\!\left(-\frac{3\rho_F^2q_0}{8\ell}\right)\nonumber\\
&+
2JNe^{-t_\infty}
+
2J(N+K)e^{-t_2}\nonumber\\
&+
(2N+2s+4)e^{-t_s}.
\label{eq:pub-total-failure}
\end{align}
\end{theorem}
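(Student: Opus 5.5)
The plan is to assemble the certificate of Proposition~\ref{thm:core-approx} from the golfing and sparse components, $\bm W=\bm W_b+\lambda\bm W_s$, $\mathcal F=\lambda^{-1}\mathcal Y_b-\mathcal Y_s$ and $\mathcal R=\mathcal R_J$, and to check each hypothesis on the simultaneous concentration event. The event is defined as the intersection of five pieces. The first is the restricted-sampling event of Lemma~\ref{lem:pub-restricted}, with parameter $\delta$. The second is the $J$ Frobenius contraction events of Lemma~\ref{lem:pub-golf-contract} together with the $J$ infinity-contraction events. The third is the $J$ normal-leakage events of Lemma~\ref{lem:pub-normal-leak}. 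The fourth is the sparse infinity event of Lemma~\ref{lem:pub-sparse-inf}, and the fifth is the random-sign projection event of Lemma~\ref{lem:pub-rad-proj}. Each batch statement is conditional on the preceding batches and is applied to the independent fresh batch $\Lset_j$ from Lemma~\ref{lem:pub-sequential}. Integrating out the conditioning therefore leaves the conditional failure bounds intact, and a union bound gives exactly the terms in \eqref{eq:pub-total-failure}. The sparse terms $2Ne^{-t_s}$ and $(2s+3)e^{-t_s}$ combine to at most $(2N+2s+4)e^{-t_s}$.

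Next come the structural hypotheses \eqref{eq:core-approx-support}. $\mathscr P_\Tsp(\bm W)=0$ holds because both $\bm W_b$ and $\bm W_s$ lie in the range of $\KT^\dagger$, which is contained in $\Tsp^\perp$. $\mathcal F$ is supported on $\Lset$ because every $\mathcal Y_j$ and $\mathcal Y_s$ is a $\mathscr P_{\Lset_j}$- or $\mathscr P_\Lset$-projection, and $\Lset_j\subseteq\Lset$. Hence $\mathscr P_\Gset\mathcal F=\mathscr P_{\Oset^c}\mathcal F=0$. By construction of the golfing iterates, $\mathcal R_J=\mathcal H_J\in\V$. For the balance \eqref{eq:core-approx-balance}, I will add $\A^*(\bm U\bm V^\trans+\bm W_b)=\mathcal Y_b+\mathcal R_J$ and $\lambda\A^*(\bm W_s)=\lambda(\mathcal S_0^{\mathrm{sgn}}-\mathcal Y_s)$. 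This step needs $\A^*\KT^\dagger=\mathscr I$ on $\V^\perp$, which holds since $\beta_\Tsp>0$ makes $\KT$ onto $\V^\perp$. The arguments $\mathscr P_{\V^\perp}(\mathcal Y_b-\mathcal B_0)$ and $\mathcal S_0^{\mathrm{sgn}}-\mathcal Y_s$ do lie in $\V^\perp$; the latter because $\mathscr P_\V\mathcal Y_s=\mathscr P_\V\mathcal S_0^{\mathrm{sgn}}$. The balance then holds, since $\A^*(\bm U\bm V^\trans)=\mathcal B_0$ and $\mathscr P_\V(\mathcal Y_b)=\mathscr P_\V(\mathcal B_0)-\mathcal H_J$.

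Finally I verify the margins. The triangle inequalities displayed before the theorem, combined with the envelopes, give $1-\delta_\infty\le\bar a_\infty/\lambda+\bar b_\infty$ and $1-\delta_2\le\bar a_2+\lambda\bar b_2$. Condition \eqref{eq:pub-instance-inf} then gives $\bar\rho_J<\underline\alpha\lambda(1-\bar a_\infty/\lambda-\bar b_\infty)\le\underline\alpha\lambda\delta_\infty$. Likewise, \eqref{eq:pub-instance-spec} gives $\bar\rho_J<\underline\alpha\beta_\Tsp\delta_2$, and both deltas are positive. Using $\norm{\mathcal R_J}_F\le\bar\rho_J$ and $\underline\alpha\le\alpha$ from \eqref{eq:pub-alpha-envelope}, condition \eqref{eq:core-residual} follows. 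The same bound shows $\alpha>0$, so Proposition~\ref{thm:core-approx} applies.

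The main obstacle I expect is the legitimacy of the envelopes $\bar a_\infty$ and $\bar a_2$ from Lemma~\ref{lem:pub-lowrank}. These require the infinity contraction $\norm{\mathcal H_j}_\infty\le\rho_\infty^j\norm{\mathcal H_0}_\infty$ to hold along the whole random path. The inequality \eqref{eq:pub-inf-contract} gives $\norm{\mathscr Q_j\mathcal H_{j-1}}_\infty$, and I must check that $\mathcal H_j=\mathscr Q_j(\mathcal H_{j-1})$ exactly, using $\mathcal H_{j-1}\in\V$. I must also check that each application uses an $\mathcal H_{j-1}$ that is measurable with respect to earlier batches only; that is where the sequential generation is essential. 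The sparse bounds, by contrast, are conditional on $\Lset$ and valid on the restricted-sampling event. The ordering of conditioning therefore runs as follows: batches first, then corruption support and signs given $\Lset$. I will state this ordering explicitly.
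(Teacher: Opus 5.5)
Your proposal is correct and follows the same route as the paper. On the intersection of the restricted-sampling event, the $J$ Frobenius, infinity and normal-leakage golfing events, and the sparse-sign events, the realized certificate norms lie below their envelopes and $\alpha\ge\underline\alpha$, so \eqref{eq:pub-instance-inf} and \eqref{eq:pub-instance-spec} give the margins and residual condition of Proposition~\ref{thm:core-approx}, and conditioning on earlier batches plus a union bound gives \eqref{eq:pub-total-failure}. Your explicit checks of the support, $\Tsp^\perp$-membership and balance identities, and of $\mathcal H_j=\mathscr Q_j(\mathcal H_{j-1})$ with $\mathcal H_{j-1}$ depending only on earlier batches, spell out what the paper establishes in the text before the theorem and in Lemma~\ref{lem:pub-lowrank}.
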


\begin{proof}
On the simultaneous concentration event, the realized certificate norms
are bounded by the deterministic envelopes introduced above and
$\alpha\ge\underline\alpha$ by
\eqref{eq:pub-alpha-envelope}.  Hence the two displayed deterministic
inequalities imply
\[
\frac{\norm{\mathcal Y_b}_\infty}{\lambda}+\norm{\mathcal Y_s}_\infty
+\frac{\norm{\mathcal R_J}_F}{\alpha\lambda}<1
\]
and
\[
\norm{\bm{W}_b}_2+\lambda\norm{\bm{W}_s}_2
+\frac{\norm{\mathcal R_J}_F}{\alpha\beta_{\Tsp}}<1.
\]
These are precisely the effective infinity and spectral margins required by Proposition~\ref{thm:core-approx}.  The probability bound is the
union of the restricted-sampling event, the $J$ Frobenius contractions,
the $J$ infinity contractions, the $J$ rectangular matrix-Bernstein
events, and the sparse-sign events.  The fresh-batch estimates are
conditional, but Lemma~\ref{lem:pub-sequential} places them in one
probability space.  The tower property and a union bound suffice; the
final events need not be independent.
\end{proof}

\subsection{Completion of the High-Probability Recovery Proof}
\label{app:pub-closure}

Let
\[
d_\Sigma=N+K,\qquad \ell_{NK}=\Log(ed_\Sigma),
\]
and choose
\[
\delta=\rho_F=\rho_\infty=\frac12,\qquad
t_\infty=t_2=t_s=c_t\ell_{NK},
\]
\[
J=J_0+\lceil8\ell_{NK}\rceil,\qquad
q_0=1-(1-q)^{1/J}.
\]

\begin{lemma}[Residual bound]
\label{lem:pub-residual-budget}
Assume $\chi_{\Tsp}\ge\chi_0$,
$q N\ge C_m\ell_{NK}^2$, and $\gamma\le1/2$.
After increasing the fixed integer $J_0$ if necessary,
\begin{equation}
\frac{\bar\rho_J}{\sqrt{q/2}\,\lambda}\le\frac1{32},
\qquad
\frac{\bar\rho_J}{\sqrt{q/2}\,\beta_{\Tsp}}\le\frac1{32}.
\label{eq:pub-residual-budget}
\end{equation}
\end{lemma}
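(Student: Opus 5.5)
The plan is to bound $\bar\rho_J=\rho_F^J\norm{\mathcal H_0}_F=2^{-J}\norm{\mathscr P_\V(\mathcal B_0)}_F$ and compare it with the two denominators $\sqrt{q/2}\,\lambda$ and $\sqrt{q/2}\,\beta_{\Tsp}$. Since $\lambda=c_\lambda\beta_{\Tsp}/\sqrt{pN}$ and $pN\ge qN\ge C_m\ell_{NK}^2\ge1$, the first denominator is the smaller one, at least up to the constant $c_\lambda$. It therefore suffices to prove
\[
2^{-J}\norm{\mathscr P_\V(\mathcal B_0)}_F\le\frac{\Min\{c_\lambda,1\}}{32}\,\sqrt{\frac q2}\,\frac{\beta_{\Tsp}}{\sqrt{pN}}.
\]

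The first step is to bound the numerator crudely. We have $\norm{\mathscr P_\V(\mathcal B_0)}_F\le\norm{\mathcal B_0}_F=\norm{\A^*(\bm U\bm V^\trans)}_F$. From the Gram identity, $\norm{\A^*}_{F\to F}=\norm{\A}_{F\to F}\le\sqrt{d_{\max}}$, and $\norm{\bm U\bm V^\trans}_F=\sqrt r\le\sqrt K$. Hence $\norm{\mathcal H_0}_F\le\sqrt{d_{\max}rK}\le\sqrt{d_{\max}}\,K$, using $r<K$ and $d_{\max}\le K$. The bound $d_{\max}\le K$ holds because each $\bm u$ appears at most once per column in a given row-shift pattern; more crudely, $d_{\max}\le NK$ also suffices. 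Alternatively, one can use $\mu_b$ to get $\norm{\mathscr P_\V(\mathcal B_0)}_F^2\le N\norm{\mathscr P_\V(\mathcal B_0)}_\infty^2$, but the lemma does not assume a bound on $\mu_b$, so I would use the deterministic bound.

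The second step handles the denominator. Write $\beta_{\Tsp}=\sqrt{\chi_{\Tsp}d_{\max}}\ge\sqrt{\chi_0 d_{\max}}$. The required inequality then becomes
\[
2^{-J}\sqrt{d_{\max}}\,K\le c\,\sqrt{\chi_0}\,\sqrt{d_{\max}}\,\sqrt{\frac{q}{pN}},
\]
in which $\sqrt{d_{\max}}$ cancels. Using $q=p(1-\gamma)\ge p/2$ (from $\gamma\le1/2$), the factor $q/p\ge1/2$. What remains is
\[
2^{-J}\le c'\sqrt{\chi_0}\,(K\sqrt N)^{-1},
\]
where $c'$ depends only on $c_\lambda$. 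Since $J\ge 8\ell_{NK}=8\log(e(N+K))$, we get $2^{-J}\le(e(N+K))^{-8\log2}\le(e(N+K))^{-5.5}$. This is far smaller than $(N+K)^{-3/2}$, which dominates $(K\sqrt N)^{-1}$. The leftover constant $c'\sqrt{\chi_0}$ is absorbed by the extra factor $2^{-J_0}$, choosing $J_0\ge\log_2(1/(c'\sqrt{\chi_0}))+O(1)$. The second inequality follows in the same way, with $\lambda$ replaced by $\beta_{\Tsp}$, which only makes it easier.

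The main obstacle is the bookkeeping rather than any substantive estimate. One has to make sure that every polynomial factor ($K$, $\sqrt N$, $\sqrt r$, and $\sqrt{d_{\max}}$, which is not cancelled if a cruder multiplicity bound is used) is dominated by the $(N+K)^{-8\log2}$ decay from the $\lceil 8\ell_{NK}\rceil$ term. One also has to ensure that $J_0$ depends only on $c_\lambda$ and $\chi_0$. I would verify the exponent margin explicitly, and if necessary enlarge the coefficient $8$ by absorbing it into $J_0$ and the constants. The hypothesis $qN\ge C_m\ell_{NK}^2$ is used only to guarantee $pN\ge1$, and $\gamma\le1/2$ only to give $q/p\ge1/2$.
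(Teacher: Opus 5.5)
Your proposal is correct and follows essentially the paper's route. You bound $\bar\rho_J\le 2^{-J}\norm{\mathcal B_0}_F\le 2^{-J}\sqrt{d_{\max}r}$, cancel $\sqrt{d_{\max}}$ against $\beta_{\Tsp}\ge\sqrt{\chi_0 d_{\max}}$, use $p/q\le 2$, $r\le K$ and the $2^{-\lceil 8\ell_{NK}\rceil}$ decay, and absorb the constants depending on $\chi_0$ and $c_\lambda$ into $J_0$. Reducing both inequalities to one via $pN\ge1$ is just a repackaging of the paper's use of $qN\ge C_m\ell_{NK}^2$ to control $1/\sqrt q$ in the second bound.

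One aside is false but unused. The bound $d_{\max}\le K$ does not hold in general, because a coordinate can occur up to $2^{d_+}$ times in a single lifted column. In the paper's $m=5$, $k=3$ example, $x_4$ occurs $5>K=3$ times. Your chain ending in $\sqrt{d_{\max}}\,K$ only needs $r\le K$, and $\sqrt{d_{\max}}$ cancels anyway.
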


\begin{proof}
Since $\rho_F=1/2$,
$\bar\rho_J\le2^{-J}\norm{\mathcal H_0}_F$.
Moreover
$\norm{\mathcal H_0}_F\le\sqrt{d_{\max}r}$,
$\beta_{\Tsp}^2\ge\chi_0d_{\max}$,
$r\le K$, and
$p/q=1/(1-\gamma)\le2$.  Consequently,
\[
\frac{\bar\rho_J}{\sqrt{q/2}\,\lambda}
\le C\,2^{-J}N,
\qquad
\frac{\bar\rho_J}{\sqrt{q/2}\,\beta_{\Tsp}}
\le C\,2^{-J}\frac{N}{\ell_{NK}},
\]
where $C$ depends only on the fixed lower bound $\chi_0$, on
$c_\lambda$, and on $C_m$.  Since $N\le d_\Sigma$ and
$J=J_0+\lceil8\ell_{NK}\rceil$, increasing the fixed integer $J_0$ makes both
quantities at most $1/32$.
\end{proof}

\begin{lemma}[Budget bounds for the low-rank certificate]
\label{lem:pub-lowrank-budget}
There are universal constants
$C_\infty,C_{2,1},C_{2,2}>0$ such that
\begin{align}
\frac{\bar a_\infty}{\lambda}
&\le
\frac{C_\infty \ell_{NK}}{c_\lambda}
\sqrt{
\frac{\mu_b r p}
{\chi_{\Tsp}q^2K}},
\label{eq:pub-ainf-budget}\\
\bar a_2
&\le
c_{\mathrm{ref}}
+
C_{2,1}\ell_{NK}
\sqrt{
\frac{\mu_b r}
{\chi_{\Tsp}qK}}\nonumber\\
&\quad+
C_{2,2}\frac{\ell_{NK}^2}{q}
\sqrt{
\frac{\mu_b r}
{\chi_{\Tsp}NK}}.
\label{eq:pub-a2-budget}
\end{align}
Under the rank condition \eqref{eq:core-rank},
the right-hand sides are bounded by
\[
\frac{C_\infty}{c_\lambda}\sqrt{c_r}
\]
and
\[
c_{\mathrm{ref}}
+
C_{2,1}\sqrt{c_r}
+
C_{2,2}\sqrt{c_r/C_m},
\]
respectively.
\end{lemma}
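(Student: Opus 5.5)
The plan is to substitute the concrete parameter choices of \ref{app:pub-closure} into the envelopes \eqref{eq:pub-ainf} and \eqref{eq:pub-a2}. I would then rewrite everything in terms of $\mu_b$, $\chi_{\Tsp}$, $q$, $p$, $K$, $N$ and $\ell_{NK}$. The choices are $\rho_\infty=1/2$, $t_2=c_t\ell_{NK}$, $J=J_0+\lceil 8\ell_{NK}\rceil$ and $\lambda=c_\lambda\beta_{\Tsp}/\sqrt{pN}$.

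The constants produced this way depend on the fixed choices $c_t$ and $J_0$. I would call them universal in the same sense as the rest of the closure argument, since those choices are fixed numerically once and for all. The contraction hypothesis (factor at most $\rho_\infty$) is taken as given here, exactly as in Lemma~\ref{lem:pub-lowrank}.

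\textbf{Three elementary ingredients.}
\begin{enumerate}
\item Since $\mathcal H_0=\mathscr P_\V(\mathcal B_0)$, the definition \eqref{eq:core-mub} of $\mu_b$ gives
\[
\norm{\mathcal H_0}_\infty\le\sqrt{\mu_b r d_{\max}/(NK)}.
\]
\item Lemma~\ref{lem:pub-sequential} gives $q_0\ge q/J$. Because $\ell_{NK}\ge1$, we have $J\le C\ell_{NK}$, so $1/q_0\le C\ell_{NK}/q$.
\item By definition $\beta_{\Tsp}^2=\chi_{\Tsp}d_{\max}$. Lemma~\ref{lem:pub-normal-atom} then gives
\[
\nu^2\le \frac{N}{\chi_{\Tsp}d_{\max}},\qquad \nu_{\max}\le \frac{1}{\sqrt{\chi_{\Tsp}d_{\max}}}.
\]
\end{enumerate}
The point of these substitutions is that every $d_{\max}$ cancels. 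This cancellation is what makes the budgets instance-free apart from $\mu_b$ and $\chi_{\Tsp}$.

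\textbf{Bound on $\bar a_\infty/\lambda$.} Substituting the ingredients into \eqref{eq:pub-ainf} gives
\[
\frac{\bar a_\infty}{\lambda}\le \frac{2C\ell_{NK}}{q}\sqrt{\frac{\mu_b r d_{\max}}{NK}}\cdot\frac{\sqrt{pN}}{c_\lambda\sqrt{\chi_{\Tsp}d_{\max}}}
=\frac{C_\infty\ell_{NK}}{c_\lambda}\sqrt{\frac{\mu_b r p}{\chi_{\Tsp}q^2K}},
\]
which is \eqref{eq:pub-ainf-budget}.

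\textbf{Bound on $\bar a_2$.} I would split $\Xi(q_0,t_2)\cdot 2\norm{\mathcal H_0}_\infty$ into its two terms.
\begin{itemize}
\item The square-root term is at most
\[
C\sqrt{\frac{N\ell_{NK}^2}{q\chi_{\Tsp}d_{\max}}}\sqrt{\frac{\mu_b rd_{\max}}{NK}}=C_{2,1}\ell_{NK}\sqrt{\frac{\mu_b r}{\chi_{\Tsp}qK}}.
\]
\item The linear term is at most
\[
C\frac{\ell_{NK}^2}{q\sqrt{\chi_{\Tsp}d_{\max}}}\sqrt{\frac{\mu_b rd_{\max}}{NK}}=C_{2,2}\frac{\ell_{NK}^2}{q}\sqrt{\frac{\mu_b r}{\chi_{\Tsp}NK}}.
\]
\end{itemize}
Adding $c_{\mathrm{ref}}$ gives \eqref{eq:pub-a2-budget}.

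\textbf{Applying the rank and sampling conditions.} Insert $r\le c_r\chi_{\Tsp}q^2K/(\mu_b p\ell_{NK}^2)$ from \eqref{eq:core-rank} into each term.
\begin{itemize}
\item The $\bar a_\infty/\lambda$ bound becomes exactly $C_\infty\sqrt{c_r}/c_\lambda$.
\item In the second term of $\bar a_2$, use $q\le p$:
\[
\frac{\mu_b r}{\chi_{\Tsp}qK}\le \frac{c_r q}{p\ell_{NK}^2}\le\frac{c_r}{\ell_{NK}^2},
\]
so the term is at most $C_{2,1}\sqrt{c_r}$.
\item In the third term, first use $q\le p$ and then $qN\ge C_m\ell_{NK}^2$:
\[
\frac{\ell_{NK}^4}{q^2}\cdot\frac{\mu_b r}{\chi_{\Tsp}NK}\le\frac{c_r\ell_{NK}^2}{pN}\le\frac{c_r\ell_{NK}^2}{qN}\le\frac{c_r}{C_m}.
\]
So the term is at most $C_{2,2}\sqrt{c_r/C_m}$.
\end{itemize}

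The main obstacle is bookkeeping rather than a real inequality. The places to check are that the factor $J\sim\ell_{NK}$ from $1/q_0$ combines with $t_2\sim\ell_{NK}$ to give exactly the stated powers of $\ell_{NK}$, and that the $p$ versus $q$ factors appear in the stated order. The third term is the delicate one, because it is the only place the sampling condition $qN\ge C_m\ell_{NK}^2$ is needed.
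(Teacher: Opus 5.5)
Your proposal is correct and follows essentially the same route as the paper: you substitute $q_0\ge q/J$ with $J=O(\ell_{NK})$, $t_2=c_t\ell_{NK}$, $\|\mathcal H_0\|_\infty^2\le\mu_b r d_{\max}/(NK)$, $\beta_{\Tsp}^2=\chi_{\Tsp}d_{\max}$ and the normal-atom bounds into the envelopes so that $d_{\max}$ cancels, and then you insert the rank condition, using $q\le p$ together with $pN\ge qN\ge C_m\ell_{NK}^2$ for the linear Bernstein term. Your caveat is accurate: $C_\infty,C_{2,1},C_{2,2}$ actually depend on the fixed choices $J_0$ and $c_t$ rather than being strictly universal, but this is harmless because $c_r$ is chosen after them in the closure argument.
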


\begin{proof}
Use \eqref{eq:pub-q0-lower},
$J=O(\ell_{NK})$,
\[
\norm{\mathcal H_0}_\infty^2
\le
\frac{\mu_b r d_{\max}}{NK},
\qquad
\beta_{\Tsp}^2=\chi_{\Tsp}d_{\max},
\]
together with \eqref{eq:pub-ainf},
\eqref{eq:pub-a2}, and
\eqref{eq:pub-normal-atom}.  The square-root and linear matrix-Bernstein
terms give the two terms in \eqref{eq:pub-a2-budget}.  The rank condition
\[
r\le c_r\frac{\chi_{\Tsp}q^2K}{\mu_b p\ell_{NK}^2}
\]
immediately yields
\[
\frac{\ell_{NK}}{c_\lambda}
\sqrt{\frac{\mu_b r p}{\chi_{\Tsp}q^2K}}
\le
\frac{\sqrt{c_r}}{c_\lambda}
\]
and, since $q\le p$,
\[
\ell_{NK}\sqrt{\frac{\mu_b r}{\chi_{\Tsp}qK}}
\le
\sqrt{c_r}\sqrt{\frac qp}
\le
\sqrt{c_r}.
\]
For the linear Bernstein term,
\[
\frac{\ell_{NK}^2}{q}
\sqrt{\frac{\mu_b r}{\chi_{\Tsp}NK}}
\le
\sqrt{c_r}\frac{\ell_{NK}}{\sqrt{pN}}
\le
\sqrt{\frac{c_r}{C_m}},
\]
because $pN\ge qN\ge C_m\ell_{NK}^2$.
\end{proof}

\begin{lemma}[Bounds for the sparse certificate]
\label{lem:pub-sparse-budget}
Assume $\gamma\le1/2$ and
$qN\ge C_s\mu_\V s\ell_{NK}^2$.  There are universal constants
$C_{s,1},C_{s,2}>0$ such that
\[
\bar b_\infty
\le
C_{s,1}
\left[
\sqrt{\frac{\ell \ell_{NK}}{q}}
+
\frac{\ell \ell_{NK}}{q}
\right],
\]
and
\begin{equation}
\lambda \bar b_2
\le
C_{s,2}c_\lambda
\left[
\sqrt\gamma+
\sqrt{\frac{\ell_{NK}+1}{pN}}
\right].
\label{eq:pub-sparse-budget}
\end{equation}
\end{lemma}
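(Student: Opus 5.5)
The plan is to substitute the choices fixed at the start of this subsection, $\delta=1/2$ and $t_s=c_t\ell_{NK}$, into the envelopes $\bar b_\infty$ from \eqref{eq:pub-binf} and $\bar b_2$ from \eqref{eq:pub-b2}. I would then simplify using only elementary relations among $p,q,\theta,\theta',\gamma,\ell,s$ and the sampling hypothesis. Everything is deterministic once the envelopes are written down, and the constants $C_{s,1},C_{s,2}$ will absorb $c_t$, the factor $(1-\delta)^{-1}=2$, and the universal $C$ of Lemma~\ref{lem:pub-sparse-spec}.

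I would first record four facts. (i) Since $\sum_{\bm u}\norm{\mathscr P_\V(\mathcal E_{\bm u})}_F^2=s$, we have $\mu_\V\ge1$. Hence $qN\ge C_s\mu_\V s\ell_{NK}^2$ gives $s\le qN$ (taking $C_s\ge1$) and $\ell=\mu_\V s/N\le q/(C_s\ell_{NK}^2)$. (ii) Since $\gamma\le1/2$, we have $p=q/(1-\gamma)\le2q$ and $\theta/q=\gamma/(1-\gamma)\le1$. (iii) The conditional corruption rate satisfies $\theta'\le1$, because $\theta+q=p\le1$. (iv) Also $\theta N/(pN)=\gamma$.

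For $\bar b_\infty$, the linear Bernstein term is $\frac{2\ell t_s}{3q(1-\delta)}=\frac{4c_t\ell\ell_{NK}}{3q}$, which already has the required form. The square-root term $\sqrt{2\theta'\ell t_s}/(q/2)$ is the delicate point, because the claim needs $\sqrt{\ell\ell_{NK}/q}$ rather than $\sqrt{\ell\ell_{NK}}/q$. I would therefore show that $\theta'/q^2\le C/q$ by splitting into two cases. If $q\le1/2$, then $\theta'=\theta/(1-q)\le2\theta=2p\gamma\le4q\gamma\le2q$ by (ii). If $q>1/2$, then $\theta'\le1$ by (iii) and $1/q^2\le2/q$. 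In both cases the variance proxy $\theta'\ell/q^2$ is at most $C\ell/q$, and this gives the stated bound for $\bar b_\infty$.

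For $\lambda\bar b_2$, the factor $\beta_{\Tsp}$ cancels: $\lambda\bar b_2=Cc_\lambda\bigl(\theta N+t_s+2[\theta(s+t_s)+\ell t_s^2]/q\bigr)^{1/2}/\sqrt{pN}$. I would bound the five terms of the radicand, after dividing by $pN$, one at a time.
\begin{itemize}
\item $\theta N/(pN)=\gamma$ by (iv).
\item $t_s/(pN)=c_t\ell_{NK}/(pN)$.
\item $\theta s/(qpN)=\gamma s/(qN)\le\gamma$, using (i).
\item $\theta t_s/(qpN)\le t_s/(pN)$, using (ii).
\item $\ell t_s^2/(qpN)\le c_t^2/(C_s\,pN)$, using $\ell\le q/(C_s\ell_{NK}^2)$ from (i).
\end{itemize}
Summing gives a radicand of at most $C[\gamma+(\ell_{NK}+1)/(pN)]$. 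Then $\sqrt{a+b}\le\sqrt a+\sqrt b$ yields \eqref{eq:pub-sparse-budget}.

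The main obstacle is the case analysis on $\theta'$ in the infinity bound. A naive use of $\theta'\le1$ loses a factor $\sqrt q$, so the argument must exploit $\gamma\le1/2$ when $q$ is small. The remaining steps are bookkeeping.
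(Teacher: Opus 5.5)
Your proposal is correct and is essentially the paper's proof. Like the paper, you substitute $\delta=1/2$ and $t_s=c_t\ell_{NK}$ into \eqref{eq:pub-binf} and \eqref{eq:pub-b2}, establish $\theta'\le 2q$ so the variance term scales as $\sqrt{\ell\ell_{NK}/q}$, and bound the radicand term by term using $\theta/p=\gamma$, $\theta/q\le 1$, $s\le qN$, and $\ell\ell_{NK}^2/q\le 1/C_s$. The only difference is that you split into the cases $q\le 1/2$ and $q>1/2$, whereas the paper gets $\theta'/q=\gamma/((1-\gamma)(1-q))\le 1/(1-\gamma)\le 2$ in one line from $1-q=1-p(1-\gamma)\ge\gamma$.
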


\begin{proof}
If $\gamma=0$, both quantities vanish.  Otherwise,
$q=p(1-\gamma)$ and $p\le1$ imply
\[
1-q=1-p(1-\gamma)\ge\gamma.
\]
Hence, for $\gamma\le1/2$,
\[
\frac{\theta'}q
=
\frac{\gamma}{(1-\gamma)(1-q)}
\le
\frac1{1-\gamma}
\le2,
\]
which gives the infinity estimate from
\eqref{eq:pub-binf}.  The sample condition and $\mu_\V\ge1$ imply
\[
s\le\frac{qN}{C_s\ell_{NK}^2},
\qquad
\frac{\ell \ell_{NK}^2}{q}\le\frac1{C_s}.
\]
For the spectral term, multiply \eqref{eq:pub-b2} by
$\lambda=c_\lambda\beta_{\Tsp}/\sqrt{pN}$.  The leading contribution satisfies
\[
\frac{\theta N}{pN}=\gamma,
\]
while
\[
\frac{\theta}{q}
=
\frac{\gamma}{1-\gamma}
\le2\gamma.
\]
Using the two sampling bounds above, the remaining terms inside the square
root are bounded, up to universal constants, by
\[
\frac{\ell_{NK}+1}{pN}
+
\frac{\gamma}{\ell_{NK}}
\le
C\left(\gamma+\frac{\ell_{NK}+1}{pN}\right).
\]
Taking square roots and enlarging the universal constant gives
\eqref{eq:pub-sparse-budget}.
\end{proof}

\begin{proof}[Proof of Theorem~\ref{thm:core-main}]
Choose the constants in the following order.  First choose $c_t$ large enough
to make all terms with factors $e^{-t_\infty}$,
$e^{-t_2}$, and $e^{-t_s}$ polynomially small.  Next choose $C_m$
large enough for the finite-dimensional term in
\eqref{eq:pub-sparse-budget}, and choose $J_0$ large enough for
Lemma~\ref{lem:pub-residual-budget}.  Then choose $C_s$ large enough
for restricted sampling, golfing, the infinity contraction, and the sparse
infinity budget.  Choose $c_\gamma$ small enough for the
$\sqrt\gamma$ term, and finally choose $c_r$ small enough for the
low-rank budgets.

With these choices,
\[
\frac{\bar a_\infty}{\lambda}\le\frac18,\qquad
\bar b_\infty\le\frac18,
\]
\[
\bar a_2\le c_{\mathrm{ref}}+\frac18,\qquad
\lambda \bar b_2\le\frac18,
\]
and Lemma~\ref{lem:pub-residual-budget}, together with
$\underline\alpha=\sqrt{q/2}\le\alpha$, gives
\[
\frac{\bar\rho_J}{\underline\alpha\,\lambda}\le\frac1{32},
\qquad
\frac{\bar\rho_J}{\underline\alpha\,\beta_{\Tsp}}\le\frac1{32}.
\]
Hence
\[
\frac{\bar a_\infty}{\lambda}+\bar b_\infty
+\frac{\bar\rho_J}{\underline\alpha\,\lambda}
\le\frac9{32}<1,
\]
while
\[
\bar a_2+\lambda \bar b_2+
\frac{\bar\rho_J}{\underline\alpha\,\beta_{\Tsp}}
\le
c_0+\frac9{32}
<
\frac{25}{32}
<1,
\]
because $c_0<1/2$.  Thus
Theorem~\ref{thm:pub-instance} applies.

For the probability, \eqref{eq:pub-q0-lower} and
$qN\ge C_s\mu_\V s\ell_{NK}^2$ imply
\[
\frac{q_0}{\ell}
\ge
\frac{qN}{J\mu_\V s}
\gtrsim C_s\ell_{NK}.
\]
Since $c_t$ is fixed before $C_s$, increasing $C_s$ makes the bracket in
\eqref{eq:pub-inf-contract} at most $1/2$ and makes the golfing exponent
dominate the polynomial prefactor.  Substituting
$t_\infty=t_2=t_s=c_t\ell_{NK}$ and $J=O(\ell_{NK})$ into
\eqref{eq:pub-total-failure} yields
\[
\varepsilon_{\mathrm{tot}}
\le
C_1d_\Sigma^{-C_2}.
\]
This proves \eqref{eq:core-probability}.
\end{proof}

\section{Reflection-Specific Bounds}
\label{app:reflection-bounds}

\subsection{Reflection Multiplicity and Collision Geometry}
\label{app:pub-structural}

Let
\[
\B=\A \Dop^{-1/2}.
\]
Then $\B^*\B=\Iop$, so $\B$ is an isometry from the original tensor space
onto the lifting range.  Let
\[
\kappa_D=\frac{d_{\max}}{d_{\min}},
\qquad
\boldsymbol{\Psi}_{\bm{u}}=\B (\mathcal E_{\bm{u}})=\frac{\A(\mathcal E_{\bm{u}})}{\sqrt{d_{\bm{u}}}}.
\]

Under the standing admissible-kernel condition
\[
m_a=1\Rightarrow k_a=1,
\qquad
m_a\ge2\Rightarrow 1\le k_a\le m_a,
\]
one automatically has $k_a\le2m_a-2$ in every non-singleton mode.
Let $d_+=\#\{a:m_a>1\}$.

For a matrix $\bm{Z}$, let $\degR(\bm{Z})$ denote the maximum number of nonzero
entries in a row and let $\degC(\bm{Z})$ denote the maximum number of nonzero
entries in a column.  For $\bm{Z}=\A(\mathcal E_{\bm{u}})$, these quantities measure how many
copies of coordinate $\bm{u}$ can collide within one lifted row or column.

\begin{proposition}[Reflection collision degrees]
\label{prop:pub-collision}
For every reflection atom,
\[
\degR(\A(\mathcal E_{\bm{u}}))\le2^{d_+},\qquad
\degC(\A(\mathcal E_{\bm{u}}))\le2^{d_+}.
\]
\end{proposition}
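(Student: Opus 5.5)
The plan is to reduce everything to a one-dimensional preimage count, using the product structure of the composite reflection map. By \eqref{eq:core-multiplicity-count}, the entry of $\A(\mathcal E_{\bm{u}})$ in row $\bm{i}$ and column $\bm{t}$ equals $1$ exactly when $\phi(\bm{i}+\bm{t})=\bm{u}$, and is $0$ otherwise. Since $\phi$ acts coordinatewise, this condition is equivalent to the $d$ separate conditions $\phi_a(i_a+t_a)=u_a$ for $a=1,\ldots,d$.

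Hence, for a fixed row $\bm{i}$, the set of admissible columns is a Cartesian product:
\[
\{\bm{t}\in\Kset:\phi(\bm{i}+\bm{t})=\bm{u}\}=\prod_{a=1}^d\{t_a\in\{0,\ldots,k_a-1\}:\phi_a(i_a+t_a)=u_a\}.
\]
For a fixed column $\bm{t}$, the admissible rows form the analogous product over $i_a\in\{1,\ldots,m_a\}$. It therefore suffices to bound each one-dimensional factor by $2$ in a non-singleton mode and by $1$ in a singleton mode. The bounds $\degR\le 2^{d_+}$ and $\degC\le 2^{d_+}$ then follow by multiplying the factors.

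The key one-dimensional fact is that $\phi_a$ is at most two-to-one on its domain $\{1,\ldots,2m_a-1\}$. Indeed, $\phi_a(j)=u_a$ forces either $j=u_a$ (from the branch $j\le m_a$) or $j=2m_a-u_a$ (from the branch $j>m_a$). These two candidates coincide when $u_a=m_a$, so there are at most two preimages in all cases.

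In the row count, $t_a\mapsto i_a+t_a$ is injective for fixed $i_a$, so at most two values of $t_a$ qualify. In the column count, $i_a\mapsto i_a+t_a$ is injective for fixed $t_a$, so again at most two values of $i_a$ qualify. In both cases we use $j=i_a+t_a\le m_a+k_a-1\le 2m_a-1$, so every such $j$ lies in the domain of $\phi_a$.

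For a singleton mode, $m_a=1$, and the admissible-kernel condition gives $k_a=1$. Then $i_a=1$ and $t_a=0$ are forced, so that factor contributes exactly $1$.

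I expect no real obstacle here. The only point needing care is that the two-preimage bound must come from injectivity of the translation $j=i_a+t_a$ in the free variable, with the other variable held fixed. This is why the rows and columns are treated separately rather than through the total multiplicity $d_{\bm{u}}$, which can be as large as $2^{d_+}$ times a product of kernel lengths.
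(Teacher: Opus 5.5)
Your proposal is correct and follows essentially the same route as the paper: you bound each non-singleton mode by two using the at-most-two-preimage property of $\phi_a$ on $\{1,\ldots,2m_a-1\}$ together with injectivity of $j=i_a+t_a$ in the free variable, and singleton modes contribute one. Your explicit Cartesian-product decomposition of the admissible rows and columns simply makes precise the paper's remark that the one-dimensional degrees multiply across modes to give $2^{d_+}$.
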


\begin{proof}
Fix a non-singleton mode $a$. On the finite interval
$1\le j\le2m_a-1$ used in the definition of $\phi_a$, each reflected
coordinate has at most two preimages because $\phi_a$ increases to $m_a$
and then decreases. For a fixed row index $i_a$, the values
$j=i_a+t_a$ with $0\le t_a\le k_a-1$ therefore contain a prescribed
reflected coordinate at most twice. Likewise, for a fixed offset $t_a$,
the values $j=i_a+t_a$ with $1\le i_a\le m_a$ contain it at most twice.
Thus each one-dimensional row or column has at most two nonzero entries
belonging to the same reflection atom. Singleton modes contribute a factor
of one, and the row/column degrees multiply across modes, giving the bound
$2^{d_+}$.
\end{proof}

\subsection{Leverage Bounds under Multiplicity Distortion}
\label{app:pub-leverage-sec}

\begin{lemma}[Leverage bound under multiplicity distortion]
\label{lem:pub-leverage}
One has
\[
\ell
\le
\kappa_D
\Max_{\bm{u}}\norm{\mathscr P_{\Tsp}(\boldsymbol{\Psi}_{\bm{u}})}_F^2.
\]
If
\[
\Max_i\norm{\bm{U}^\trans \bm{e}_i}_2^2\le\frac{\mu_U r}{N},
\qquad
\Max_j\norm{\bm{V}^\trans \bm{e}_j}_2^2\le\frac{\mu_R r}{K},
\]
then
\[
\ell
\le
2^{d_+}\kappa_D r
\left(\frac{\mu_U}{N}+\frac{\mu_R}{K}\right).
\]
\end{lemma}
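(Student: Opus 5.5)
The first inequality follows from a duality (variational) description of $\norm{\mathscr P_\V(\mathcal E_{\bm u})}_F$ combined with the weighted Gram identity of Proposition~\ref{thm:core-gram}; the second follows by bounding $\norm{\mathscr P_{\Tsp}(\boldsymbol\Psi_{\bm u})}_F^2$ through the collision degrees of Proposition~\ref{prop:pub-collision}.

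\emph{Step 1 (first inequality).} Use
\[
\norm{\mathscr P_\V(\mathcal E_{\bm u})}_F=\sup_{\mathcal H\in\V,\ \norm{\mathcal H}_F=1}\ip{\mathcal H}{\mathcal E_{\bm u}}.
\]
Fix such an $\mathcal H$. Since $\A^*\A=\Dop$ and $\Dop$ is diagonal, $\Dop^{-1}$ acts on $\mathcal E_{\bm u}$ as multiplication by $d_{\bm u}^{-1}$. Hence
\[
\ip{\mathcal H}{\mathcal E_{\bm u}}=\ip{\Dop\mathcal H}{\Dop^{-1}\mathcal E_{\bm u}}=\frac{1}{d_{\bm u}}\ip{\A(\mathcal H)}{\A(\mathcal E_{\bm u})}.
\]
Because $\mathcal H\in\V$, we have $\A(\mathcal H)\in\Tsp$, so $\A(\mathcal E_{\bm u})$ may be replaced by $\mathscr P_{\Tsp}(\A(\mathcal E_{\bm u}))=\sqrt{d_{\bm u}}\,\mathscr P_{\Tsp}(\boldsymbol\Psi_{\bm u})$. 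Cauchy--Schwarz and the upper bound in \eqref{eq:core-weighted}, $\norm{\A(\mathcal H)}_F\le\sqrt{d_{\max}}$, give
\[
\abs{\ip{\mathcal H}{\mathcal E_{\bm u}}}\le\sqrt{\frac{d_{\max}}{d_{\bm u}}}\,\norm{\mathscr P_{\Tsp}(\boldsymbol\Psi_{\bm u})}_F.
\]
Squaring, using $d_{\bm u}\ge d_{\min}$, and maximizing over $\bm u$ yields $\ell\le\kappa_D\max_{\bm u}\norm{\mathscr P_{\Tsp}(\boldsymbol\Psi_{\bm u})}_F^2$.

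\emph{Step 2 (coherence bound).} From \eqref{eq:prelim-tangent-proj}, write
\[
\mathscr P_{\Tsp}(\bm Z)=\bm P_U\bm Z+(\bm I-\bm P_U)\bm Z\bm P_V.
\]
The two summands are orthogonal, so
\[
\norm{\mathscr P_{\Tsp}(\bm Z)}_F^2\le\norm{\bm U^\trans\bm Z}_F^2+\norm{\bm Z\bm V}_F^2.
\]
Take $\bm Z=\A(\mathcal E_{\bm u})$, a $0/1$ matrix with exactly $d_{\bm u}$ ones. Write $\norm{\bm U^\trans\bm Z}_F^2=\sum_j\norm{\sum_{i\in S_j}\bm U^\trans\bm e_i}_2^2$, where $S_j$ is the set of nonzero rows in column $j$. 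By Proposition~\ref{prop:pub-collision}, $|S_j|\le 2^{d_+}$. Cauchy--Schwarz then bounds each column term by $|S_j|\sum_{i\in S_j}\norm{\bm U^\trans\bm e_i}_2^2\le 2^{d_+}|S_j|\mu_U r/N$. Summing over $j$ and using $\sum_j|S_j|=d_{\bm u}$ gives $2^{d_+}d_{\bm u}\mu_U r/N$. The same row-wise argument gives $\norm{\bm Z\bm V}_F^2\le 2^{d_+}d_{\bm u}\mu_R r/K$. Dividing by $d_{\bm u}$ gives
\[
\norm{\mathscr P_{\Tsp}(\boldsymbol\Psi_{\bm u})}_F^2\le 2^{d_+}r\left(\frac{\mu_U}{N}+\frac{\mu_R}{K}\right),
\]
and combining with Step 1 gives the stated bound on $\ell$.

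\emph{Main obstacle.} The delicate point is Step 1. The naive route compares $\mathscr P_\V$ with $\A^*\mathscr P_{\Tsp}\A$, but this fails because $\A$ is not a scaled isometry. The fix is the trick of inserting $\Dop^{-1}$ on the basis tensor, which is legitimate only because $\Dop$ is diagonal. The collision-degree step is routine once the column and row supports are controlled by Proposition~\ref{prop:pub-collision}.
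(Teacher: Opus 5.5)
Your proof is correct. Step~2 is essentially the paper's argument: the same bound $\norm{\mathscr P_{\Tsp}(\bm Z)}_F^2\le\norm{\bm U^\trans\bm Z}_F^2+\norm{\bm Z\bm V}_F^2$, and the same column-wise and row-wise Cauchy--Schwarz estimates controlled by the collision degrees. Step~1, however, takes a different route from the paper.

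The paper introduces the isometry $\B=\A\Dop^{-1/2}$, sets $\Sset=\range(\B)\cap\Tsp$ and $\mathbb W=\B^*\Sset$, and identifies $\V=\Dop^{-1/2}\mathbb W$. It then writes the orthogonal projector onto $\V$ explicitly as $\bm D^{-1/2}\bm Q(\bm Q^\trans\bm D^{-1}\bm Q)^{-1}\bm Q^\trans\bm D^{-1/2}$. The Loewner bound $(\bm Q^\trans\bm D^{-1}\bm Q)^{-1}\preceq d_{\max}\bm I$ gives $\norm{\mathscr P_\V(\mathcal E_{\bm u})}_F^2\le (d_{\max}/d_{\bm u})\norm{\mathscr P_{\Sset}(\boldsymbol\Psi_{\bm u})}_F^2$, and the inclusion $\Sset\subseteq\Tsp$ finishes the argument.

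You reach the same intermediate inequality with fewer ingredients: the variational formula for $\norm{\mathscr P_\V(\mathcal E_{\bm u})}_F$, the identity $\ip{\A(\mathcal H)}{\A(\mathcal E_{\bm u})}=d_{\bm u}\mathcal H_{\bm u}$ (that is, $\mathcal E_{\bm u}$ is an eigenvector of $\A^*\A$), and the upper frame bound $\norm{\A(\mathcal H)}_F\le\sqrt{d_{\max}}\norm{\mathcal H}_F$. You need no renormalized isometry, orthonormal basis, or matrix inverse, and the separate roles of $d_{\max}$ and $d_{\bm u}$ are more transparent.

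The paper's projector formula does give $\norm{\mathscr P_\V(\mathcal E_{\bm u})}_F^2$ exactly. This makes visible that $d_{\max}/d_{\bm u}$ is the only distortion, and it keeps the sharper projection onto $\range(\B)\cap\Tsp$. Your argument gets that sharpening at no cost, though. For $\mathcal H\in\V$ one has $\A(\mathcal H)\in\range(\A)\cap\Tsp=\Sset$, because $\range(\A)=\range(\B)$. So you may replace $\A(\mathcal E_{\bm u})$ by its projection onto $\Sset$ rather than onto $\Tsp$.
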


\begin{proof}
Let $\Sset=\range(\B)\cap\Tsp$. Because
$\B=\A\Dop^{-1/2}$ and $\B^*\B=\Iop$, a tensor $\mathcal H$ belongs to $\V$ if and only if
$\B\Dop^{1/2}(\mathcal H)\in\Sset$. Let
$\mathbb W=\B^*\Sset\subseteq\Hh$. In vectorized tensor coordinates, choose
$\bm Q\in\RR^{N\times\Dim(\mathbb W)}$ with orthonormal columns spanning
$\vecop(\mathbb W)$, and let $\bm e_{\bm u}\in\RR^N$ denote the coordinate
vector corresponding to $\mathcal E_{\bm u}$. Then
$\vecop(\V)=\bm D^{-1/2}\range(\bm Q)$, where
$\bm D=\diag(\vecop(\mathcal D))$. With
$\bm C=\bm D^{-1/2}\bm Q$, the standard projector formula gives
\[
\bm P_{\vecop(\V)}
=
\bm D^{-1/2}\bm Q
(\bm Q^\trans \bm D^{-1}\bm Q)^{-1}
\bm Q^\trans \bm D^{-1/2}.
\]
Since $\bm D^{-1}\succeq d_{\max}^{-1}\bm I$,
\[
\bm Q^\trans \bm D^{-1}\bm Q\succeq d_{\max}^{-1}\bm I,
\qquad
(\bm Q^\trans \bm D^{-1}\bm Q)^{-1}\preceq d_{\max}\bm I.
\]
It follows that
\[
\norm{\mathscr P_\V(\mathcal E_{\bm u})}_F^2
\le
\frac{d_{\max}}{d_{\bm u}}\norm{\bm Q^\trans\bm e_{\bm u}}_2^2.
\]
Because the isometry $\B$ maps $\mathbb W=\B^*\Sset$ onto $\Sset$,
\begin{align*}
\norm{\bm Q^\trans\bm e_{\bm u}}_2
&=
\norm{\mathscr P_{\mathbb W}(\mathcal E_{\bm u})}_F\\
&=
\norm{\mathscr P_{\Sset}(\B(\mathcal E_{\bm u}))}_F
=
\norm{\mathscr P_{\Sset}(\boldsymbol{\Psi}_{\bm u})}_F\\
&\le
\norm{\mathscr P_{\Tsp}(\boldsymbol{\Psi}_{\bm u})}_F.
\end{align*}
Therefore
\[
\norm{\mathscr P_\V (\mathcal E_{\bm{u}})}_F^2
\le
\frac{d_{\max}}{d_{\bm{u}}}
\norm{\mathscr P_{\Tsp}(\boldsymbol{\Psi}_{\bm{u}})}_F^2
\le
\kappa_D\norm{\mathscr P_{\Tsp}(\boldsymbol{\Psi}_{\bm{u}})}_F^2,
\]
and maximizing over $\bm{u}$ proves the first claim.

For the coherence bound, use
\begin{align*}
\norm{\mathscr P_{\Tsp}(\bm{Z})}_F^2
&=
\norm{\bm{U}^\trans \bm{Z}}_F^2+
\norm{\bm{Z}\bm{V}}_F^2-
\norm{\bm{U}^\trans \bm{Z}\bm{V}}_F^2\\
&\le
\norm{\bm{U}^\trans \bm{Z}}_F^2+\norm{\bm{Z}\bm{V}}_F^2.
\end{align*}
Every nonzero entry of $\boldsymbol{\Psi}_{\bm{u}}$ has magnitude $d_{\bm{u}}^{-1/2}$.
If $c_j$ denotes the number of nonzeros in column $j$, then
Cauchy--Schwarz and the left singular-space coherence give
\[
\norm{\bm{U}^\trans(\boldsymbol{\Psi}_{\bm{u}})_{:,j}}_2^2
\le
\frac{c_j}{d_{\bm{u}}}
\sum_{i\in\supp((\boldsymbol{\Psi}_{\bm{u}})_{:,j})}\norm{\bm{U}^\trans \bm{e}_i}_2^2
\le
\frac{c_j^2}{d_{\bm{u}}}\frac{\mu_U r}{N}.
\]
Proposition~\ref{prop:pub-collision} gives
$c_j\le2^{d_+}$ and $\sum_jc_j=d_{\bm{u}}$, hence
\[
\norm{\bm{U}^\trans\boldsymbol{\Psi}_{\bm{u}}}_F^2
\le
2^{d_+}\frac{\mu_U r}{N}.
\]
Applying the same argument to the row counts yields
\[
\norm{\boldsymbol{\Psi}_{\bm{u}}\bm{V}}_F^2
\le
2^{d_+}\frac{\mu_R r}{K}.
\]
Combining the last three displays with the first claim proves
\[
\ell
\le
2^{d_+}\kappa_D r
\left(\frac{\mu_U}{N}+\frac{\mu_R}{K}\right).
\]
\end{proof}

\subsection{Lifted Normal-Angle Bound}
\label{app:pub-angle-sec}

\begin{proposition}[Lifted normal-angle bound]
\label{prop:pub-angle}
Let
\[
c_{\A,\Tsp}
=
\Inf_{\substack{\mathcal H\in\V^\perp\\\mathcal H\ne0}}
\frac{\norm{\mathscr P_{\Tsp^\perp}(\A(\mathcal H))}_F}{\norm{\A(\mathcal H)}_F}.
\]
Then
\[
\beta_{\Tsp}\ge\sqrt{d_{\min}}\,c_{\A,\Tsp},
\qquad
\chi_{\Tsp}\ge\frac{c_{\A,\Tsp}^2}{\kappa_D}.
\]
\end{proposition}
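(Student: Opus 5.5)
The plan is to compare the two infima directly, using the lower bound of the weighted Gram identity in Proposition~\ref{thm:core-gram} and then the definition $\chi_{\Tsp}=\beta_{\Tsp}^2/d_{\max}$.

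Fix any nonzero $\mathcal H\in\V^\perp$. Then $\A(\mathcal H)\neq 0$, because $\A$ is injective by Proposition~\ref{thm:core-gram}, so the ratio defining $c_{\A,\Tsp}$ is well defined. By the definition of $c_{\A,\Tsp}$ as an infimum,
\[
\norm{\mathscr P_{\Tsp^\perp}(\A(\mathcal H))}_F\ge c_{\A,\Tsp}\norm{\A(\mathcal H)}_F .
\]
Inequality \eqref{eq:core-weighted} gives $\norm{\A(\mathcal H)}_F\ge\sqrt{d_{\min}}\norm{\mathcal H}_F$. Since $c_{\A,\Tsp}\ge0$, the two bounds can be chained:
\[
\norm{\mathscr P_{\Tsp^\perp}(\A(\mathcal H))}_F\ge\sqrt{d_{\min}}\,c_{\A,\Tsp}\norm{\mathcal H}_F .
\]
Now restrict to $\norm{\mathcal H}_F=1$ and take the infimum over such $\mathcal H\in\V^\perp$. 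By \eqref{eq:core-beta} this gives $\beta_{\Tsp}\ge\sqrt{d_{\min}}\,c_{\A,\Tsp}$.

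For the second claim, both sides of the first inequality are nonnegative, so we may square it: $\beta_{\Tsp}^2\ge d_{\min}c_{\A,\Tsp}^2$. Dividing by $d_{\max}$ and using $\chi_{\Tsp}=\beta_{\Tsp}^2/d_{\max}$ and $\kappa_D=d_{\max}/d_{\min}$ gives $\chi_{\Tsp}\ge c_{\A,\Tsp}^2/\kappa_D$.

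The only degenerate case is $\V^\perp=\{0\}$. There both infima are taken over empty sets and equal $+\infty$ by convention, so the inequalities hold formally; I would add a one-line remark saying so.

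There is no real obstacle here. The only points needing care are that $\A(\mathcal H)\ne0$, which makes the ratio well defined, and the sign condition $c_{\A,\Tsp}\ge 0$, which is used when multiplying inequalities.
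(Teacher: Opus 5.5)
Your proposal is correct and follows the paper's proof essentially verbatim: chain the definition of $c_{\A,\Tsp}$ with the lower bound $\norm{\A(\mathcal H)}_F\ge\sqrt{d_{\min}}\norm{\mathcal H}_F$ from \eqref{eq:core-weighted}, take the infimum over unit-norm $\mathcal H\in\V^\perp$, then square and divide by $d_{\max}$. Your remarks on injectivity of $\A$ (so the ratio is well defined) and on the degenerate case $\V^\perp=\{0\}$ are harmless additions that the paper leaves implicit.
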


\begin{proof}
For $\mathcal H\in\V^\perp$ with $\norm{\mathcal H}_F=1$,
\[
\norm{\mathscr P_{\Tsp^\perp}(\A(\mathcal H))}_F
\ge
c_{\A,\Tsp}\norm{\A(\mathcal H)}_F
\ge
c_{\A,\Tsp}\sqrt{d_{\min}}.
\]
Taking the infimum proves the first inequality.  Squaring it and dividing
by $d_{\max}$ gives
\[
\chi_{\Tsp}=\frac{\beta_{\Tsp}^2}{d_{\max}}
\ge
\frac{d_{\min}}{d_{\max}}c_{\A,\Tsp}^2
=
\frac{c_{\A,\Tsp}^2}{\kappa_D},
\]
which proves the second.
\end{proof}

The quantity $c_{\A,\Tsp}$ admits a direct subspace interpretation.  It is the sine
of the smallest angle between the lifted subspace $\A(\V^\perp)$ and the
linear space $\Tsp$: values near one correspond to strongly transverse
subspaces, whereas values near zero indicate that a lifted direction from
$\V^\perp$ is nearly aligned with $\Tsp$.  Proposition~\ref{prop:pub-angle} shows that $\chi_{\Tsp}$
combines this angular separation with the multiplicity distortion
$\kappa_D=d_{\max}/d_{\min}$ introduced by reflection.

\subsection{Reference-Lift Compatibility}
\label{app:pub-reference}

\begin{proposition}[Reference-lift compatibility]
\label{prop:pub-cref}
Let
\[
\zeta_{\mathrm{ref}}
=
\norm{\mathscr P_{\Tsp^\perp}(\A(\A^*(\bm{U}\bm{V}^\trans)))}_F.
\]
Then
\[
c_{\mathrm{ref}}
\le
\frac{\zeta_{\mathrm{ref}}}{\beta_{\Tsp}^2}.
\]
\end{proposition}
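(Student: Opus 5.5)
The plan is to reduce the spectral norm to a Frobenius norm, then apply the definition of $\beta_{\Tsp}$ twice: once through the pseudoinverse bound for $\KT^\dagger$, and once to control $\mathscr P_{\V^\perp}(\mathcal B_0)$ by $\zeta_{\mathrm{ref}}$. Throughout, assume $\beta_{\Tsp}>0$; otherwise the right-hand side is $+\infty$ under the convention $\zeta_{\mathrm{ref}}/0=\infty$, and there is nothing to prove. Set $\mathcal G=\mathscr P_{\V^\perp}(\mathcal B_0)$ with $\mathcal B_0=\A^*(\bm U\bm V^\trans)$, and $\bm W=\KT^\dagger(\mathcal G)\in\Tsp^\perp$.

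\emph{Step 1 (spectral to Frobenius).} Since $\norm{\bm W}_2\le\norm{\bm W}_F$, and the proof of Lemma~\ref{lem:pub-normal-atom} shows $\norm{\KT^\dagger}_{F\to F}\le\beta_{\Tsp}^{-1}$, we get
\[
c_{\mathrm{ref}}=\norm{\bm W}_2\le\norm{\bm W}_F\le\frac{\norm{\mathcal G}_F}{\beta_{\Tsp}}.
\]
It therefore suffices to show $\norm{\mathcal G}_F\le\zeta_{\mathrm{ref}}/\beta_{\Tsp}$.

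\emph{Step 2 (identify $\zeta_{\mathrm{ref}}$ with the $\V^\perp$ part).} Decompose $\mathcal B_0=\mathscr P_\V(\mathcal B_0)+\mathcal G$. By the definition \eqref{eq:core-V} of $\V$, we have $\A(\mathscr P_\V(\mathcal B_0))\in\Tsp$, so $\mathscr P_{\Tsp^\perp}$ annihilates its lift. Linearity then gives
\[
\mathscr P_{\Tsp^\perp}(\A(\mathcal B_0))=\mathscr P_{\Tsp^\perp}(\A(\mathcal G)),
\qquad\text{hence}\qquad
\zeta_{\mathrm{ref}}=\norm{\mathscr P_{\Tsp^\perp}(\A(\mathcal G))}_F.
\]

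\emph{Step 3 (apply $\beta_{\Tsp}$).} Since $\mathcal G\in\V^\perp$, the definition \eqref{eq:core-beta}, rescaled by homogeneity, yields $\beta_{\Tsp}\norm{\mathcal G}_F\le\norm{\mathscr P_{\Tsp^\perp}(\A(\mathcal G))}_F=\zeta_{\mathrm{ref}}$. This holds trivially when $\mathcal G=0$. Combining with Step 1 gives $c_{\mathrm{ref}}\le\zeta_{\mathrm{ref}}/\beta_{\Tsp}^2$.

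The only point needing care is Step 2, namely the observation that the $\V$-component of $\mathcal B_0$ lifts into $\Tsp$ and so contributes nothing to $\zeta_{\mathrm{ref}}$. Everything else follows directly from bounds already established.
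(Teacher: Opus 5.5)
Your proof is correct and follows essentially the same route as the paper. You bound $c_{\mathrm{ref}}\le\norm{\mathscr P_{\V^\perp}(\mathcal B_0)}_F/\beta_{\Tsp}$ using $\norm{\cdot}_2\le\norm{\cdot}_F$ and $\norm{\KT^\dagger}_{F\to F}\le\beta_{\Tsp}^{-1}$, then bound that Frobenius norm by $\zeta_{\mathrm{ref}}/\beta_{\Tsp}$ because $\mathscr P_{\Tsp^\perp}\A$ annihilates $\V$. Your Step 2 simply makes explicit what the paper compresses into the phrase ``$\KT^*$ vanishes on $\V$.''
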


\begin{proof}
Let
$\mathcal Q_{\mathrm{ref}}=\mathscr P_{\V^\perp}(\mathcal B_0)$.  The adjoint restricted to its natural domain is
\[
\KT^*=\left.\mathscr P_{\Tsp^\perp}\A\right|_{\V^\perp},
\]
and its smallest singular value is at least $\beta_{\Tsp}$.  Therefore
\[
\norm{\mathcal Q_{\mathrm{ref}}}_F
\le
\frac{\norm{\KT^*(\mathcal Q_{\mathrm{ref}})}_F}{\beta_{\Tsp}}
=
\frac{\zeta_{\mathrm{ref}}}{\beta_{\Tsp}},
\]
because $\KT^*$ vanishes on $\V$.  Also
\[
c_{\mathrm{ref}}
=
\norm{\KT^\dagger(\mathcal Q_{\mathrm{ref}})}_2
\le
\norm{\KT^\dagger(\mathcal Q_{\mathrm{ref}})}_F
\le
\frac{\norm{\mathcal Q_{\mathrm{ref}}}_F}{\beta_{\Tsp}}.
\]
Combining the two inequalities proves the claim.
\end{proof}

For an interpretation of this bound, let $\mathcal Q_{\mathrm{ref}}=\mathscr P_{\V^\perp}(\mathcal B_0)$.  This is the component of the pulled-back reference subgradient that cannot be represented within the preimage space $\V$.  A dual certificate must
cancel $\mathcal Q_{\mathrm{ref}}$ using a normal matrix in $\Tsp^\perp$.  Hence $c_{\mathrm{ref}}$ is the operator-norm cost of the canonical
minimum-Frobenius correction, whereas $\chi_{\Tsp}$ measures angular separation.
The bound above separates this correction cost into a
reference-lift mismatch, measured by $\zeta_{\mathrm{ref}}$, and the ability of
$\Tsp^\perp$ to act on $\V^\perp$, measured by $\beta_{\Tsp}$.

\subsection{Scope of the Reflection-Specific Assumptions}
\label{app:pub-scope}

Theorem~\ref{thm:core-main} uses two distinct reflection-specific
compatibility requirements in addition to coherence.  A positive lower
bound on $\chi_{\Tsp}$ excludes lifted perturbations from $\V^\perp$ that are
nearly aligned with the linear space $\Tsp$, while an upper bound on
$c_{\mathrm{ref}}$ ensures that correcting the pullback mismatch does not exhaust
the spectral-norm margin of the dual certificate.  Neither condition follows
from low rank alone.  The preceding propositions relate $\chi_{\Tsp}$ to a
subspace angle and multiplicity distortion and bound $c_{\mathrm{ref}}$ through
the reference-lift mismatch.  They are sufficient geometric conditions for
the present proof and are not claimed to be necessary.  The parameter
$\mu_b$ likewise remains instance dependent; the present analysis does not
provide a general low-rank-only bound for it.

The probabilistic recovery theorem also uses independent Rademacher
corruption signs in the sparse-certificate concentration step.  This is a
condition of the present recovery analysis rather than an input required by
the RCNNM optimization model \eqref{eq:core-rcnnm}.

\section{Stability and Optimization Proofs}
\label{app:stability-opt}

\subsection{Descent Cone and Restricted Observation Gain}
\label{app:descent-gain}

Define the feasible descent cone of the RCNNM objective in \eqref{eq:core-rcnnm} at $(\mathcal L_0,\mathcal S_0)$ by
\begin{equation}
\begin{aligned}
\Cset(\mathcal L_0,\mathcal S_0)=\{(\mathcal H,\mathcal J):\;&\mathscr P_{\Oset^c}(\mathcal J)=0,\ \exists\,t>0\ \text{such that}\\
&\norm{\A(\mathcal L_0+t\mathcal H)}_*+\lambda\norm{\mathcal S_0+t\mathcal J}_1\\
&\le\norm{\A(\mathcal L_0)}_*+\lambda\norm{\mathcal S_0}_1\}.
\end{aligned}
\label{eq:pub-descent-cone}
\end{equation}
Its restricted observation gain is
\begin{equation}
\kappa
=
\Inf_{\substack{(\mathcal H,\mathcal J)\in\Cset(\mathcal L_0,\mathcal S_0)\\(\mathcal H,\mathcal J)\ne(0,0)}}
\frac{\norm{\mathscr P_\Oset(\mathcal H+\mathcal J)}_F}
{\bigl(\norm{\mathcal H}_F^2+\norm{\mathcal J}_F^2\bigr)^{1/2}}.
\label{eq:pub-restricted-gain}
\end{equation}
Thus every nonzero $(\mathcal H,\mathcal J)\in\Cset(\mathcal L_0,\mathcal S_0)$ satisfies
\[
\norm{\mathscr P_\Oset(\mathcal H+\mathcal J)}_F
\ge
\kappa\sqrt{\norm{\mathcal H}_F^2+\norm{\mathcal J}_F^2}.
\]
A positive restricted gain prevents an objective-descent direction from becoming nearly invisible on the observed coordinates and converts the observed residual into a reconstruction error bound.

For a certificate
\[
\A^*(\bm{U}\bm{V}^\trans+\bm{W})=\mathcal Y+\mathcal R,\qquad
\mathcal Y=\lambda(\mathcal S_0^{\mathrm{sgn}}+\mathcal F),\qquad \mathcal R\in\V,
\]
set
\[
\rho=\norm{\mathcal R}_F,
\qquad
\bar\delta_2
=
\delta_2-\frac{\rho}{\alpha\beta_{\Tsp}},
\qquad
\bar\delta_\infty
=
\delta_\infty-\frac{\rho}{\alpha\lambda},
\]
and
\[
B_R=\norm{\mathcal Y}_F+\frac{\rho}{\alpha}.
\]

\subsection{Approximate-Certificate Stability}
\label{app:approx-stability}

\begin{theorem}[Approximate-certificate stability]
\label{thm:pub-approx-stability}
Under the hypotheses of Proposition~\ref{thm:core-approx}, the quantities
$\bar\delta_2$ and $\bar\delta_\infty$ defined above are positive and
\[
\kappa\ge C_{\mathrm{app}}^{-1},
\]
where
\begin{align}
C_{\mathrm{app}}={}&
1+\frac2\alpha
+
\frac{2B_R}{\alpha\lambda\bar\delta_\infty}\nonumber\\
&+
2\left(1+\frac1\alpha\right)
\frac{B_R}{\beta_{\Tsp}\bar\delta_2}.
\label{eq:pub-Capp}
\end{align}
Consequently every minimizer of the noise-tolerant RCNNM model \eqref{eq:core-noisy-constraint} satisfies
\[
\sqrt{
\norm{\widehat{\mathcal L}-\mathcal L_0}_F^2+
\norm{\widehat{\mathcal S}-\mathcal S_0}_F^2}
\le
2\eps C_{\mathrm{app}}.
\]
\end{theorem}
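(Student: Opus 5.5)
Positivity of $\bar\delta_2,\bar\delta_\infty$ is a restatement of \eqref{eq:core-residual}: $\rho<\alpha\beta_{\Tsp}\delta_2$ gives $\rho/(\alpha\beta_{\Tsp})<\delta_2$, and likewise $\rho<\alpha\lambda\delta_\infty$ gives $\bar\delta_\infty>0$. The lower bound on $\kappa$ will come from rerunning the perturbation inequality of Proposition~\ref{thm:core-approx}. This time we do not use the feasibility $\mathscr P_\Oset(\mathcal H+\mathcal J)=0$; instead we keep the observed residual $e:=\norm{\mathscr P_\Oset(\mathcal H+\mathcal J)}_F$ as an explicit error term.

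Take any nonzero $(\mathcal H,\mathcal J)\in\Cset(\mathcal L_0,\mathcal S_0)$. Since the cone is scale-invariant, we may take $t=1$. With $x=\norm{\mathscr P_{\Tsp^\perp}(\A(\mathcal H))}_*$ and $y=\norm{\mathscr P_\Lset(\mathcal J)}_1$, Lemma~\ref{lem:pub-subgradient} and the balance equation give
\[
0\ge\Delta f\ge\delta_2x+\lambda\delta_\infty y+\ip{\mathcal Y}{\mathcal H+\mathcal J}+\ip{\mathcal R}{\mathcal H}.
\]
Convexity lets us pass from the $t$-scaled statement to this one. The two inner products are handled separately.

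\emph{Inner product with $\mathcal Y$.} Because $\mathcal Y$ is supported on $\Oset$, $\abs{\ip{\mathcal Y}{\mathcal H+\mathcal J}}\le\norm{\mathcal Y}_F\,e$.

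\emph{Inner product with $\mathcal R$.} We repeat the argument of Proposition~\ref{thm:core-approx}, now keeping the residual. On $\Lset$ we have $\mathscr P_\Lset(\mathcal H_\V)=\mathscr P_\Lset(\mathcal H+\mathcal J)-\mathscr P_\Lset(\mathcal J)-\mathscr P_\Lset(\mathcal H_{\V^\perp})$. This yields
\[
\alpha\norm{\mathcal H_\V}_F\le e+y+\frac{x}{\beta_{\Tsp}},
\]
where we use $\norm{\cdot}_F\le\norm{\cdot}_1$ on $\mathscr P_\Lset(\mathcal J)$ and $\beta_{\Tsp}\norm{\mathcal H_{\V^\perp}}_F\le x$.

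Combining the two bounds,
\[
\bar\delta_2x+\lambda\bar\delta_\infty y\le B_R\,e .
\]
Hence $x\le B_R e/\bar\delta_2$ and $y\le B_R e/(\lambda\bar\delta_\infty)$.

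Next we bound the error norms in terms of $e$.
\begin{itemize}
\item $\norm{\mathcal H_{\V^\perp}}_F\le x/\beta_{\Tsp}$.
\item $\norm{\mathcal H_\V}_F\le(e+y+x/\beta_{\Tsp})/\alpha$.
\item For $\mathcal J$, which vanishes off $\Oset$, split it over $\Lset$ and $\Gset$. The $\Lset$ part is at most $y$. The $\Gset$ part satisfies $\norm{\mathscr P_\Gset(\mathcal J)}_F\le e+\norm{\mathcal H}_F$, using $\mathscr P_\Gset(\mathcal J)=\mathscr P_\Gset(\mathcal H+\mathcal J)-\mathscr P_\Gset(\mathcal H)$.
\end{itemize}
Now bound $\sqrt{\norm{\mathcal H}_F^2+\norm{\mathcal J}_F^2}\le\norm{\mathcal H}_F+\norm{\mathcal J}_F$ and collect terms. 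The coefficient of $e$ becomes $1+2/\alpha$. The $y$ terms contribute $2B_R/(\alpha\lambda\bar\delta_\infty)$, plus a lower-order $y$ term that can be absorbed after mild enlargement. The $x/\beta_{\Tsp}$ terms contribute $2(1+1/\alpha)B_R/(\beta_{\Tsp}\bar\delta_2)$. This should reproduce $C_{\mathrm{app}}$ in \eqref{eq:pub-Capp}.

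\textbf{Main obstacle.} I expect the bookkeeping to be the hard part: making the constants match exactly, in particular accounting for $\norm{\mathcal H}_F$ appearing twice, once directly and once via $\mathscr P_\Gset(\mathcal J)$. My first attempt is to bound $\norm{\mathcal J}_F\le y+e+\norm{\mathcal H}_F$ and $\norm{\mathcal H}_F\le\norm{\mathcal H_\V}_F+\norm{\mathcal H_{\V^\perp}}_F$. If the stated constant is slightly off under this route, I will adjust the $y$ accounting, since its own factor $2B_R/(\alpha\lambda\bar\delta_\infty)$ leaves slack.

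\textbf{Final estimate.} Set $\mathcal H=\widehat{\mathcal L}-\mathcal L_0$ and $\mathcal J=\widehat{\mathcal S}-\mathcal S_0$. The pair $(\mathcal L_0,\mathcal S_0)$ is feasible for \eqref{eq:core-noisy-constraint}, since $\norm{\mathscr P_\Oset(\mathcal N)}_F\le\eps$. Optimality of the minimizer then places $(\mathcal H,\mathcal J)$ in the descent cone, and $\mathscr P_{\Oset^c}(\mathcal J)=0$. The triangle inequality gives $e\le2\eps$, and $\kappa\ge C_{\mathrm{app}}^{-1}$ yields the bound $2\eps C_{\mathrm{app}}$.
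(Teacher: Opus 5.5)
Your overall route is the paper's: rerun the uniqueness argument while keeping $e=\norm{\mathscr P_\Oset(\mathcal H+\mathcal J)}_F$, derive $\bar\delta_2x+\lambda\bar\delta_\infty y\le B_Re$, bound $\norm{\mathcal H}_F$ through the $\V/\V^\perp$ split, and finish with $e\le2\eps$. The following parts are all correct: positivity of $\bar\delta_2,\bar\delta_\infty$, the rescaling to $t=1$, the two inner-product bounds, and feasibility of $(\mathcal L_0,\mathcal S_0)$.

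The step that fails is your bound on $\mathcal J$. You split $\mathcal J$ over $\Lset$ and $\Gset$, which gives $\norm{\mathcal J}_F\le y+e+\norm{\mathcal H}_F$. This leads to $\norm{\mathcal H}_F+\norm{\mathcal J}_F\le\bigl(C_{\mathrm{app}}+B_R/(\lambda\bar\delta_\infty)\bigr)e$, and the extra term cannot be absorbed as you hope. First, it is not lower order: it differs from $B_R/(\alpha\lambda\bar\delta_\infty)$ only by the factor $\alpha\le1$, so it is comparable to the $y$-contribution itself. Second, there is no slack in $2B_R/(\alpha\lambda\bar\delta_\infty)$. That factor $2$ arises exactly from counting the bound
$\norm{\mathcal H}_F\le\bigl[\alpha^{-1}+B_R/(\alpha\lambda\bar\delta_\infty)+(1+\alpha^{-1})B_R/(\beta_{\Tsp}\bar\delta_2)\bigr]e$
twice, once directly and once inside the bound on $\mathcal J$. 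As written, your argument therefore only gives $\kappa\ge\bigl(C_{\mathrm{app}}+B_R/(\lambda\bar\delta_\infty)\bigr)^{-1}$, not the stated constant.

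The fix is simply not to split. Because $\mathscr P_{\Oset^c}(\mathcal J)=0$, apply your difference trick on the whole observed set:
$\mathcal J=\mathscr P_\Oset(\mathcal H+\mathcal J)-\mathscr P_\Oset(\mathcal H)$.
This gives $\norm{\mathcal J}_F\le e+\norm{\mathcal H}_F$ with no $y$ term. Then $\norm{\mathcal H}_F+\norm{\mathcal J}_F\le e+2\norm{\mathcal H}_F\le C_{\mathrm{app}}e$ exactly, which is how the paper closes the argument.
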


\begin{proof}
For a descent direction $(\mathcal H,\mathcal J)$ define
\begin{align*}
e&=\norm{\mathscr P_\Oset(\mathcal H+\mathcal J)}_F,\\
x&=\norm{\mathscr P_{\Tsp^\perp}(\A(\mathcal H))}_*,
\qquad
y=\norm{\mathscr P_\Lset (\mathcal J)}_1.
\end{align*}
The approximate certificate gives
\[
\delta_2x+\lambda\delta_\infty y
\le
\norm{\mathcal Y}_F e+\abs{\ip{\mathcal R}{\mathcal H}}.
\]
As in the uniqueness proof,
\[
\abs{\ip{\mathcal R}{\mathcal H}}
\le
\frac{\rho}{\alpha}
\left(e+y+\frac{x}{\beta_{\Tsp}}\right).
\]
Hence
\[
\bar\delta_2x+\lambda\bar\delta_\infty y
\le B_Re.
\]
Furthermore,
\[
\alpha\norm{\mathcal H_\V}_F
\le
e+y+\frac{x}{\beta_{\Tsp}},
\qquad
\norm{\mathcal H_{\V^\perp}}_F\le\frac{x}{\beta_{\Tsp}}.
\]
Using the preceding bounds on $x$ and $y$ therefore gives
\[
\norm{\mathcal H}_F
\le
\left[
\frac1\alpha
+
\frac{B_R}{\alpha\lambda\bar\delta_\infty}
+
\left(1+\frac1\alpha\right)
\frac{B_R}{\beta_{\Tsp}\bar\delta_2}
\right]e.
\]
Since $\mathscr P_{\Oset^c}(\mathcal J)=0$,
\[
\norm{\mathcal J}_F
\le
\norm{\mathscr P_\Oset(\mathcal H+\mathcal J)}_F+\norm{\mathscr P_\Oset(\mathcal H)}_F
\le
e+\norm{\mathcal H}_F.
\]
Hence
\[
\sqrt{\norm{\mathcal H}_F^2+\norm{\mathcal J}_F^2}
\le
\norm{\mathcal H}_F+\norm{\mathcal J}_F
\le
C_{\mathrm{app}}e,
\]
which proves \eqref{eq:pub-Capp}.  Two feasible noisy points differ on the
observed set by at most $2\eps$, proving the final statement.
\end{proof}

\subsection{Stable Recovery Bound}
\label{app:stable-recovery-bound}

\begin{proof}[Proof of Corollary~\ref{cor:core-stability}]
For the certificate in Theorem~\ref{thm:core-main}, the bounds established in \ref{app:pub-closure} give
\[
\delta_\infty\ge\frac34,\qquad
\delta_2>\frac14,
\]
and
\[
\frac{\rho}{\alpha\lambda}\le\frac1{32},
\qquad
\frac{\rho}{\alpha\beta_{\Tsp}}\le\frac1{32}.
\]
Thus
\[
\bar\delta_\infty\ge\frac{23}{32},
\qquad
\bar\delta_2>\frac7{32}.
\]
Also, $\mathcal S_0^{\mathrm{sgn}}+\mathcal F$ is supported on $\Oset$ and each of its entries has
magnitude at most one.  Therefore
\[
\norm{\mathcal Y}_F
=
\lambda\norm{\mathcal S_0^{\mathrm{sgn}}+\mathcal F}_F
\le
\lambda\sqrt{\abs\Oset}
\le
\lambda\sqrt N,
\]
and hence
\[
B_R\le\lambda\left(\sqrt N+\frac1{32}\right).
\]
Using
$\alpha\ge\sqrt{q/2}$ and
$\lambda/\beta_{\Tsp}=c_\lambda/\sqrt{pN}$
in \eqref{eq:pub-Capp} yields a finite constant
$\widetilde C_{\mathrm{stab}}>0$ depending only on the sampling and recovery
parameters in Theorem~\ref{thm:core-main}, such that
$C_{\mathrm{app}}\le\widetilde C_{\mathrm{stab}}$.
Absorbing the factor $2$ in the preceding stability estimate into
$C_{\mathrm{stab}}:=2\widetilde C_{\mathrm{stab}}$ gives
\eqref{eq:core-stability-bound}.

\end{proof}

\subsection{Gram-Based SVT Identity}
\label{app:gram-svt-proof}

\begin{proof}[Proof of Proposition~\ref{prop:gram-svt}]
Let a compact SVD of $\bm H$ be
\[
\bm H=\bm U\bm\Sigma\bm V^\trans.
\]
Then
\[
\bm H^\trans\bm H=\bm V\bm\Sigma^2\bm V^\trans.
\]
Hence the Gram eigenvectors are the right singular vectors, and the positive singular values are the square roots of the positive Gram eigenvalues. Let
$\Jtau=\{i:\sigma_i>\tau\}$. Since
\[
\bm H\bm V_\tau=\bm U_\tau\bm\Sigma_\tau,
\]
we have
\begin{align*}
\bm H\bm V_\tau\bigl(\bm I-\tau\bm\Sigma_\tau^{-1}\bigr)\bm V_\tau^\trans
&=\bm U_\tau(\bm\Sigma_\tau-\tau\bm I)\bm V_\tau^\trans\\
&=\SVT_\tau(\bm H).
\end{align*}
Terms with $\sigma_i\le\tau$ vanish. The clipping of a numerically negative
$\xi_i$ to zero does not alter the exact-arithmetic identity because
$\bm H^\trans\bm H\succeq0$.
\end{proof}


\section*{Acknowledgments}
This work was supported in part by the National Key Research and Development Program of China under Grant 2023YFA1008502; in part by the Fundamental Research Funds for the Central Universities under Grant SWU-KR25013; and in part by the National Natural Science Foundation of China under Grant 12101512.

\end{document}